\documentclass{article}

 \usepackage[main, final]{neurips_2026}

\usepackage[utf8]{inputenc} 
\usepackage[T1]{fontenc}    
\usepackage{hyperref}       
\usepackage{url}            
\usepackage{booktabs}       
\usepackage{amsfonts}       
\usepackage{nicefrac}       
\usepackage{microtype}      
\usepackage{xcolor}         

\usepackage{amsmath}
\usepackage{amsthm}
\usepackage{wrapfig}
\usepackage{multirow}
\usepackage[table]{xcolor}
\usepackage{algorithm}
\usepackage{algpseudocode}
\usepackage{adjustbox}

\newcommand{\tr}{\operatorname{tr}}
\newcommand{\tc}{\operatorname{TC}}
\newcommand{\cca}{\operatorname{CCA}}
\newcommand{\mcp}{\operatorname{MCP}}
\newcommand{\qap}{\operatorname{QAP}}
\newcommand{\ourmethod}{UCCA}
\newtheorem{definition}{Definition}
\newtheorem{theorem}{Theorem}
\newtheorem{assumption}{Assumption}

\newtheorem{proposition}{Proposition}
\newtheorem{lemma}{Lemma}
\definecolor{lightgreen}{HTML}{E2EFDA}

\usepackage[most]{tcolorbox}

\newtcolorbox{takeaway}[1]{
    enhanced,
    colback=blue!3,                   
    colframe=blue!50!gray,            
    width=\textwidth,
    arc=3mm,
    boxrule=1pt,
    halign=center,                    
    fontupper=\itshape,               
    before skip=1em,
    after skip=1em,
    attach boxed title to top left={xshift=5mm, yshift=-3.5mm},
    boxed title style={
        colback=white,                
        colframe=blue!50!gray,        
        arc=1mm,
        boxrule=0.5pt,                
    },
    title={Takeaway #1},
    fonttitle=\sffamily\small\bfseries\color{blue!50!gray} 
}

\title{Unpaired Canonical Correlation Analysis}

\author{%
    Nir Ben-Ari\\
	Department of Computer Science and AI\\
	Bar-Ilan University\\
	Ramat-Gan, Israel \\
	\texttt{nirnirba@gmail.com} \\
	\AND
    Ronen Talmon \\
    Electrical and Computer Engineering \\
    Technion \\
    Haifa, Israel\\
    \texttt{ronen@ee.technion.ac.il} \\
    \And
	Uri Shaham \\
	Department of Computer Science and AI\\
	Bar-Ilan University\\
	Ramat-Gan, Israel \\
	\texttt{uri.shaham@biu.ac.il} \\
}

\begin{document}

\maketitle

\begin{abstract}
  Canonical Correlation Analysis (CCA) is a fundamental method for multiview shared space learning. However, its strict reliance on paired data poses a significant limitation, as such data is often difficult to obtain or entirely unavailable. In this paper, we present Unpaired CCA (\ourmethod), a novel method that learns linear projections to maximize the correlation of the true underlying pairing without access to any paired samples during training. We first establish theoretical results connecting the Quadratic Assignment Problem (QAP) to CCA. Leveraging these theoretical insights, we derive a practical method to maximize correlation exclusively from unpaired data. To the best of our knowledge, \ourmethod\ is the first approach to learn maximally correlated projections in a strictly unpaired setting. We validate \ourmethod\ on real-world multi-modal datasets, demonstrating that it significantly outperforms recent unpaired alignment baselines in recovering the underlying true correlation. This work fills a critical gap between traditional statistical multiview learning and the growing field of unpaired data learning.\footnote{\textbf{Code:} \url{https://github.com/shaham-lab/UCCA},\ \textbf{Project Page:} \url{https://shaham-lab.github.io/UCCA_page}}
\end{abstract}

\section{Introduction}

Many real-world applications rely on the integration of multiple data modalities \cite{baltruvsaitis2018multimodal}. One of the most foundational methods for multiview shared-space learning is Canonical Correlation Analysis (CCA) \cite{hotelling1935most}. That is, the extraction of maximally correlated components across views. CCA underpins a wide array of modern applications, including multimodal representation learning \cite{zbontar2021barlow, li2024csa}, multi-omics data integration \cite{butler2018integrating, jiang2023canonical, pusa2024stable}, and neuroimaging \cite{ju2024deep}. Although it has inspired various extensions, most notably multi-view \cite{rupnik2010multi, luo2015tensor} and non-linear \cite{andrew2013deep, wong2021deep} variants, CCA remains constrained by a critical bottleneck: a rigid reliance on strictly paired corresponding samples.

In the broader landscape of machine learning, the fundamental difficulty of acquiring paired data has sparked massive interest in unpaired settings \cite{yacobi2025sue, timilsina2024identifiable, gupta2025better, roschmann2026sotalign, zhang2025sail}. While large multimodal models show remarkable success in various applications \cite{radford2021learning, xu2023multimodal}, they rely heavily on massive amounts of paired data (i.e., images and their corresponding captions). In many modalities, such pairs are often prohibitively expensive, if not impossible, to obtain. In contrast, unpaired data is abundantly accessible, making the utilization of independent, unpaired samples a highly attractive alternative.

To address this fundamental limitation, we propose Unpaired CCA (UCCA). To the best of our knowledge, this is the first extension of CCA designed to operate entirely without paired data. Specifically, prior to this work, no method has successfully demonstrated or theoretically established the ability to capture true data correlations when correspondence is entirely absent during training.

\begin{wrapfigure}{r}{0.55\textwidth} \label{fig:open}
    \vspace{-10pt}
    \centering
    \includegraphics[width=0.5\textwidth]{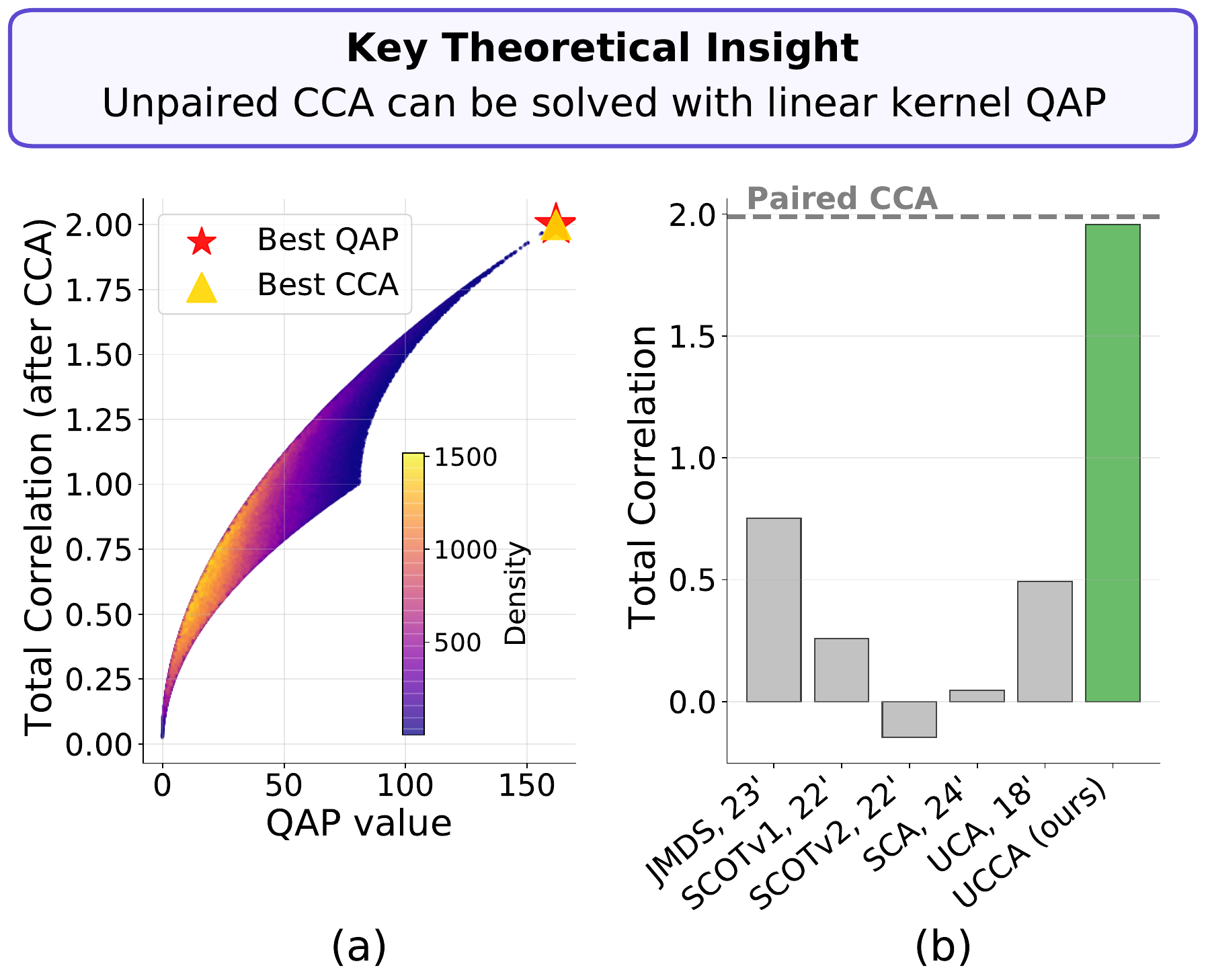}
    \vspace{-5pt}
    \caption{\textbf{The theoretical and empirical alignment of Unpaired CCA and QAP.} (a) QAP vs. CCA: On a 10-sample synthetic dataset, evaluating all 10! possible pairings reveals a direct positive relationship between the QAP value and total correlation (App. \ref{app:qap_cca}). (b) UCCA Performance: On the Handwritten benchmark (Sec. \ref{sec:results}), \ourmethod\ captures high correlation despite training fully unpaired, significantly outperforming unpaired alignment baselines and approaching the paired CCA upper bound.}
    \vspace{-20pt}
\end{wrapfigure}

We approach the Unpaired CCA problem from a theoretical perspective. We begin with a theoretical investigation, defining a proxy pairing for unpaired CCA scenarios. We then prove that this pairing is attainable by formulating the task as a special case of the Quadratic Assignment Problem (QAP) \cite{koopmans1957assignment}, enabling the use of known approximate efficient solvers. This framework yields strong theoretical guarantees, as summarized in Fig. \ref{fig:open}. Furthermore, Fig. \ref{fig:open}a highlights the direct positive relationship between QAP optimization and CCA correlation, substantiating our theoretical claims.

Derived directly from these theoretical insights, we introduce our Unpaired CCA (UCCA) method, a practical and effective tool for recovering the correlation components of ground-truth pairings using strictly unpaired training data. As demonstrated in Fig. \ref{fig:open}b, UCCA is currently the only method capable of learning highly correlated components in this setting, even achieving performance that is comparable to the paired CCA.

\paragraph{Contributions.} Our main contributions are: (1) We establish a theoretical connection between Unpaired CCA and the QAP; (2) We derive a practical method for solving Unpaired CCA; and (3) Our empirical results show that our method significantly reduces the gap to paired CCA, while operating entirely without paired data, achieving superior performance compared to existing approaches.

\section{Related Work}

\paragraph{CCA Extensions.} Canonical Correlation Analysis has been extensively adapted for diverse applications, yielding multi-view \cite{rupnik2010multi, luo2015tensor} and non-linear \cite{andrew2013deep, wong2021deep, nakada2023understanding} extensions. Most relevant to our context are sparse CCA methods, designed for a CCA failure case, where the number of samples is smaller than the number of dimensions. Thereby, they attempt to reduce the sample complexity of CCA by enabling the use of fewer paired samples \cite{witten2009penalized, cai2020sparse, wu2025nonlinear}. However, while these approaches reduce the reliance on large paired datasets, they fundamentally still require exact correspondences. In contrast, \ourmethod\ operates exclusively from unpaired data.

\paragraph{Incorporating Unpaired Data.} Driven by its availability, unpaired data has garnered substantial interest in recent years. However, most existing approaches leverage this data only in conjunction with auxiliary matching signals. For instance, semi-supervised methods utilize unpaired data to enhance models initially trained on paired samples \cite{nakada2023understanding, singh2022flava, devillers2024semi}. Weakly-supervised approaches significantly reduce the number of required pairs, relying predominantly on unpaired data but still necessitating a few exact pairs for guidance \cite{yacobi2025sue, roschmann2026sotalign, zhang2025sail, chen2023weakly, mo2023s, sahbi2018learning}. Other methods substitute explicit sample pairs with shared class labels \cite{timilsina2024identifiable, xi2024propensity}. Crucially, all of these paradigms depend on some form of complementary alignment signal and cannot function in a strictly unpaired regime as \ourmethod.

\paragraph{Fully Unpaired Methods.} A separate line of research tackles entirely unpaired data in specific applied domains, such as translation \cite{zhu2017unpaired, ma2019unpaired, zhang2024towards, chen2024contourdiff, wang2026ot}, segmentation \cite{yang2022toward, valindria2018multi, andrejic2025unpaired, de2024image}, and image restoration \cite{kartheek2025continual, lan2025exploiting}. However, these techniques are largely task-specific and do not explicitly extract a shared multiview representation space. Similarly, broader techniques like domain confusion \cite{ganin2015unsupervised, stark2020scim, zhang2022scdart} and domain adaptation \cite{hoshen2018unsupervised, jin2020minimum, lai2024empowering, choe2025universal} align marginal distributions but are not designed to extract maximally correlated components. In a different context, a related approach to analyze the similarity of Large Language Models embeddings in unpaired settings was recently presented in \cite{dar2025mini}. From a theoretical standpoint, \cite{alvarez2019towards} analyzed optimal transport under global invariances. The most closely related work to ours is  SCA \cite{timilsina2024identifiable}, which attempts to identify shared components across multiple views without pairs. However, their theoretical framework does not account for correlation, and as we demonstrate empirically, \ourmethod\ significantly better captures highly correlated components in practice.

\section{Background} \label{sec:background}
This section formalizes the basic definitions and mathematical notation utilized throughout the paper, primarily concerning CCA and the QAP.

\paragraph{Notations.} Let \(X, Y \in \mathbb{R}^{n \times d}\) represent whitened datasets consisting of \(n\) samples with \(d\) features. We denote \(\mathcal{P}\) as the set of \(n\times n\) permutation matrices, defined as \(\mathcal{P} = \{P\in\{0, 1\}^{n\times n}: P^T\mathbf{1} = P\mathbf{1} = \mathbf{1}\}\), where \(\mathbf{1}\) is the \(n\)-dimensional column vector consisting of all ones. For a matrix \(A \in \mathbb{R}^{n\times d}\), \(\sigma(A) = \left( \sigma_1(A), \sigma_2(A), \dots, \sigma_d(A) \right)\) represents the vector of all singular values, ordered in descending value such that \(\sigma_1(A) \geq \sigma_2(A) \geq \cdots \geq \sigma_d(A) \geq 0\) (assuming \(n > d\)). The Frobenius and nuclear norms are denoted by \(\lVert A \rVert_F = \sqrt{\tr(A^T A)}, \lVert A \rVert_*=\tr(\sqrt{A^T A})\), respectively. Finally, we define the Orthogonal group \(O(d) = \{Q \in \mathbb{R}^{d\times d}: Q^TQ=I\}\), and the Frobenius sphere \(S_F = \{S \in \mathbb{R}^{d\times d}: \lVert S \rVert_F = 1\}\). 

\paragraph{Canonical Correlation Analysis (CCA).} The total correlation (TC) of \(X\) and \(Y\) is defined as
\begin{equation} \label{eq:tc}
    \tc(X, Y) = \tr(X^T Y)
\end{equation}
CCA seeks the maximally correlated linear components of \(X\) and \(Y\). Namely, it returns a pair of projection matrices \(U^*, V^* \in \mathbb{R}^{d\times k}\) that maximize the total correlation (Eq. \eqref{eq:tc}):
\begin{equation}
    (U^*,V^*) = \cca(X,Y) = \arg\max_{U, V}\tc(XU, YV)\quad \text{s.t.}\quad U^T U = V^T V = I
\end{equation}
where \(k \ll d\), the number of components, is a tunable hyper-parameter.

\paragraph{Quadratic Assignment Problem (QAP).} Let \(A, B \in \mathbb{R}^{n\times n}\), and \(\Pi\) be the set of permutation functions (bijection) over \(\{1, 2, \dots, n\}\). The QAP seeks the optimal one-to-one assignment of \(n\) entities to \(n\) slots, maximizing the structural alignment between an interaction matrix \(B\) and a similarity matrix \(A\). Formally, the Koopmans-Beckmann QAP formulation \cite{koopmans1957assignment, loiola2007survey} is defined as:
\begin{equation} \label{eq:qap_kb}
    \pi^* = \qap(A, B) =\arg\max_{\pi\in\Pi} \sum_{i=1}^n \sum_{j=1}^n B_{ij}A_{\pi(i)\pi(j)}
\end{equation}
Using the set of permutation matrices, \(\mathcal{P}\), Eq. \eqref{eq:qap_kb} can be recast in matrix and TC notations:
\begin{equation}
    P^* = \qap(A, B) = \arg\max_{P\in\mathcal{P}} \tc(A^T, PB^TP^T)
\end{equation}

\section{Correlation of Unpaired Data} \label{sec:theory}

This section theoretically investigates the Unpaired CCA problem. For brevity, full derivations and proofs are deferred to App. \ref{app:proofs}. The analysis proceeds as follows: Sec. \ref{sec:mcp} formalizes a proxy pairing for the Unpaired CCA setting; Sec. \ref{sec:mcp_qap} proves an equivalence between a relaxation of this pairing and linear-kernel QAP; and Sec. \ref{sec:mcp_equiv} shows that these pairings coincide under mild assumptions. These results establish the equivalence between linear-kernel QAP and Unpaired CCA.

\subsection{Unpaired CCA Proxy Pairing} \label{sec:mcp}

\paragraph{Unpaired CCA.} In the unpaired scenario, we assume the rows (samples) of \(X\) and \(Y\) are related by an unknown ground-truth permutation \(P^*\). In practice, this assumption is quite mild, as it only requires a small set of representative anchors of each point set to correspond. We aim to find projections \(U'\) and \(V'\) under which the TC of the true pairing \(P^*\) matches that of the optimal paired projections \((U^*, V^*) = CCA(X, P^* Y)\):
\begin{equation} \label{eq:unpaired_cca_objective}
    \text{TC}(X U', P^* Y V') = \text{TC}(X U^*, P^* Y V^*)
\end{equation}

However, directly evaluating or optimizing Eq. \eqref{eq:unpaired_cca_objective} is impossible because it explicitly requires the ground-truth permutation \(P^*\), which is strictly unknown in the unpaired setting. To circumvent this, we assume that \(P^*\) maximizes the CCA objective. We then search for a proxy permutation \(P'\) that maximizes this criterion. Under this assumption, we can compute the corresponding projections via \((U', V') = \text{CCA}(X, P'Y)\), which consequently yields the desired solution from Eq. \eqref{eq:unpaired_cca_objective}. Crucially, this is a significantly weaker and more attainable requirement on the proxy pairing \(P'\) than demanding exact permutation recovery (\(P' \approx P^*\)). Still, the resulting optimization problem remains highly challenging, as the search space spans all possible permutations, together with two orthogonal projection matrices.

\begin{figure}
    \centering
    \includegraphics[width=0.86\linewidth]{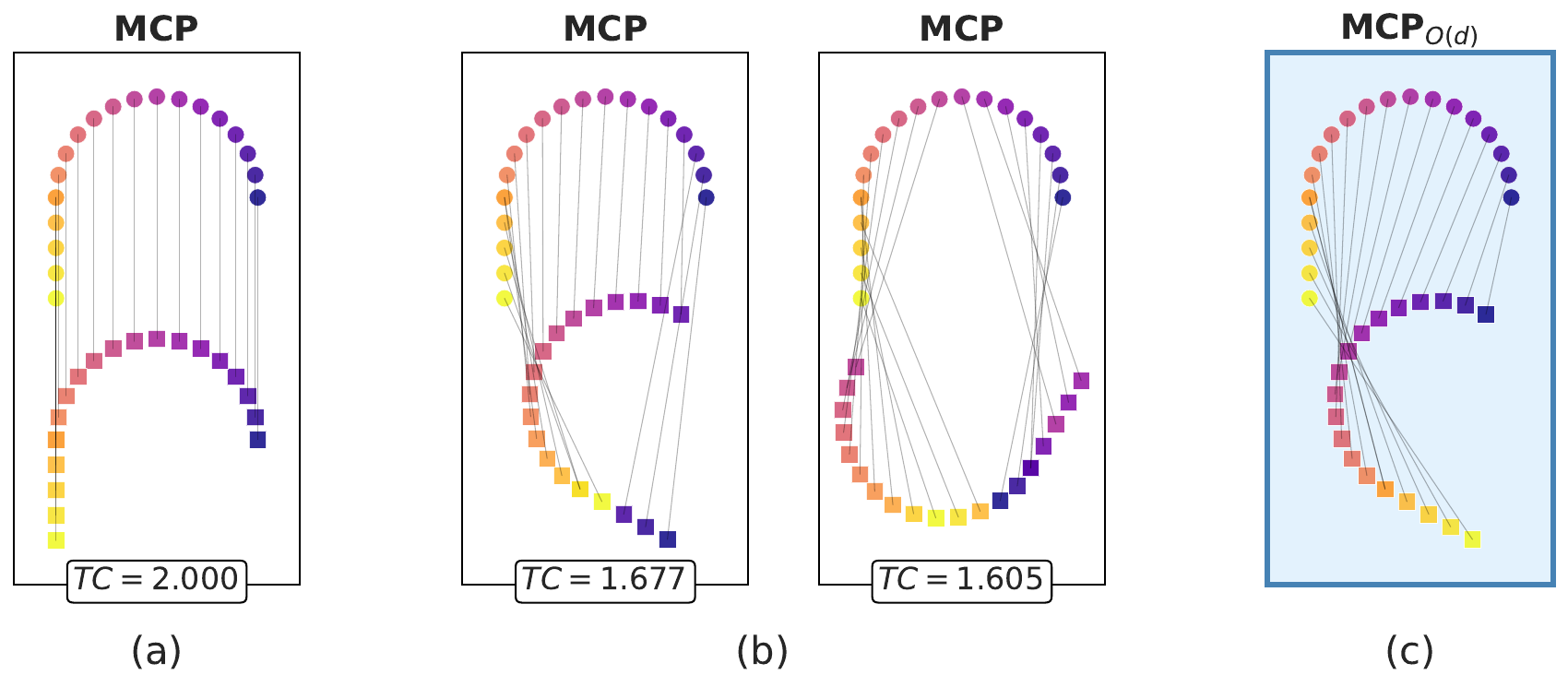}
    \caption{\textbf{MCP and \(\bf\textbf{MCP}_{O(d)}\) demonstration.} \textbf{(a)} MCP successfully reconstructs the true pairing on aligned data. \textbf{(b)} MCP depends on input rotation, which leads to poor performance on unaligned data. \textbf{(c)} Conversely, $\text{MCP}_{O(d)}$ optimizes over all orthogonal alignments, choosing the pairing with the highest correlation after the projection. In this example, it chooses the pairing from (a), enabling it to retrieve the true pairing, invariant to input (orthogonal) projections. A dynamic animation of this figure is available on our \href{https://shaham-lab.github.io/UCCA_page}{project page}.}
    \label{fig:mcp}
\end{figure}

\paragraph{Maximum Correlation Pairing.} An approach to simplify the search for the proxy pairing might be to bypass the projections entirely and select the permutation that directly maximizes the total correlation \cite{dowson1982frechet, knott1984optimal}. We formally define this as the Maximum Correlation Pairing (MCP):

\begin{equation}
    P' = \mcp(X, Y) = \arg\max_{P\in\mathcal{P}} \tc(X, PY)
\end{equation}

As depicted in Fig. \ref{fig:mcp}a, while standard MCP works well for aligned data (e.g., \(Y = X\)), it is highly sensitive to arbitrary rotations of the input space. Consequently, it fails to capture meaningful pairings when the data views are not pre-aligned (Fig. \ref{fig:mcp}b). Since CCA fundamentally seeks and is invariant to orthogonal transformations, the optimal pairing must be evaluated together with an appropriate orthogonal projection.

Thereby, we modify the problem by decoupling it into consecutive maximizations. We introduce two general sets of linear transformation functions, \(\mathcal{F}_1, \mathcal{F}_2\), defining the pairing \(\mcp_{\mathcal{F}_1, \mathcal{F}_2}(X, Y)\) as follows:
\begin{equation} \label{eq:mcp_f}
    P' = \mcp_{\mathcal{F}_1, \mathcal{F}_2}(X, Y) = \arg\max_{P\in\mathcal{P}}\max_{A\in\mathcal{F}_1, B\in\mathcal{F}_2} \tc(XA, PYB)
\end{equation}

When setting \(\mathcal{F}_1 = \mathcal{F}_2 = O(d)\), the inner maximization corresponds precisely to the CCA objective without dimensionality reduction. For brevity, w.l.o.g we fix \(\mathcal{F}_2 = O(d)\), and denote \(\mcp_{\mathcal{F}}(X,Y) = \mcp_{\mathcal{F}, O(d)}\). Fixing \(\mathcal{F}_1\) yields identical results. Unlike naive \(\mcp\), \(\mcp_{O(d)}\) (Fig. \ref{fig:mcp}c) is robust to orthogonal projections, successfully resolving the alignment failure illustrated in Fig. \ref{fig:mcp}b. For brevity, we assume a unique maximizer for \eqref{eq:mcp_f}. Our results remain valid without this assumption. The derivations proceed identically by choosing a representative element (see App. \ref{app:proofs}).

Consequently, \(\mcp_{O(d)}\) serves as a computable proxy \(P'\) that can be evaluated entirely without paired samples. We argue that this proxy robustly recovers the TC under the true underlying permutation, a claim we substantiate through comprehensive empirical evaluations in Sec. \ref{sec:results}.

\begin{takeaway}{1}
    \(\mcp_{O(d)}\) defines a proxy pairing for Unpaired CCA
\end{takeaway}

\subsection{Connection to QAP} \label{sec:mcp_qap}

In practice, optimizing over either discrete permutations or orthogonal projections individually is already a difficult problem \cite{burkard1984quadratic, edelman1998geometry}. Consequently, optimizing over both simultaneously makes a direct computation of \(\mcp_{O(d)}\) computationally prohibitive (see App. \ref{app:direct_optimization}). Therefore, we relax the optimization space from \(O(d)\) to a broader set: the (scaled) Frobenius sphere, \(\sqrt{d} S_F\). As we will demonstrate, under a mild assumption, this relaxation does not alter the optimal pairing while making the problem computationally feasible.

Interestingly, as formalized in Thm. \ref{thm:mcp_qap}, \(\text{MCP}_{S_F}\) reduces to a special case of the well-studied QAP. While QAP is generally NP-hard, efficient approximate solvers exist \cite{vogelstein2015fast, heider1972computationally, loiola2007survey}, making \(\text{MCP}_{S_F}\) a practically attainable pairing.

\begin{theorem} \label{thm:mcp_qap}
    \(\mcp_{\sqrt{d}S_F}(X, Y) = \mcp_{S_F}(X, Y) = \qap(XX^T, YY^T)\).
\end{theorem}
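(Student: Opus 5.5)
The plan is to compute the inner maximization in closed form for each fixed permutation $P$, and then show that the resulting outer objective is a monotone function of the QAP objective.

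\emph{Step 1: inner maximum over $B$.} Fix $P\in\mathcal{P}$ and $A\in\mathcal{F}_1$ (with $\mathcal{F}_1=S_F$ or $\sqrt{d}S_F$), and recall $\mathcal{F}_2=O(d)$. Set $M=A^TX^TPY\in\mathbb{R}^{d\times d}$. Then
\[
\tc(XA,PYB)=\tr(A^TX^TPYB)=\tr(MB).
\]
By von Neumann's trace inequality (the orthogonal Procrustes argument), $\max_{B\in O(d)}\tr(MB)=\lVert M\rVert_*$. It would be more convenient to reach a Frobenius expression directly, so I will also handle the maximization in the opposite order.

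\emph{Step 2: inner maximum over $A$ for fixed $B$.} Set $N=X^TPYB$, so that $\tc(XA,PYB)=\tr(A^TN)=\langle A,N\rangle_F$. By Cauchy--Schwarz, the maximum over $A\in cS_F$ equals $c\lVert N\rVert_F$, attained at $A=cN/\lVert N\rVert_F$. Since $B$ is orthogonal, $\lVert N\rVert_F=\lVert X^TPY\rVert_F$, independent of $B$. Therefore
\[
\max_{A\in cS_F,\,B\in O(d)}\tc(XA,PYB)=c\,\lVert X^TPY\rVert_F .
\]
Swapping the order of the two maxima is harmless, since both are joint maxima over a product set.

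\emph{Step 3: scale independence.} For every $P$ the objective is $c$ times a quantity independent of $c>0$, so the argmax over $P$ is the same for $c=1$ and $c=\sqrt{d}$. This gives $\mcp_{\sqrt{d}S_F}=\mcp_{S_F}$.

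\emph{Step 4: identify the QAP objective.} Maximizing $\lVert X^TPY\rVert_F$ is equivalent to maximizing its square, because the map $t\mapsto t^2$ is strictly increasing on $t\ge 0$. Expanding with the cyclic property of the trace and $P^{-1}=P^T$,
\[
\lVert X^TPY\rVert_F^2=\tr(Y^TP^TXX^TPY)=\tr\!\big(XX^T\,PYY^TP^T\big).
\]
Because $XX^T$ is symmetric, $XX^T=(XX^T)^T$, and $YY^T=(YY^T)^T$ likewise. Hence this quantity equals $\tc\big((XX^T)^T,\,P(YY^T)^TP^T\big)$, which is exactly the matrix form of the QAP objective with $A=XX^T$ and $B=YY^T$. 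The argmaxes therefore coincide, giving $\mcp_{S_F}(X,Y)=\qap(XX^T,YY^T)$.

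\emph{Main obstacle.} The only delicate point is Step 2, where the maximization over $A$ must be done before the maximization over $B$ so that the Frobenius norm, rather than the nuclear norm, appears. One also has to treat the degenerate case $X^TPY=0$, in which every $A$ attains the value $0$; this does not affect the argmax unless all permutations give $0$. Uniqueness of the maximizer is handled as the paper states, by fixing a representative maximizer, since the objectives agree for every $P$.
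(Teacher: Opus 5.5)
Your proposal is correct and follows essentially the same route as the paper. The paper also applies Cauchy--Schwarz over $c\,S_F$, uses orthogonal invariance of the Frobenius norm to obtain $c\lVert X^TPY\rVert_F$ (which gives scale independence and hence the left equality), and then squares and applies the cyclic trace property to reach $\tr(XX^TPYY^TP^T)$. Your Step 1 nuclear-norm detour is unnecessary, since the joint maximum does not depend on the order of the two maximizations.
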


\begin{takeaway}{2}
   \(\text{MCP}_{S_F}\) is practically attainable via QAP
\end{takeaway}

\begin{figure}
    \centering
    \includegraphics[width=\linewidth]{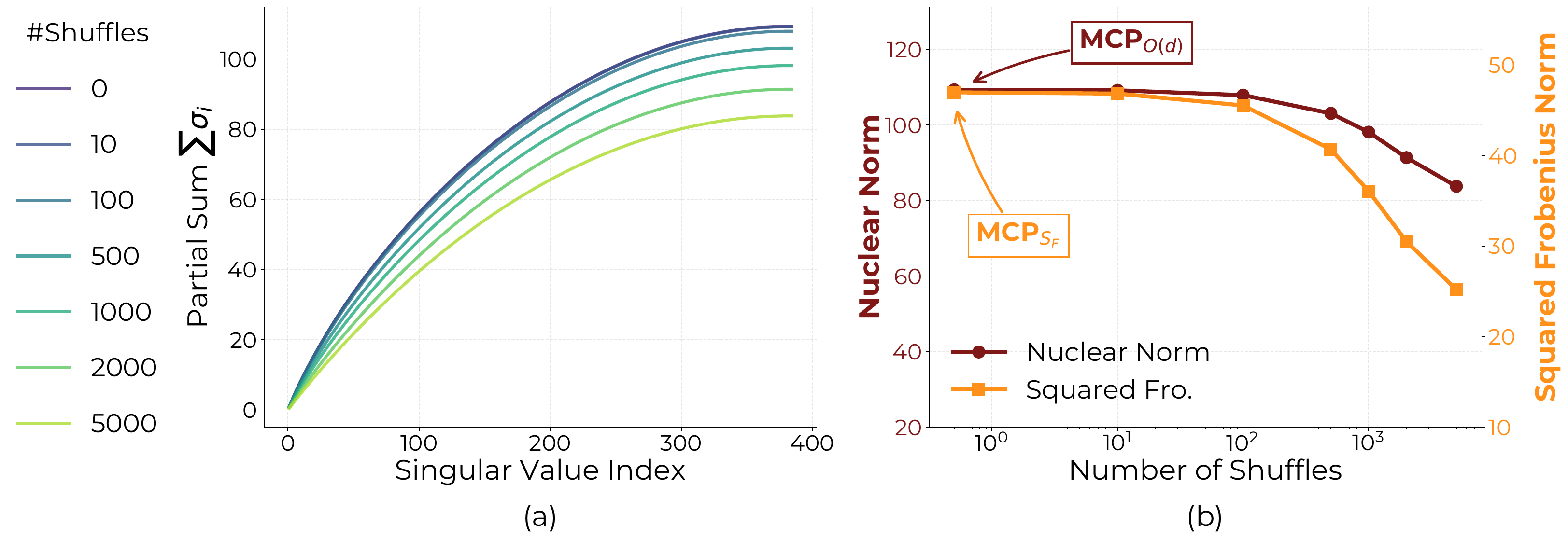}
    \vspace{-15pt}
    \caption{\textbf{(a) Validation of Assum. \ref{assum:weak_majorization}:} Partial cumulative sums of singular values for the true permutation \(P^*\) (\#shuffles = 0) and various random shuffles on the Flickr dataset (8,000 samples; see Sec. \ref{sec:results}). As the permutation diverges from \(P^*\), the partial sums strictly decrease, confirming the majorization property. \textbf{(b) Validation of Thm. \ref{thm:mcp_equiv}.} A comparison of \(\mcp_{O(d)}\) (Nuclear) and \(\mcp_{S_F}\) (Frobenius) objectives using the setup from (a). Both objectives decay as the permutation moves away from \(P^*\), and both are locally maximized by the same ground-truth permutation.}
    \label{fig:theory}
\end{figure}

\subsection{Equivalence of \(\bf\textbf{MCP}_{O(d)}\) and \(\bf\textbf{MCP}_{S_F}\)} \label{sec:mcp_equiv}

Thus far, we have defined \(\text{MCP}_{O(d)}\), and established that \(\text{MCP}_{S_F}\) is an attainable pairing via QAP approximations. A critical question remains: how do these two pairings relate? To bridge this gap, we first observe that both \(\text{MCP}_{O(d)}\) and \(\text{MCP}_{S_F}\) can be reformulated as optimization problems exclusively over \(\mathcal{P}\):
\begin{proposition} \label{prop:mcpf_norms}
    \[\mcp_{O(d)}(X, Y) = \arg\max_{P\in\mathcal{P}} \lVert X^T P Y \rVert_*,
    \quad
    \mcp_{S_F}(X, Y) = \arg\max_{P\in\mathcal{P}} \lVert X^T P Y \rVert_F\]
\end{proposition}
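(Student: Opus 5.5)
The plan is to evaluate the inner maximization in Eq. \eqref{eq:mcp_f} in closed form for a fixed $P\in\mathcal{P}$, for each choice of $\mathcal{F}$. The outer $\arg\max$ over $\mathcal{P}$ then becomes the claimed norm maximization. Throughout, write $M = M(P) = X^T P Y \in \mathbb{R}^{d\times d}$. By cyclicity of the trace,
\[
\tc(XA, PYB) = \tr(A^T X^T P Y B) = \tr(A^T M B) = \tr(M B A^T).
\]

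\textbf{The $O(d)$ case.} Here $A, B \in O(d)$. As $(A,B)$ ranges over $O(d)\times O(d)$, the product $Q = BA^T$ ranges over all of $O(d)$, because we may take $A = I$. So the inner problem is $\max_{Q\in O(d)} \tr(MQ)$. Take the SVD $M = W\Sigma Z^T$. Then
\[
\tr(MQ) = \tr(\Sigma\, Z^T Q W) = \sum_i \sigma_i(M)\,(Z^TQW)_{ii}.
\]
Since $Z^TQW$ is orthogonal, each diagonal entry is at most $1$, which gives $\tr(MQ)\le \sum_i\sigma_i(M) = \lVert M\rVert_*$. Equality holds at $Q = ZW^T$; this is von Neumann's trace inequality, i.e.\ the orthogonal Procrustes argument. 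Hence $\max_{A,B}\tc(XA,PYB) = \lVert X^TPY\rVert_*$, and taking $\arg\max_P$ gives the first identity.

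\textbf{The $S_F$ case.} Here $A\in S_F$ and $B\in O(d)$, since $\mcp_{\mathcal{F}} = \mcp_{\mathcal{F},O(d)}$. For fixed $B$, write $\tr(A^T M B) = \langle A, MB\rangle_F$. Cauchy--Schwarz on the unit Frobenius sphere gives a maximum of $\lVert MB\rVert_F = \lVert M\rVert_F$, using orthogonal invariance of the Frobenius norm. This is attained at $A = MB/\lVert MB\rVert_F$, which is well defined whenever $M\neq 0$; if $M=0$ the value is $0 = \lVert M\rVert_F$ trivially. The bound does not depend on $B$, so the inner maximum equals $\lVert X^TPY\rVert_F$, and the second identity follows.

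The scaled sphere $\sqrt{d}S_F$ only multiplies the objective by $\sqrt{d}$, which leaves the $\arg\max$ unchanged; this matches Thm.~\ref{thm:mcp_qap}. Ties in the $\arg\max$ are handled, as the paper indicates, by choosing a representative element.

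\textbf{Main obstacle.} The only nontrivial step is the $O(d)$ reduction. It has two parts: justifying that the joint maximization over $(A,B)$ collapses to a single $Q=BA^T\in O(d)$ without loss, and invoking the trace inequality with its attaining $Q$. Both are standard. The rest is bookkeeping about the $\arg\max$ being taken over $P$ after the inner maximum is computed pointwise.
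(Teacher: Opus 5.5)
Your proof is correct and follows essentially the same route as the paper. For the $O(d)$ case you collapse the two orthogonal factors into a single $Q\in O(d)$, bound $\tr(MQ)$ by $\lVert M\rVert_*$, and attain the bound at the Procrustes solution; for the sphere case you use Cauchy--Schwarz together with orthogonal invariance of $\lVert\cdot\rVert_F$. The only differences are minor: you derive the trace bound directly from the SVD, since diagonal entries of an orthogonal matrix are at most $1$, rather than citing von Neumann's inequality, and you make explicit two points the paper leaves implicit, namely that $BA^T$ exhausts $O(d)$ and the degenerate case $M=0$.
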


Let \(M_P = X^T P Y\) denote the cross-correlation matrix of \(X\) and \(Y\) under permutation \(P\). Prop. \ref{prop:mcpf_norms} translates into an optimization over \(\sigma(M_P)\), the singular values of \(M_P\):

\begin{equation} \label{eq:mcpf_svals}
    \mcp_{O(d)}(X, Y) = \arg\max_{P\in\mathcal{P}} \sum_{i=1}^d \sigma_i(M_P),
    \quad
    \mcp_{S_F}(X, Y) = \arg\max_{P\in\mathcal{P}} \sum_{i=1}^d \sigma^2_i(M_P)
\end{equation}

Denote \(P_O = \mcp_{O(d)}\) and \(P \ne \mcp_{O(d)}\). By Eq. \eqref{eq:mcpf_svals}, the sum of singular values under \(P_O\) is strictly maximal:
\begin{equation}
    \sum_{i=1}^d \sigma_i(M_{P_O}) > \sum_{i=1}^d\sigma_i(M_{P})
\end{equation}

In Assumption \ref{assum:weak_majorization} below, we assume this inequality holds not just for the total sum, but is preserved across all partial sums.

\begin{assumption} \label{assum:weak_majorization}
    Let \(P_O = \mcp_{O(d)}\) and \(P \neq \mcp_{O(d)}\), then \(\sigma(M_{P_O})\) weakly majorizes \(\sigma(M_{P})\). Namely,
    \[\sum_{i=1}^m \sigma_i(M_{P_O}) \geq \sum_{i=1}^m \sigma_i(M_{P})\quad \forall m \in \{1, 2, \dots, d\}\]
\end{assumption}

This assumption is relatively mild. As demonstrated in App. \ref{app:proofs}, Prop. \ref{prop:finite_assum_support} establishes that Assum. \ref{assum:weak_majorization} is satisfied when \(Y\) is an isometry of \(X\) (i.e., \(Y = XQ\), where \(Q \in O(d)\)). Furthermore, Prop. \ref{prop:infinite_assum_support} proves that as \(n\to\infty\), the assumption holds with probability $1$ for any random permutation. We empirically corroborate this weak-majorization behavior on real datasets in Fig. \ref{fig:theory}a and App. \ref{app:assum_valid}. Building upon this, we derive the following theorem.

\begin{theorem} \label{thm:mcp_equiv}
    Under Assumption \ref{assum:weak_majorization}, \(\text{MCP}_{O(d)}(X, Y) = \text{MCP}_{S_F}(X, Y)\)
\end{theorem}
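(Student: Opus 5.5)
By Prop.~\ref{prop:mcpf_norms} and Eq.~\eqref{eq:mcpf_svals}, it suffices to show that $P_O=\mcp_{O(d)}(X,Y)$ also maximizes $\sum_i \sigma_i^2(M_P)$ over $\mathcal{P}$. I would do this with a majorization argument. Fix any $P \neq P_O$ and write $a=\sigma(M_{P_O})$ and $b=\sigma(M_P)$. Both vectors are sorted in nonincreasing order and are nonnegative. Assumption~\ref{assum:weak_majorization} gives weak submajorization $b \prec_w a$: $\sum_{i=1}^m b_i \le \sum_{i=1}^m a_i$ for all $m$. From the strict optimality of $P_O$ in Eq.~\eqref{eq:mcpf_svals} we also know $\sum_i b_i < \sum_i a_i$.

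The key tool is the standard fact that $\phi(x)=\sum_i x_i^2$ is convex, symmetric and increasing on $\mathbb{R}_{\ge 0}^d$. Such a function is Schur-convex and increasing, hence preserves weak submajorization (Tomić--Weyl / Hardy--Littlewood--Pólya). I would prove the needed inequality directly with Abel summation so as not to rely on citations. Write $a_i^2-b_i^2=(a_i+b_i)(a_i-b_i)$. Setting $c_i=a_i+b_i$, which is nonincreasing in $i$, and $D_m=\sum_{i\le m}(a_i-b_i)\ge 0$, we get
\[
\sum_{i=1}^d (a_i^2-b_i^2)=\sum_{m=1}^{d-1} D_m (c_m-c_{m+1}) + D_d c_d \;\ge\; 0 .
\]
This yields $\lVert M_{P_O}\rVert_F \ge \lVert M_P\rVert_F$ for every $P\ne P_O$.

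The main obstacle is \emph{strictness}. The non-strict inequality only shows that $P_O$ is \emph{a} maximizer of the Frobenius objective, whereas the paper assumes a unique maximizer of \eqref{eq:mcp_f}. In the Abel sum, $D_d>0$, so strictness follows as soon as $c_d>0$. If $c_d=0$, I would take the last index $m$ with $c_m>c_{m+1}$, or with $m=d$, at which $D_m>0$ still holds. Since $c\ne 0$ (because $a\neq 0$), some index $m$ satisfies $c_m>c_{m+1}$ with $c_{d+1}:=0$. Writing the sum as $\sum_{m=1}^{d} D_m(c_m-c_{m+1})$ and using the convention $c_{d+1}:=0$, at least one term with $D_m>0$ has a positive weight.

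I would argue this as follows. Let $r$ be the largest index with $c_r>0$. Then $a_i=b_i=0$ for $i>r$, so $D_r=D_d>0$, and $c_r-c_{r+1}=c_r>0$. The sum therefore strictly exceeds zero. Consequently every $P\neq P_O$ has a strictly smaller Frobenius objective, and $\mcp_{S_F}(X,Y)=P_O=\mcp_{O(d)}(X,Y)$.

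Without the uniqueness assumption, the same argument shows that any nuclear-norm maximizer is a Frobenius-norm maximizer. This is consistent with the representative-element convention used in App.~\ref{app:proofs}.
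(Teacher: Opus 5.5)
Your proof is correct, and its skeleton matches the paper's: first a non-strict Frobenius inequality from weak submajorization, then strictness from the strict nuclear-norm gap. The difference is in how the majorization step is justified.

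The paper cites Marshall--Olkin twice. It applies Prop.~\ref{prop:weak_major_ineq} to the increasing convex $g(x)=x^2$ to get $P_O\in\mcp_{S_F}$. It then uses Prop.~\ref{prop:weak_major_strict}, for the strictly increasing, strictly Schur-convex $\sum_i x_i^2$, to show that any $P\notin\mcp_{O(d)}$ lies outside $\mcp_{S_F}$. The result is stated as two set inclusions.

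You instead derive both facts from the single identity $\sum_i(a_i^2-b_i^2)=\sum_{m=1}^{d}D_m(c_m-c_{m+1})$ with $c_{d+1}=0$. You locate strictness at the last index $r$ with $c_r>0$, where $D_r=D_d>0$. Under uniqueness this shows directly that $P_O$ is the unique Frobenius maximizer, which is exactly the content of the paper's two inclusions.

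Your route is self-contained and handles the zero-tail case explicitly, so there is no need to check strict Schur-convexity or match the hypotheses of a cited theorem. The paper's route is shorter and delegates this bookkeeping to standard results. Your Abel-summation argument is essentially the classical proof of the Tomi\'c--Weyl inequality, specialized to $g(x)=x^2$.

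One correction to your closing remark. Without uniqueness, the argument does not show that \emph{every} nuclear-norm maximizer is a Frobenius-norm maximizer. Assumption~\ref{assum:weak_majorization} only compares each $P_O\in\mcp_{O(d)}$ against permutations $P\notin\mcp_{O(d)}$, so two nuclear maximizers $P_1,P_2$ are never compared with each other. They have the same $\sum_i\sigma_i$, but nothing prevents spectra such as $(1,0)$ and $(1/2,1/2)$. In that case the one with the smaller Frobenius norm is not in $\mcp_{S_F}$.

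What your strictness argument does give in the set-valued setting is the reverse inclusion $\mcp_{S_F}\subseteq\mcp_{O(d)}$. Full equality would need an extra hypothesis, for example weak majorization among the maximizers as well, which together with equal totals forces equal spectra. The paper's appendix proof of $P_O\in\mcp_{S_F}$ quantifies over all $P\neq P_O$ and passes over the same point. None of this affects your proof of the theorem as stated under the paper's uniqueness convention.
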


Fig. \ref{fig:theory}b empirically validates Thm. \ref{thm:mcp_equiv} on the Flickr dataset, with further support provided by the evaluations in Sec. \ref{sec:results} and App. \ref{app:thm_valid}.

\begin{takeaway}{3}
    \centering
    Under realistic data assumptions, \(\text{MCP}_{O(d)} = \text{MCP}_{S_F}\)
\end{takeaway}

Theorem \ref{thm:mcp_equiv} forms the theoretical cornerstone of our method. Synthesizing these findings, we conclude that under realistic assumptions, the computationally attainable \(\text{MCP}_{S_F}\) perfectly aligns with the \(\text{MCP}_{O(d)}\) pairing. Ultimately, this mathematically justifies the core premise illustrated in Fig. \ref{fig:open}: \textit{Unpaired CCA can be solved with linear kernel QAP}.

\section{Unpaired CCA} \label{sec:method}

\begin{figure}[htbp]
  \centering
  \begin{minipage}{0.48\textwidth}
    \centering
    \includegraphics[width=\linewidth]{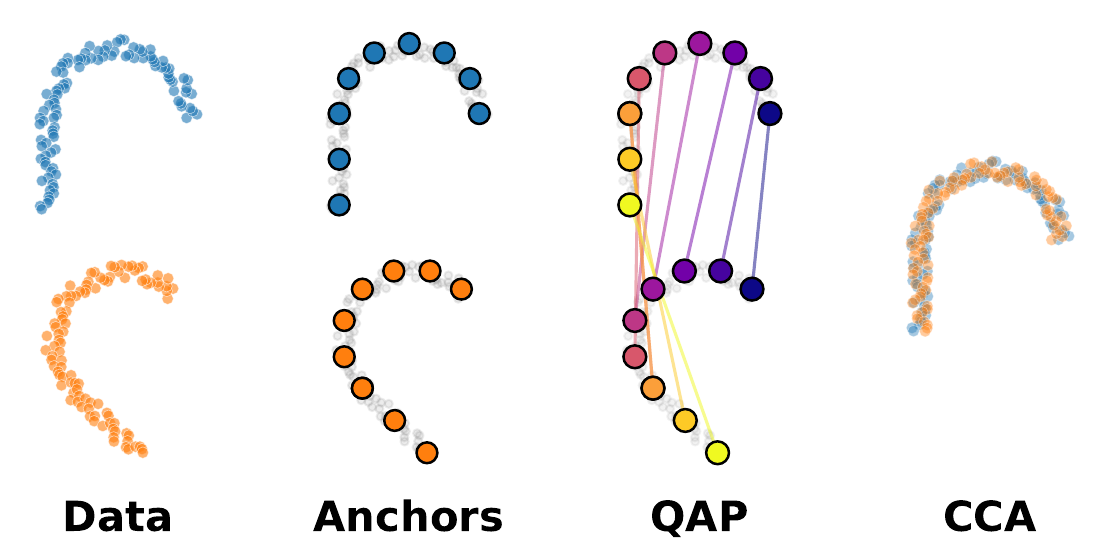}
    \vspace{-15pt}
    \caption{Unpaired CCA method overview.}
    \label{fig:method}
  \end{minipage}
  \hfill
  \begin{minipage}{0.48\textwidth}
    \begin{algorithm}[H]
      \caption{Unpaired CCA}
      \label{alg:method}
      \begin{algorithmic}[1]
        \Require Unpaired data \(X, Y\)
        \Ensure Projectors \(U, V\)
        \State \(Z_X, Z_Y \gets \text{Whiten}(X, Y)\)
        \State \(A_X, A_Y \gets \text{Anchors}(Z_X, Z_Y)\)
        \State \(P' \gets \text{QAP}(A_X, A_Y)\)
        \State \(U, V \gets \text{CCA}(A_X, P'A_Y)\)
        \State \Return \(U, V\)
      \end{algorithmic}
    \end{algorithm}
  \end{minipage}
\end{figure}

Building directly upon the theoretical foundations established in Sec. \ref{sec:theory}, we derive a practical algorithm for maximizing correlation in the absence of paired data, which we term Unpaired CCA (\ourmethod). This section outlines the general algorithmic framework of \ourmethod, deferring specific architectural details to App. \ref{app:implementation_details}. Notably, despite its simplicity, this approach significantly outperforms existing baselines in capturing cross-view correlations in the unpaired data scenario (Sec. \ref{sec:results}).

As concluded in Sec. \ref{sec:theory}, the \(\mcp_{O(d)}\) pairing can be attained by solving a linear-kernel QAP. However, the QAP is NP-hard. While efficient approximate solvers exist, applying them directly to the entire dataset becomes computationally prohibitive for a large number of samples, \(n\). To ensure scalability in practice, \ourmethod\ adopts an anchor-based approach \cite{williams2000using, de2004sparse}, operating by first selecting a subset of \(k\) representative anchors from each modality. A standard and highly practical choice of anchors is the K-Means centroids \cite{kumar2012sampling, coates2011analysis}. We apply K-Means clustering within each modality and extract the resulting centroids to serve as our anchors.

Once extracted, these \(k\) anchors are matched using the linear-kernel QAP formulation detailed in Thm. \ref{thm:mcp_qap}, yielding an estimated permutation matrix \(P'\). For robustness, this process is repeated several times, and the matched anchors are concatenated. Finally, we treat these matched anchors as pseudo-pairs and apply standard CCA to compute the final projection matrices. Importantly, \ourmethod\ does not assume any existing correspondences in the data samples. The complete procedure is summarized in Alg. \ref{alg:method} and visualized in Fig. \ref{fig:method}.

\section{Experimental Results} \label{sec:results}

This section details the empirical evaluation of \ourmethod. Our experimental design aims to validate \ourmethod's capacity to recover latent cross-modal correlations, and empirically substantiate our theoretical framework. Comprehensive implementation details, including architectural configurations and hyperparameter selections, are documented in App. \ref{app:implementation_details}.

\paragraph{Datasets.} To rigorously assess \ourmethod's ability to recover the true underlying correlation, we evaluate it on benchmark datasets where ground-truth pairings are known but strictly withheld during training. Specifically, we utilize four diverse real-world datasets: SNARE \cite{demetci2022scot} (single-cell multi-omics), Flickr \cite{hodosh2013framing}, COCO \cite{lin2014microsoft} (image-text), and Handwritten digits benchmark \cite{van1998handwritten, asuncion2007uci}. For the Handwritten dataset, we employ three distinct views: profile correlations (PC), Karhunen-Loève coefficients (KL), and pixel averages (PA). Across these benchmarks, we evaluate \ourmethod\ on a total of six distinct dataset configurations. Crucially, we enforce a strictly unpaired training split by dividing the training data into two disjoint subsets, allocating one half entirely to the first view and the remainder to the second, thereby eliminating any paired correspondences during training.

\paragraph{Baselines.} As \ourmethod\ represents the first methodology explicitly designed to optimize correlation exclusively from unpaired samples, direct baselines do not currently exist. Therefore, we compare \ourmethod\ against recent unpaired alignment techniques. The most structurally similar approaches are SCA \cite{timilsina2024identifiable}, which extracts shared linear components from unpaired multiview data, and UCA \cite{hoshen2018unsupervised}, which learns shared representations motivated by correlation objectives. We benchmark against the linear formulation of UCA, deferring results for its non-linear variant to App. \ref{app:uca_nonlinear}. Alternative methods, such as J-MDS \cite{chenunsupervised}, SCOTv1 \cite{demetci2022scot}, and SCOTv2 \cite{demetci2022scotv2}, extract shared representations without establishing a functional mapping, rendering them inapplicable to the generalizable setting. To enable comparison to these, we adapt them by fitting a standard CCA model to the pseudo-pairs they infer during training (similar to \ourmethod's approach), denoting these extensions as -CCA variants.

\begin{figure}[t]
    \centering
    \includegraphics[width=\linewidth]{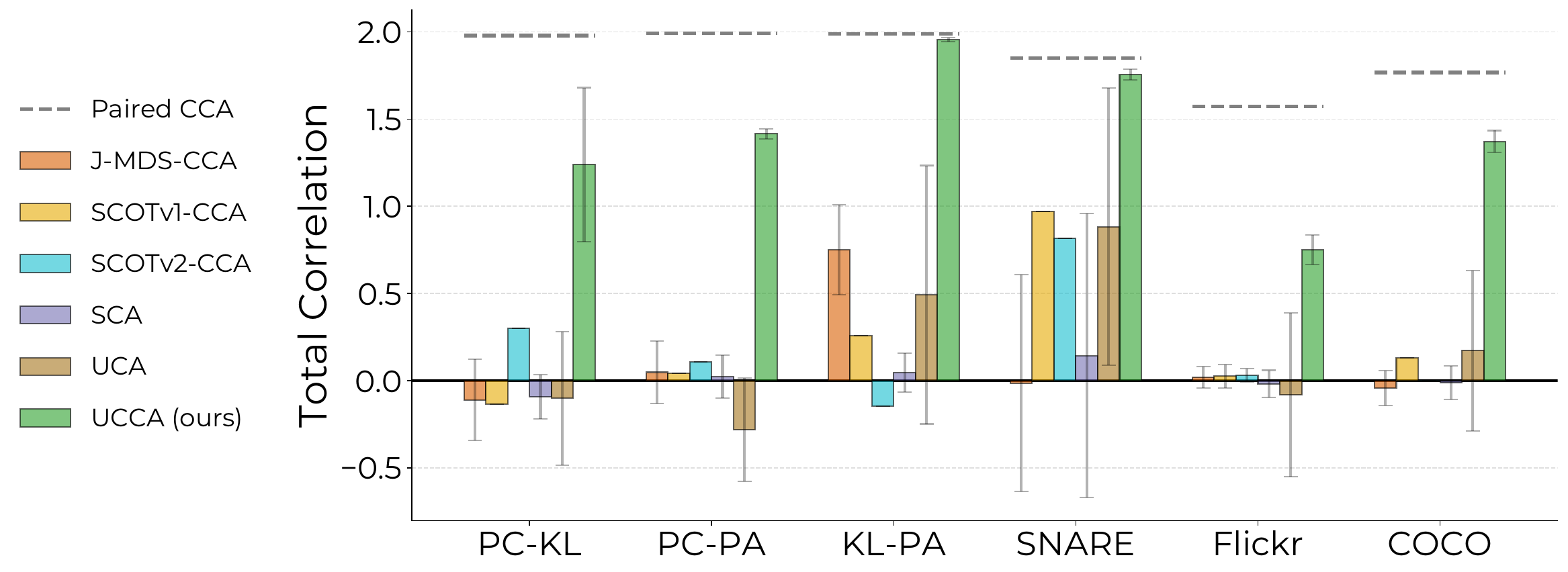}
    \vspace{-15pt}
    \caption{\textbf{Total Correlation results.} Total correlation achieved across six distinct dataset configurations. \ourmethod\ consistently extracts highly correlated components, significantly outperforming all baselines, and often approaching the paired CCA upper bound.}
    \label{fig:tc}
\end{figure}

\paragraph{Total Correlation.} Fig. \ref{fig:tc} evaluates the fundamental objective of an unpaired CCA method: its ability to successfully capture the true correlation of the hidden ground-truth pairings. We compute the total correlation as defined in Eq. \eqref{eq:tc}, \(\tc(X, P^*Y) = \tr(X^TP^*Y)\), after orthogonalizing \(X\) and \(Y\) separately to remove correlation repetition. The figure reports the total correlation captured by the top two learned components. Evidently, \ourmethod\ is the sole approach that consistently recovers the underlying correlation. It significantly outperforms all baselines and often approaches the paired CCA upper bound.

\begin{wrapfigure}{r}{0.4\textwidth} 
    \vspace{-5pt}
    \centering
    \includegraphics[width=0.4\textwidth]{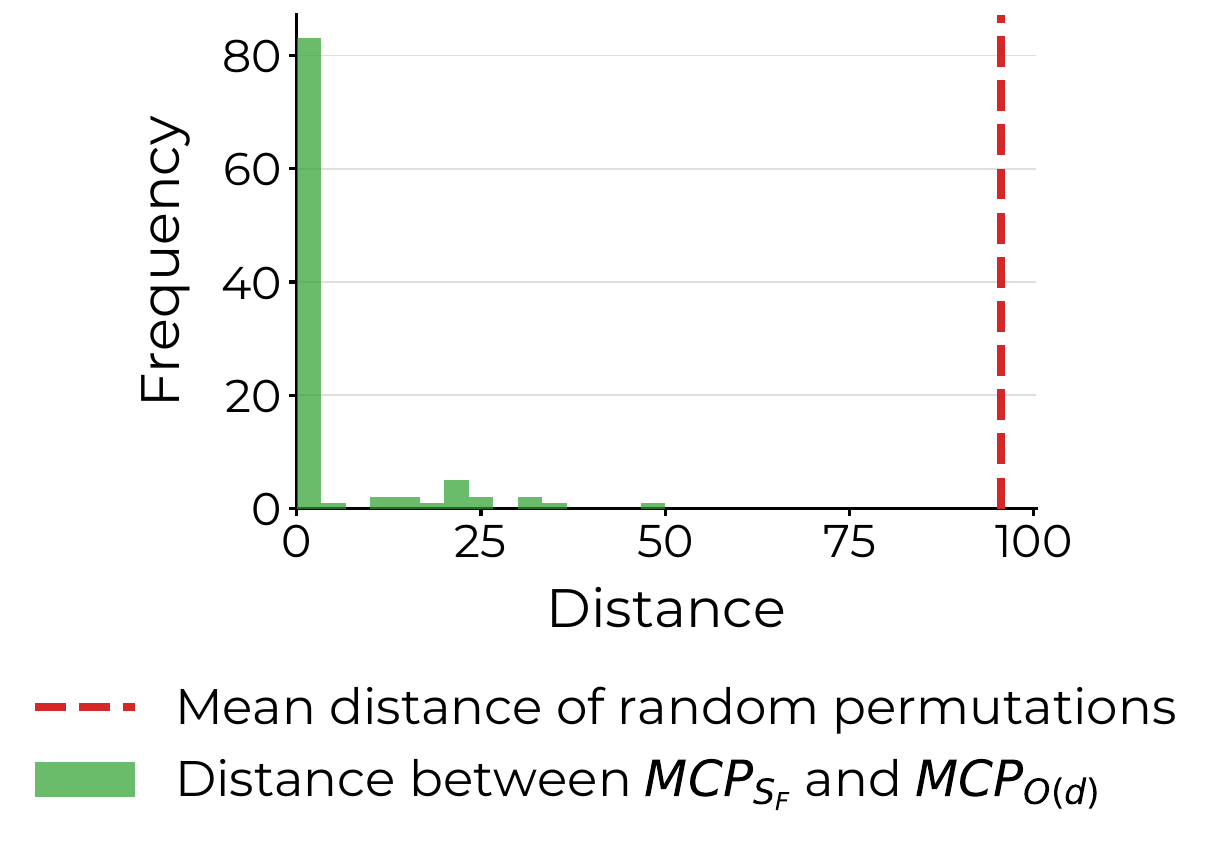}
    \vspace{-18pt}
    \caption{\textbf{\(\text{MCP}_{O(d)} = \text{MCP}_{S_F}\) in practice.} Kendall Tau Distances between \(\mcp_{S_F}\) and approximated \(\mcp_{O(d)}\), computed on the anchors of the Handwritten benchmark (PC-PA) across 100 initializations. The strict concentration at zero empirically validates Thm. \ref{thm:mcp_equiv}.}
    \label{fig:mcp_equiv}
    \vspace{-25pt}
\end{wrapfigure}

The comparison with J-MDS-CCA, SCOTv1-CCA and SCOTv2-CCA is of particular interest, as it isolates the core effect of optimizing the \(\mcp_{O(d)}\) pairing. It clearly demonstrates that the \(\mcp_{O(d)}\) objective provides a superior pairing strategy for CCA scenarios when compared to alternative approaches.

\begin{table}[t]
\centering
\caption{\textbf{Cross-view classification results.} Mean accuracy (\(\pm\) std) of a kNN classifier fitted on the embeddings of one view and evaluated on the other. Bold and underlined values indicate the best and second-best performances, respectively. \ourmethod\ significantly outperforms all baselines across four datasets while maintaining comparable performance on the fifth.}
\label{tab:classification}

\begin{adjustbox}{width=\textwidth}
\begin{tabular}{lccccc}
\toprule
Method & PC-KL & PC-PA & KL-PA & SNARE & COCO \\
\midrule
J-MDS-CCA & \( 0.083 \pm 0.021 \) & \( 0.087 \pm 0.021 \) & \( \underline{0.211} \pm 0.033 \) & \( 0.247 \pm 0.224 \) & \( \mathbf{0.237} \pm 0.079 \) \\
SCOTv1-CCA & \( 0.074 \pm 0.000 \) & \( \underline{0.121} \pm 0.000 \) & \( 0.102 \pm 0.000 \) & \( \underline{0.650} \pm 0.000 \) & \( 0.141 \pm 0.000 \) \\
SCOTv2-CCA & \( \underline{0.148} \pm 0.000 \) & \( 0.090 \pm 0.000 \) & \( 0.113 \pm 0.000 \) & \( 0.646 \pm 0.000 \) & \( 0.164 \pm 0.000 \) \\
SCA & \( 0.097 \pm 0.014 \) & \( 0.083 \pm 0.011 \) & \( 0.100 \pm 0.022 \) & \( 0.312 \pm 0.255 \) & \( 0.161 \pm 0.020 \) \\
UCA & \( 0.088 \pm 0.016 \) & \( 0.100 \pm 0.017 \) & \( 0.159 \pm 0.090 \) & \( 0.378 \pm 0.297 \) & \( 0.132 \pm 0.058 \) \\
\rowcolor{lightgreen}
UCCA (ours) & \( \mathbf{0.292} \pm 0.074 \) & \( \mathbf{0.315} \pm 0.025 \) & \( \mathbf{0.564} \pm 0.022 \) & \( \mathbf{0.849} \pm 0.021 \) & \( \underline{0.232} \pm 0.016 \) \\
\midrule
\textcolor{gray}{Paired CCA} & \textcolor{gray}{\( 0.562 \pm 0.000 \)} & \textcolor{gray}{\( 0.516 \pm 0.000 \)} & \textcolor{gray}{\( 0.629 \pm 0.000 \)} & \textcolor{gray}{\( 0.887 \pm 0.000 \)} & \textcolor{gray}{\( 0.232 \pm 0.009 \)} \\
\bottomrule
\end{tabular}
\end{adjustbox}
\end{table}

\paragraph{Cross-view Classification.} To evaluate the downstream performance of \ourmethod, we apply it to a cross-view classification task (Tab. \ref{tab:classification}). Using the shared representations learned exclusively from unpaired data, we train a kNN classifier on one view and evaluate its accuracy on the other. As shown, \ourmethod\ outperforms all baselines across four datasets while maintaining comparable performance on the fifth, often approaching the performance of the paired CCA. The Flickr dataset is omitted from this experiment as it lacks class labels.

\paragraph{Validating \(\bf\textbf{MCP}_{O(d)} = \textbf{MCP}_{S_F}\).} Fig. \ref{fig:mcp_equiv} empirically validates the equivalence between \(\text{MCP}_{O(d)}\) and \(\text{MCP}_{S_F}\). Specifically, it visualizes the distance between \(\mcp_{S_F}\), retrieved via an approximate QAP solver, and an estimated \(\mcp_{O(d)}\), denoted as \(\tilde{P_O}\). Conceptually, \(\tilde{P_O}\) represents the projection of \(\mcp_{S_F}\) onto the \(\mcp_{O(d)}\) solution space. If \(\tilde{P_O} = \mcp_{S_F}\), the approximation is exact, establishing that \(\mcp_{O(d)} = \mcp_{S_F}\). To assess this, we computed both pairings across all real datasets using different anchor initializations. We quantified the divergence between the resulting permutations using the Kendall Tau distance, which counts the number of pairwise inversions. As shown, these distances are heavily concentrated at zero, providing strong empirical validation that Thm. \ref{thm:mcp_equiv} holds in practice (see App. \ref{app:valid_thm2_exp} for further details and results).

\begin{wrapfigure}{r}{0.5\textwidth} 
    \vspace{-10pt}
    \centering
    \includegraphics[width=0.5\textwidth]{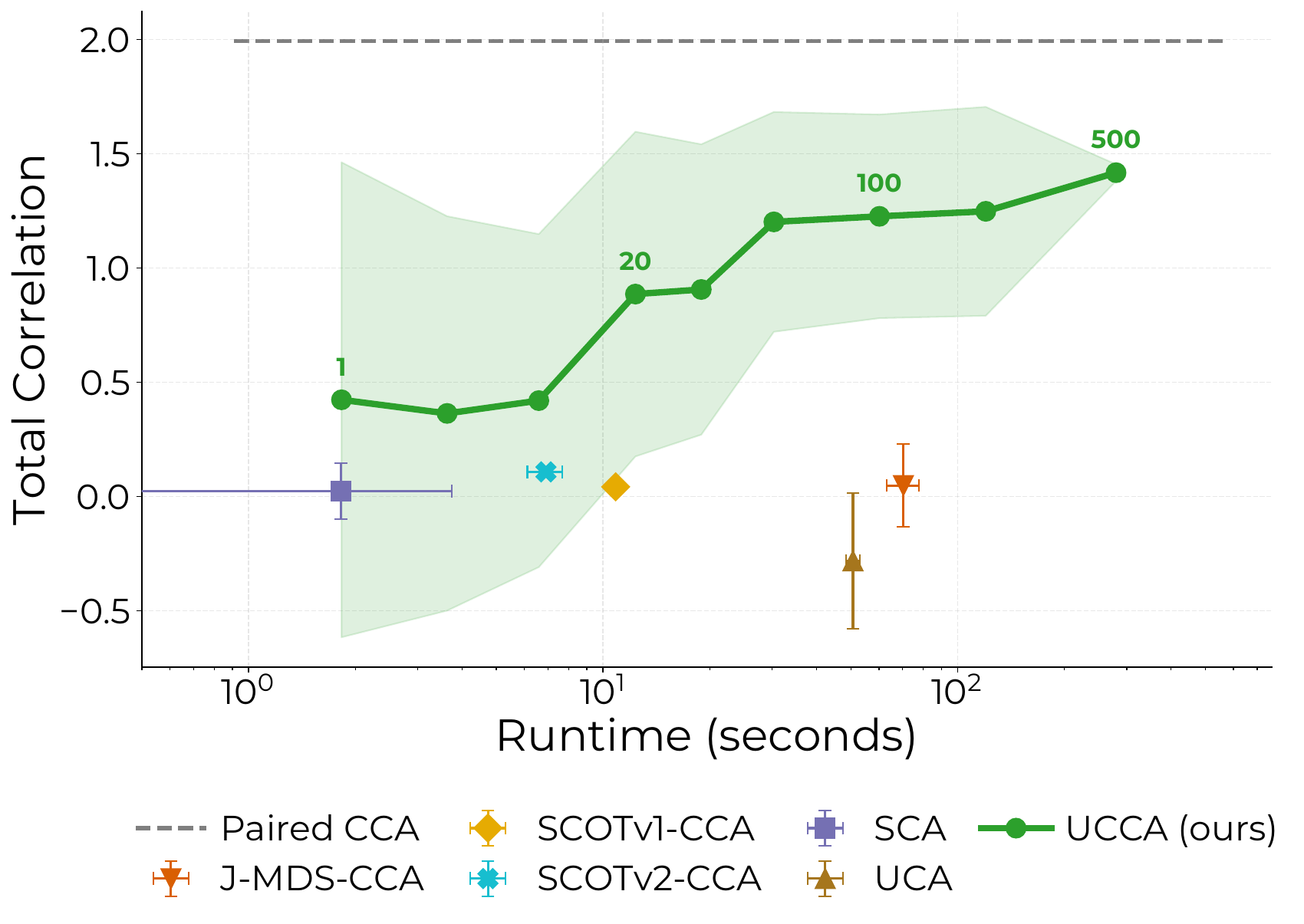}
    \vspace{-15pt}
    \caption{\textbf{Runtime and Performance.} Total correlation versus runtime for \ourmethod\ and baselines on the handwritten (PC-PA) benchmark. \ourmethod\ is improved in robustness and accuracy with more clustering repetitions, yet outperforms all baselines even with fewer repetitions and shorter runtimes.}
    \label{fig:runtime}
    \vspace{-20pt}
\end{wrapfigure}

\paragraph{Computational Efficiency.} Relying on QAP, an NP-hard problem, naturally introduces concerns regarding runtime bottlenecks. However, as detailed in Fig. \ref{fig:runtime} (and App. \ref{app:runtime}), \ourmethod\ demonstrates a highly favorable trade-off between runtime and performance. It consistently outperforms baseline methods even when configured with fewer clustering repetitions for faster execution, and requires a total runtime of only several minutes for each dataset. The algorithmic reliance on a smaller subset of representative anchors, combined with efficient approximate solvers, effectively mitigates this computational overhead and ensures practical scalability. We further discuss the computational complexity in App. \ref{app:complexity}.

\begin{figure}
    \centering
    \includegraphics[width=0.98\linewidth]{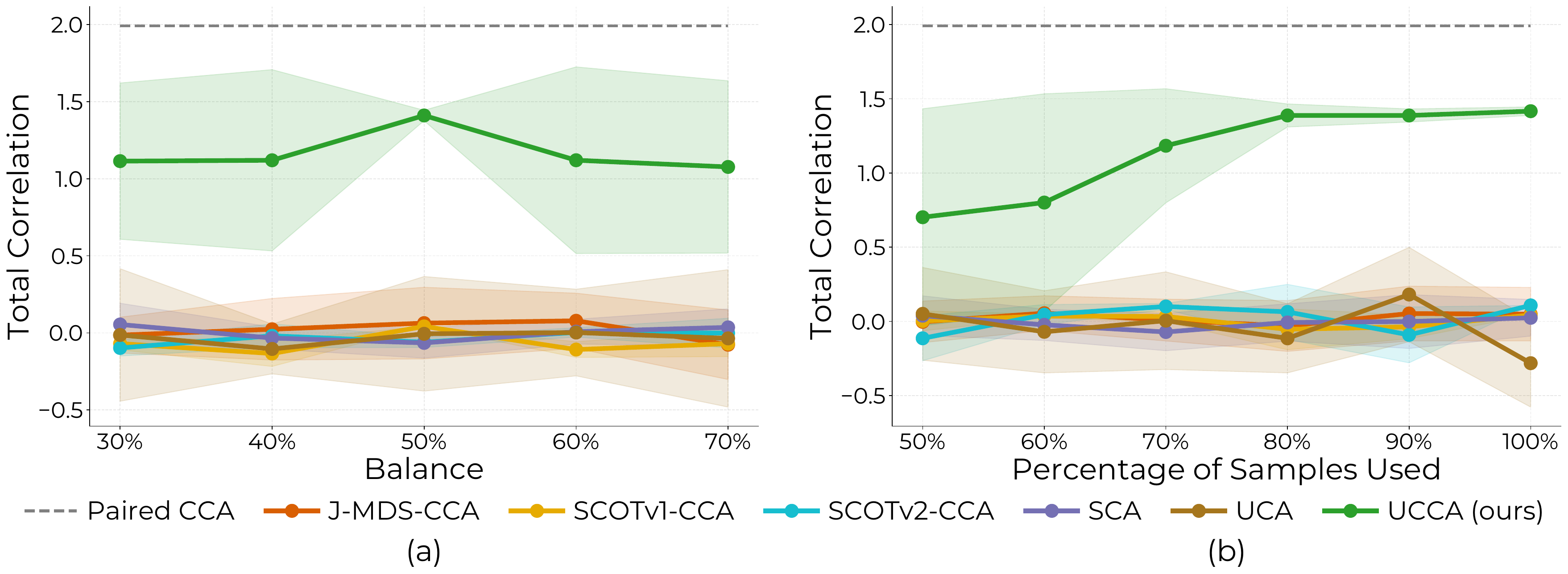}
    \vspace{-3pt}
    \caption{Total correlation performance on the Handwritten benchmark (PC-PA) across \textbf{(a) data imbalances} and \textbf{(b) varying amounts of unpaired data}. \ourmethod\ consistently outperforms baselines, achieving its peak performance when the maximum amount of data is accessible. This demonstrates that abundant unpaired data alone provides a strong alignment signal, improving performance without requiring any paired correspondences.}
    \label{fig:ablation}
\end{figure}

\paragraph{Ablations.} We further evaluate the stability of \ourmethod\ under varying data conditions using the PC-PA views of the Handwritten benchmark (Fig. \ref{fig:ablation}). Fig. \ref{fig:ablation}a illustrates the \textbf{effect of unbalanced data}, demonstrating that \ourmethod\ is robust to view imbalances and performs optimally when all data is accessible during training. Fig. \ref{fig:ablation}b examines the \textbf{effect of unpaired data amount}, showing that the consistency and performance of \ourmethod\ increase alongside the volume of available unpaired data. Together, these results demonstrate that accessible unpaired data alone provides a strong alignment signal. Additional ablations validating the choice  of K-Means centroids as anchors and the number of anchors used are provided in App. \ref{app:anchors}.

\section{Conclusions and Summary} \label{sec:conclusions}
In this paper, we introduced and formalized the problem of Unpaired Canonical Correlation Analysis. Through a rigorous theoretical investigation, we defined a proxy pairing for the unpaired scenario (\(\mcp_{O(d)}\)), and identified a practically attainable relaxation via the Quadratic Assignment Problem (\(\mcp_{S_F}\)). Crucially, we proved that under realistic data assumptions, these two pairings coincide. Grounded in these theoretical insights, we developed \ourmethod, the first practical algorithm capable of extracting canonical correlation components relying exclusively on unpaired data. Our empirical evaluations across multiple real-world datasets demonstrate that \ourmethod\ significantly outperforms existing unpaired alignment baselines in recovering the true underlying correlation components.

\paragraph{Limitations.} While \ourmethod\ represents an advancement over prior unpaired baselines, a performance gap naturally remains relative to the theoretical upper bound achieved by the paired CCA. Additionally, our current framework focuses on the bi-view setting. Extending this formulation to multi-view scenarios is a compelling direction for future research, particularly because the difficulty of acquiring cross-modal correspondences scales significantly with the number of views.

Ultimately, this work bridges a critical gap between classical statistical multiview learning and the rapidly expanding paradigm of unpaired data integration. By demonstrating that latent correlations can be systematically recovered without explicit correspondence, we open new avenues for applying principled statistical techniques to massive, unstructured datasets in domains where paired samples are prohibitively expensive or impossible to acquire.

\bibliographystyle{plain} 

\bibliography{bib}

\newpage
\appendix

\section{Full Proofs} \label{app:proofs}
This section details the full theoretical derivations and proofs.

Recall the notations:
\(X, Y \in \mathbb{R}^{n \times d}\) represent whitened datasets consisting of \(n\) samples with \(d\) features. That is, \(X\mathbf{1}_n = \mathbf{0}_d, X^TX = I_d\), and \(Y^T\mathbf{1}_n = \mathbf{0}_d, Y^TY = I_d\), where \(\mathbf{1}_n\) and \(\mathbf{0}_d\) are the \(n\)-dimensional and \(d\)-dimensional column vectors consisting of all ones and zeros, respectively.

We denote \(\mathcal{P}\) as the set of \(n\times n\) permutation matrices, defined as \[\mathcal{P} = \{P\in\{0, 1\}^{n\times n}: P^T\mathbf{1}_n = P\mathbf{1}_n = \mathbf{1}_n\}.\]
For a matrix \(A \in \mathbb{R}^{n\times d}\), \(\sigma(A) = \left( \sigma_1(A), \sigma_2(A), \dots, \sigma_d(A) \right)\) represents the vector of all singular values, ordered in descending value such that \(\sigma_1(A) \geq \sigma_2(A) \geq \cdots \geq \sigma_d(A) \geq 0\) (assuming \(n > d\)).
The Frobenius and nuclear norms are denoted by \(\lVert A \rVert_F = \sqrt{\tr(A^T A)}, \lVert A \rVert_*=\tr(\sqrt{A^T A})\), respectively.
We define the Orthogonal group
\[O(d) = \{Q \in \mathbb{R}^{d\times d}: Q^TQ = I\},\]
and the Frobenius sphere \[S_F = \{S \in \mathbb{R}^{d\times d}: \lVert S \rVert_F = 1\}.\] 

\subsection{Unpaired CCA Proxy Pairing}

\begin{definition}\label{def:tc}
    Given two whitened sets of samples \(X, Y \in \mathbb{R}^{n\times d}\), their \underline{total correlation} is
    \[\text{TC}(X, Y) = \tr(X^TY)\]
\end{definition}

We next define the two pairing families discussed in the paper. These are sets of pairings, consisting of all the maximizers of the objectives.

\begin{definition} \label{def:mcp}
    The \underline{Maximum Correlation Pairing} (MCP) is
    \[\mathcal{P}' = \text{MCP}(X, Y) = \arg\max_{P\in\mathcal{P}}\text{TC}(X, PY)\]
\end{definition}

\begin{definition} \label{def:mcpf}
    Given two linear functions classes \(\mathcal{F}_1, \mathcal{F}_2 \subseteq \mathbb{R}^{d\times d}\),
    \[\mathcal{P}' = \text{MCP}_{\mathcal{F}_1, \mathcal{F}_2}(X, Y) = \arg\max_{P \in \mathcal{P}} \max_{A \in \mathcal{F}_1, B \in \mathcal{F}_2} \text{TC}(XA, PYB)\]
\end{definition}

As established in the main text, our objective requires evaluating pairings under valid CCA projections, necessitating optimization over the orthogonal group, \(O(d)\). For brevity, we fix \(\mathcal{F}_2 = O(d)\), and denote in short \(\mcp_{\mathcal{F}}(X,Y) = \mcp_{\mathcal{F}, O(d)}\). While the resulting pairing, \(\mcp_{O(d)}\), naturally aligns with our correlation maximization goals, directly optimizing over orthogonal matrices alongside permutations is computationally prohibitive. To render the problem tractable, we relax the function class from the orthogonal group to its superset, the scaled Frobenius sphere, \(\sqrt{d}S_{F}\). This relaxation yields \(\mcp_{S_{F}}\), which we next show is a computationally attainable proxy pairing.

\subsection{Connection to QAP}

A crucial property of both \(\mcp_{O(d)}\) and \(\mcp_{S_{F}}\) is that they can be equivalently reformulated as maximization problems exclusively over the permuted cross-correlation matrix, \(M_{P} = X^{T}PY\). This reformulation is highly advantageous: it implicitly absorbs the projection function class into the objective's norm, entirely circumventing the need to explicitly parameterize or compute the continuous transformations. We formalize these equivalences in Prop. \ref{prop:od_nuclear} and \ref{prop:sf_frob}.

\begin{proposition} \label{prop:od_nuclear}
    \[\text{MCP}_{O(d)} = \arg\max_{P\in\mathcal{P}} \lVert X^T P Y \rVert_*\]
    where \(\lVert \cdot \rVert_*\) is the Nuclear norm.
\end{proposition}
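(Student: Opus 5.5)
The plan is to show that, for every fixed permutation \(P\in\mathcal{P}\), the inner maximum in Definition \ref{def:mcpf} with \(\mathcal{F}_1=\mathcal{F}_2=O(d)\) equals \(\lVert X^TPY\rVert_*\). Once this holds, the outer \(\arg\max\) over \(\mathcal{P}\) is taken of the same function, so the two sets of maximizers coincide. This also covers the non-unique case, because both sides are defined as the full set of maximizers.

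First I rewrite the inner objective. By Definition \ref{def:tc} and the cyclic property of the trace,
\[
\tc(XA, PYB) = \tr(A^T X^T P Y B) = \tr(B A^T M_P), \qquad M_P = X^TPY.
\]
As \(A\) and \(B\) range over \(O(d)\), the product \(R = BA^T\) ranges over all of \(O(d)\): every \(R\) is attained with \(A=I\), \(B=R\), and every product of orthogonal matrices is orthogonal. The inner maximum therefore reduces to
\[
\max_{R\in O(d)} \tr(R M_P).
\]
This reduction is also consistent with the convention that fixes only \(\mathcal{F}_2=O(d)\), since it is the same quantity.

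Next I evaluate this maximum, which is the classical von Neumann / orthogonal Procrustes step. Take an SVD \(M_P = W\Sigma Z^T\) with \(W,Z\in O(d)\) and \(\Sigma=\operatorname{diag}(\sigma(M_P))\). Then
\[
\tr(RM_P) = \tr(Z^T R W \Sigma) = \sum_i (Z^TRW)_{ii}\,\sigma_i(M_P).
\]
The matrix \(Z^TRW\) is orthogonal, so each diagonal entry has absolute value at most \(1\). Hence \(\tr(RM_P)\le \sum_i \sigma_i(M_P) = \lVert M_P\rVert_*\). Equality is attained at \(R = ZW^T\), which gives \(\tr(ZW^TW\Sigma Z^T)=\tr(\Sigma)\). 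It remains to confirm that \(\tr(\sqrt{M_P^TM_P})=\sum_i\sigma_i(M_P)\), which matches the paper's definition of \(\lVert\cdot\rVert_*\).

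The main obstacle is only bookkeeping. The step to watch is that the change of variables \((A,B)\mapsto BA^T\) is onto \(O(d)\), so that nothing is lost by collapsing two parameters into one. I would state explicitly that \(\max_{A,B}\) equals \(\max_R\) because the objective depends on \((A,B)\) only through \(BA^T\). With that in place, the equality of \(\arg\max\) sets follows immediately.
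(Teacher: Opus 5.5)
Your proposal is correct and follows essentially the same route as the paper. You collapse the two orthogonal factors into a single $R=BA^T$ (the paper's $Q^T$ with $Q=UV^T$), bound $\tr(RM_P)$ by $\lVert M_P\rVert_*$, and attain the bound at the orthogonal polar factor of the SVD; the only differences are that you prove the bound directly from the diagonal entries of the orthogonal matrix $Z^TRW$ instead of citing von Neumann's trace inequality, and you state explicitly the surjectivity of $(A,B)\mapsto BA^T$ and the argmax-set bookkeeping, which the paper leaves implicit.
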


\begin{proof}
    \[\text{MCP}_{O(d)} = \arg\max_{P \in \mathcal{P},\ U, V \in \mathcal{O}(d)} \text{TC}(XU, PYV) = \arg\max_{P\in\mathcal{P}}\max_{U, V\in O(d)} \tr(U^T X^T P Y V)\]
    \[\overset{(*)}{=} \arg\max_{P\in\mathcal{P}} \lVert X^T P Y \rVert_* = \arg\max_{P\in\mathcal{P}}\sum_{i=1}^d \sigma_i(M_P)\]

    To prove \((*)\), we'll use Von Neumann's trace inequality. Denote \(Q = UV^T\). Notice that
    \[\tr(U^T X^T P Y V) = \tr(V U^T X^T P Y) = \tr(Q^T X^T P Y)\]
    For any \(Q \in O(d)\)
    \[\tr(Q^T X^T P Y) = \tr(Q^T M_P) \leq \sum_{i=1}^d \sigma_i(Q)\sigma_i(M_P)\]
    As \(Q\) is orthogonal, \(\sigma_i(Q) = 1\) for all \(i\). Hence
    \[\tr(Q^T M_P) \leq \sum_{i=1}^d \sigma_i(M_P) = \lVert M_P \rVert_* = \lVert X^T P Y \rVert_*\]

    As for achieving the maximum, consider \(M_P = U_M\Sigma V_M^T\) the SVD of \(M_P\). Then, \(Q^* = U_MV_M^T\) satisfies
    \[\tr({Q^*}^T M_P) = \tr((U_MV_M^T)^T U_M\Sigma V_M^T) = \tr(V_MU_M^T U_M \Sigma V_M^T)\]
    \[= \tr(V_M\Sigma V_M^T) = \tr(V_M^T V_M\Sigma) = \tr(\Sigma) = \lVert M_P\rVert_* = \lVert X^T P Y\rVert_*\]
\end{proof}

\begin{proposition} \label{prop:sf_frob}
    For all \(c > 0\),
    \[\text{MCP}_{c\cdot S_F} = \arg\max_{P\in\mathcal{P}} \lVert X^T P Y \rVert_F\]
    where \(\lVert \cdot \rVert_F\) is the Frobenius norm.
\end{proposition}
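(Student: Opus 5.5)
We will mirror the proof of Prop.~\ref{prop:od_nuclear}, with the Cauchy--Schwarz inequality taking the place of Von Neumann's trace inequality. By Definition~\ref{def:mcpf}, with \(\mathcal{F}_1 = c\cdot S_F\) and \(\mathcal{F}_2 = O(d)\),
\[\mcp_{c\cdot S_F}(X,Y) = \arg\max_{P\in\mathcal{P}} \max_{A\in c S_F,\, V\in O(d)} \tr(A^T X^T P Y V).\]

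The first step is to remove the orthogonal factor. Using the cyclic property of the trace, \(\tr(A^T M_P V) = \tr\bigl((AV^T)^T M_P\bigr)\). The map \(A \mapsto AV^T\) is a bijection of \(c S_F\) onto itself, since right multiplication by an orthogonal matrix preserves the Frobenius norm. Hence, for every fixed \(V\),
\[\max_{A\in cS_F}\tr(A^T M_P V) = \max_{S\in cS_F}\tr(S^T M_P),\]
and the inner maximum over \(V\) is trivial.

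The second step evaluates \(\max_{S\in cS_F} \langle S, M_P\rangle_F\). By Cauchy--Schwarz for the Frobenius inner product, \(\tr(S^T M_P) \le \lVert S\rVert_F \lVert M_P\rVert_F = c\lVert M_P\rVert_F\). When \(M_P \neq 0\), equality is attained at \(S^* = c\,M_P/\lVert M_P\rVert_F\), which lies in \(cS_F\). When \(M_P = 0\), both sides equal \(0\) for any \(S\). The inner optimum is therefore exactly \(c\lVert X^TPY\rVert_F\).

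Finally, multiplying the objective by the positive constant \(c\) does not change the set of maximizers over \(\mathcal{P}\), so \(\mcp_{c\cdot S_F} = \arg\max_{P\in\mathcal{P}}\lVert X^TPY\rVert_F\), independently of \(c\).

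The only point needing care is that the maximizer is taken as a set. Because each inner maximum equals \(c\) times a fixed function of \(P\), the argmax sets coincide exactly, with no uniqueness assumption required. The main obstacle, such as it is, is the reduction in the first step: one must check that the orthogonal freedom on the \(Y\) side is fully absorbed by the rotational invariance of the Frobenius sphere.
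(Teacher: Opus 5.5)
Your proof is correct and uses the same two ingredients as the paper's, Cauchy--Schwarz for the Frobenius inner product and orthogonal invariance of the Frobenius norm, but in the opposite order: the paper applies Cauchy--Schwarz for fixed \(U\) to get \(c\lVert X^TPYU\rVert_F\) and then drops \(U\), whereas you first absorb \(V\) into the sphere and then apply Cauchy--Schwarz. Your explicit handling of attainment, the case \(M_P = 0\), and the argmax as a set is somewhat more careful than the paper's version.
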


\begin{proof}
    \[\text{MCP}_{c\cdot S_F} = \arg\max_{P\in\mathcal{P}}\max_{S\in c\cdot S_F, U \in O(d)} \text{TC}(XS, PYU) = \arg\max_{P\in\mathcal{P}}\max_{S\in c\cdot S_F, U \in O(d)} \tr(S^T X^T P Y U)\]
    \[ = \arg\max_{P\in\mathcal{P}}\max_{S\in c\cdot S_F, U \in O(d)} \langle S, X^T P Y U\rangle_F\]
    By the Cauchy-Schwarz inequality, this is
    \[= \arg\max_{P\in\mathcal{P}, U \in O(d)}  c\lVert X^T P Y U \rVert_F = \arg\max_{P\in\mathcal{P}}  c\lVert X^T P Y \rVert_F\]
    \[= \arg\max_{P\in\mathcal{P}}  \lVert X^T P Y \rVert_F = \arg\max_{P\in\mathcal{P}}\sum_{i=1}^d \sigma_i^2(M_P)\]
\end{proof}

We can now establish our first main theoretical result: the exact equivalence between \(\mcp_{S_F}\) and the QAP, formalized in Thm. \ref{thm:mcp_qap}.

\newcounter{oldTheorem}
\setcounter{oldTheorem}{\value{theorem}}
\setcounter{theorem}{0}

\begin{theorem}
    Denoting QAP the quadratic assignment problem,
    \[\text{MCP}_{\sqrt{d} S_F}(X, Y) = \text{MCP}_{S_F}(X, Y) = QAP(XX^T, YY^T)\]
\end{theorem}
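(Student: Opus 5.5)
The plan is to combine Prop.~\ref{prop:sf_frob} with a direct expansion of the squared Frobenius norm into a QAP objective.

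\textbf{Step 1: remove the scale.} By Prop.~\ref{prop:sf_frob}, for every $c>0$ we have $\mcp_{c\cdot S_F}(X,Y)=\arg\max_{P\in\mathcal{P}}\lVert X^TPY\rVert_F$. The right-hand side does not depend on $c$. Taking $c=\sqrt{d}$ and $c=1$ therefore gives
\[
\mcp_{\sqrt{d}S_F}(X,Y)=\mcp_{S_F}(X,Y)=\arg\max_{P\in\mathcal{P}}\lVert X^TPY\rVert_F .
\]
This settles the first equality.

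\textbf{Step 2: square the objective and expand.} Since $t\mapsto t^2$ is strictly increasing on $t\ge 0$, maximizing $\lVert X^TPY\rVert_F$ is the same as maximizing $\lVert X^TPY\rVert_F^2$. Expanding, and using cyclicity of the trace,
\[
\lVert X^TPY\rVert_F^2=\tr\!\left(Y^TP^TXX^TPY\right)=\tr\!\left(XX^T\,PYY^TP^T\right).
\]
Set $A=XX^T$ and $B=YY^T$. Both are symmetric, so $A^T=A$ and $PB^TP^T=PBP^T$. The last expression is then exactly $\tc(A^T,PB^TP^T)=\tr(A\,PBP^T)$, which is the matrix form of the QAP objective in the Background section. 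Hence the argmax equals $\qap(XX^T,YY^T)$.

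\textbf{Step 3: check the index convention.} The trace identity is routine, so the only real care point is matching the matrix form to the index form of Eq.~\eqref{eq:qap_kb}. Writing $P_{\pi(i),i}=1$, one checks that $(PBP^T)_{\pi(i)\pi(j)}=B_{ij}$. Therefore
\[
\tr(A\,PBP^T)=\sum_{i,j}B_{ij}A_{\pi(i)\pi(j)},
\]
using symmetry of $A$. This is the bookkeeping step where a transpose could go astray, and the symmetry of both Gram matrices resolves it.

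Finally, I would note that the argmax is read as a set of maximizers, as in the appendix definitions, so that uniqueness is not needed.
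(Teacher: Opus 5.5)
Your proposal is correct and matches the paper's proof step for step: the scale-independence in Prop.~\ref{prop:sf_frob} gives the first equality, and squaring the nonnegative Frobenius norm and applying trace cyclicity gives $\tr(XX^T P YY^T P^T)$, the Koopmans--Beckmann objective. Your extra index-level check that $\tr(APBP^T)=\sum_{i,j}B_{ij}A_{\pi(i)\pi(j)}$, using symmetry of the Gram matrices, is correct; the paper leaves this step implicit.
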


\setcounter{theorem}{\value{oldTheorem}}

\begin{proof}
    For the left equality, by Prop. \ref{prop:sf_frob}, for any \(c > 0\),
    \[\text{MCP}_{c\cdot S_F} = \text{MCP}_{S_F}\]
    This is true in particular for \(c = \sqrt{d}\).

    We now establish the formal equivalence between the relaxed objective \(\mcp_{S_F}\) and the well-studied QAP. Recognizing that \(\lVert X^T P Y \rVert_F \geq 0\) for any permutation \(P\), we can maximize its squared value:
    \[\text{MCP}_{S_F} = \arg\max_{P\in\mathcal{P}}  \lVert X^T P Y \rVert_F^2 = \arg\max_{P\in\mathcal{P}}  \tr((X^T P Y)^T(X^T P Y))\]
    \[= \arg\max_{P\in\mathcal{P}}  \tr(Y^TP^TXX^T P Y)\]

    Applying the cyclic property of the trace operator, we get:
    \[= \arg\max_{P\in\mathcal{P}}  \tr(XX^T P YY^TP^T)\]

    Defining the kernels \(K_X = XX^T, K_Y = YY^T\), that is
    \[= \arg\max_{P\in\mathcal{P}}  \tr(K_X P K_Y P^T)\]

    The expression above is the exact standard Koopmans-Beckmann formulation of the QAP with matrices \(K_X\) and \(K_Y\). Hence:
    \[\text{MCP}_{S_F} = QAP(XX^T, YY^T)\]

\end{proof}

\subsection{Equivalence of \(\text{MCP}_{O(d)}\) and \(\text{MCP}_{S_F}\)}

Recall that \(M_P = X^T P Y\) denotes the cross-correlation matrix of \(X\) and \(Y\) under permutation \(P\). Prop. \ref{prop:od_nuclear} translates into an optimization over \(\sigma(M_P)\), the singular values of \(M_P\):

\begin{equation} \label{eq:od_svals}
    \mcp_{O(d)}(X, Y) = \arg\max_{P\in\mathcal{P}} \sum_{i=1}^d \sigma_i(M_P),
\end{equation}

Denote \(P_O = \mcp_{O(d)}\) and \(P \ne \mcp_{O(d)}\), for brevity. By Def. \ref{def:mcpf} and Eq. \eqref{eq:od_svals}, the sum of singular values under \(P_O\) is strictly maximal:
\begin{equation}
    \sum_{i=1}^d \sigma_i(M_{P_O}) > \sum_{i=1}^d\sigma_i(M_{P})
\end{equation}

In Assumption \ref{assum:weak_majorization} below, we assume this inequality holds not just for the total sum, but is preserved across all partial sums.

\newcounter{oldAssumption}
\setcounter{oldAssumption}{\value{assumption}}
\setcounter{assumption}{0}

\begin{assumption} \label{assum:weak_majorization}
    Let \(P_O \in \mcp_{O(d)}\) and \(P \notin \mcp_{O(d)}\), then \(\sigma(M_{P_O})\) weakly majorizes \(\sigma(M_{P})\). Namely,
    \[\sum_{i=1}^m \sigma_i(M_{P_O}) \geq \sum_{i=1}^m \sigma_i(M_{P})\quad \forall m \in \{1, 2, \dots, d\}\]
\end{assumption}

\setcounter{assumption}{\value{oldAssumption}}

Assumption \ref{assum:weak_majorization} imposes a relatively mild constraint on the spectral decay of the permuted cross-correlation matrices. In the following propositions, we demonstrate its theoretical validity across two indicative scenarios. First, Prop. \ref{prop:finite_assum_support} establishes that in the finite-sample regime, the assumption holds strictly if \(Y\) is an isometry of \(X\) (\(Y = XQ\) for \(Q \in O(d)\)). Second, transitioning to the asymptotic regime, Prop. \ref{prop:infinite_assum_support} proves that as the sample size \(n\to\infty\), the \(\mcp_{O(d)}\) pairing weakly majorizes any uniformly sampled permutation with probability 1.

\begin{proposition} \label{prop:finite_assum_support}
    Let \(P'Y = XQ\) for \(P' \in \mathcal{P}, Q \in O(d)\). Then Assum. \ref{assum:weak_majorization} holds.
\end{proposition}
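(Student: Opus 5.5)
Since $P'Y = XQ$ with $P'$ a permutation, we have $Y = P'^T X Q$, and therefore
\[M_P = X^T P Y = X^T P P'^T X Q .\]
Right multiplication by the orthogonal $Q$ leaves singular values unchanged, so $\sigma(M_P) = \sigma(X^T R X)$ with $R = P P'^T \in \mathcal{P}$. As $P$ ranges over $\mathcal{P}$, $R$ ranges over all of $\mathcal{P}$. The plan has two steps: first identify $\mcp_{O(d)}$, then verify the partial-sum inequalities for every other $P$.

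\textbf{Step 1: identifying the maximizers.} For any $R \in \mathcal{P}$, every singular value satisfies
\[\sigma_i(X^T R X) \le \lVert X^T R X\rVert_2 \le \lVert X\rVert_2 \lVert R\rVert_2 \lVert X\rVert_2 = 1,\]
using $X^TX = I_d$ (so $\lVert X\rVert_2 = 1$) and $\lVert R\rVert_2 = 1$. At $R = I$, i.e. $P = P'$, we get $M_{P'} = X^TXQ = Q$, whose singular values are all equal to $1$. Hence $\lVert M_P\rVert_* \le d = \lVert M_{P'}\rVert_*$. By Prop.~\ref{prop:od_nuclear}, $P' \in \mcp_{O(d)}$, and every $P_O \in \mcp_{O(d)}$ must attain nuclear norm $d$. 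Since each singular value is at most $1$, this forces $\sigma(M_{P_O}) = (1,\dots,1)$ for all maximizers, not only for $P'$.

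\textbf{Step 2: the majorization.} For any $P \notin \mcp_{O(d)}$ and any $m \in \{1,\dots,d\}$, the bound from Step~1 gives
\[\sum_{i=1}^m \sigma_i(M_P) \le m = \sum_{i=1}^m \sigma_i(M_{P_O}),\]
which is exactly Assum.~\ref{assum:weak_majorization}.

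The only delicate point is Step~1. Non-uniqueness of the maximizer is harmless, because every maximizer shares the all-ones spectrum. The bound $\sigma_1 \le 1$ for every permutation is what makes the argument work; it relies on the whitening assumption $X^TX = I_d$ stated in the appendix notation, and that assumption must be invoked explicitly.
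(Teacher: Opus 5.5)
Your proposal is correct and follows essentially the same route as the paper's proof. Both bound every singular value of $M_P = X^T P P'^{T} X Q$ by $1$ via submultiplicativity and $X^TX = I_d$, note that $M_{P'} = Q$ attains nuclear norm $d$, conclude that every maximizer has the all-ones spectrum, and then read off the partial-sum inequalities; your explicit treatment of non-uniqueness and the reduction to $R = PP'^{T}$ (harmless but not needed) are only cosmetic differences.
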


\begin{proof}
    Let \(M_P = X^T P Y = X^T P P'^{-1} XQ\). Because \(X^T X = I_d\), and \(P, P', Q\) are orthogonal matrices, their induced \(2\)-norms are all \(1\). By sub-multiplicativity, the spectral norm of \(M_P\) is strictly bounded:
    \[\sigma_1(M_P) = \lVert X^T P P'^{-1}XQ \rVert_2 \leq \lVert X \rVert_2^2 \lVert P \rVert_2 \lVert P'^{-1} \rVert_2 \lVert Q \rVert_2 = 1\]
    This indicates that \(\sigma_i(M_P) \leq \sigma_1(M_P) \leq 1\) for all \(i \in \{1, \dots, d\}\) and any permutation \(P\).

    Now, notice that for the permutation \(P' \in \mathcal{P}\) we get \(M_{P'} = X^T P' P'^{-1} X Q = X^T X Q = Q\). Since \(Q \in O(d)\), its singular values are exactly \(\sigma_i(M_{P'}) = 1\) for all \(i\).

    By Prop. \ref{prop:od_nuclear}, \(P_O\) maximizes the nuclear norm \(\sum_{i=1}^d \sigma_i(M_P)\). Since every \(\sigma_{i} \le 1\), the theoretical upper bound of this sum is exactly \(d\). Because the permutation \(P'\) successfully achieves this strict upper bound, it follows that the global maximizer \(P_{O}\) must also achieve it, strictly requiring that \(\sigma_{i}(M_{P_{O}}) = 1\) for all \(i\).

    Therefore, for any \(m \in \{1, \dots, d\}\) and any \(P\), it follows that:
    \[\sum_{i=1}^m \sigma_i(M_{P_O}) = m \geq \sum_{i=1}^m \sigma_i(M_P)\]
\end{proof}

\begin{lemma} \label{lemma:assum_by_n}
    Let \(P \sim \mathcal{U}(\mathcal{P})\) be a permutation matrix chosen uniformly at random from the permutation group. Then, for any \(m \in \{1, \dots, d\}\):
    \[\mathbb{P}\left(\sum_{i=1}^m\sigma_i(M_{P_O}) > \sum_{i=1}^m\sigma_i(M_{P}) \right) = 1 - \frac{1}{\log(n)}\]
\end{lemma}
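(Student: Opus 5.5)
The plan is to read the displayed equality as the lower bound $\geq 1-\frac{1}{\log(n)}$ for all sufficiently large $n$. An exact equality cannot hold for every $(X,Y)$, so this is the only meaningful version. The idea is to show that the top partial sums of $\sigma(M_P)$ for a uniformly random $P$ concentrate near zero at rate $O(n^{-1/2})$, while those of $M_{P_O}$ stay bounded away from zero. A polynomial tail bound in $n$ then beats $1/\log(n)$.

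\textbf{Step 1 (the random permutation).} Write $M_P=\sum_{i} x_i y_{\pi(i)}^T$, where $x_i,y_j$ are the rows of $X,Y$.
\begin{itemize}
\item \emph{Mean.} Since $\mathbb{E}[P]=\frac1n\mathbf{1}_n\mathbf{1}_n^T$ and the data are centered ($X^T\mathbf{1}_n=Y^T\mathbf{1}_n=\mathbf{0}_d$), we get $\mathbb{E}[M_P]=\frac1n X^T\mathbf{1}_n\mathbf{1}_n^TY=0$.
\item \emph{Second moment.} Compute $\mathbb{E}\lVert M_P\rVert_F^2=\mathbb{E}\,\tr(XX^TPYY^TP^T)$ using the second-order moments of a uniform permutation. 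Together with centering and $\tr(XX^T)=\tr(YY^T)=d$, this gives the classical permutation-test identity $\mathbb{E}\lVert M_P\rVert_F^2 = \frac{1}{n-1}\bigl(\tr(XX^T)\tr(YY^T)-\text{diag corrections}\bigr)\le \frac{d^2}{n-1}$.
\item \emph{Tail bound.} Since $\sum_{i=1}^m\sigma_i(M_P)\le\sqrt m\,\lVert M_P\rVert_F$, Markov's inequality gives
\[\mathbb{P}\Bigl(\sum_{i=1}^m\sigma_i(M_P)\ge t\Bigr)\le\frac{m\,d^2}{(n-1)t^2}.\]
\end{itemize}

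\textbf{Step 2 (the optimal pairing).} The sorted singular values satisfy $\sum_{i=1}^m\sigma_i\ge\frac md\sum_{i=1}^d\sigma_i$. Combining this with Prop.~\ref{prop:od_nuclear}, which says $P_O$ maximizes the nuclear norm, gives
\[\sum_{i=1}^m\sigma_i(M_{P_O})\ge\frac md\lVert M_{P_O}\rVert_*\ge\frac md\lVert M_{P^*}\rVert_*.\]
I would then add the standing modeling assumption that the ground-truth pairing carries a nondegenerate correlation: $\lVert M_{P^*}\rVert_*\ge c>0$ uniformly in $n$, for instance because the empirical canonical correlations converge to positive population values. This gives $t:=\frac{mc}{d}$.

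\textbf{Step 3 (combine).} Plugging $t$ into Step 1, the failure probability is at most $\frac{d^4}{m\,c^2(n-1)}$. This is $\le 1/\log(n)$ once $n$ is large enough, because $\log(n)/(n-1)\to0$. Conditioning on $P\ne P_O$ changes this only by an $O(1/n!)$ term.

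The main obstacle is Step 2. Nothing in the paper's setup so far gives a lower bound on $\lVert M_{P_O}\rVert_*$ that does not decay with $n$. The argument therefore needs the positive-correlation assumption on $P^*$ made explicit, or else a substitute. The first substitute I would try uses only $P_O$'s optimality together with a lower bound on $\max_P\lVert M_P\rVert_F$ obtained by matching the top singular directions of $X$ and $Y$. A secondary technical point is the exact permutation second-moment computation in Step 1. It is routine but must use both the whitening $X^TX=Y^TY=I_d$ and the centering to make the cross terms vanish.
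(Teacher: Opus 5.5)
Your proposal is correct and is essentially the paper's proof. The paper also computes $\mathbb{E}_P\lVert M_P\rVert_F^2 = d^2/(n-1)$ (exactly, with no correction terms once the data are centered), then applies Markov's inequality and the bound $\sum_{i\le m}\sigma_i(M_P)\le\sqrt{m}\,\lVert M_P\rVert_F$, and compares against an $n$-independent lower bound on $\sum_{i\le m}\sigma_i(M_{P_O})$. The only cosmetic difference is the Markov threshold: the paper uses $\log(n)$ times the mean, while your fixed threshold gives the stronger $O(1/n)$ failure rate.

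The step you flag as the main obstacle is exactly where the paper merely asserts, from ``non-degeneracy'', that $\sigma_1(M_{P_O})\ge\gamma>0$ independently of $n$. Your explicit hypothesis $\lVert M_{P^*}\rVert_*\ge c$ implies $\sigma_1(M_{P_O})\ge c/d$, so it plays the same role and is the right way to state it. Whitening alone cannot supply such a bound: for $d=1$ and even $n$, a centered unit spike $x$ and a balanced $\pm n^{-1/2}$ vector $y$ give $|x^{T}Py|=O(n^{-1/2})$ for every $P$. Your fallback via a lower bound on $\max_P\lVert M_P\rVert_F$ would therefore not work unconditionally.
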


\begin{proof}
    First note that because \(X\) and \(Y\) are non-degenerate whitened matrices, the optimal pairing \(P_O\) that maximizes the sum of singular values of the permuted cross-correlation matrix \(M_P = \frac{1}{n}X^T P Y\) is guaranteed to capture cross-correlation. Consequently, the leading singular value of the optimal cross-correlation matrix, \(\sigma_1(M_{P_O})\), is naturally bounded away from zero by a constant \(\gamma > 0\) that is independent of \(n\). As singular values are non-negative, for all \(m\), the partial sum of the first \(m\) singular values of \(M_{P_O} = \frac{1}{n}X^T P Y\) is bounded away from zero by a constant independent of \(n\):
    \[\sum_{i = 1}^m \sigma_i(M_{P_O}) = \gamma_m > 0\]
    
    We begin by evaluating the expected squared Frobenius norm of the cross-correlation matrix under a random permutation, \(\mathbb{E}_P[\lVert M_P \rVert_F^2]\).

    By definition, the element at the \(k\)-th row and \(l\)-th column of the unnormalized matrix \(nM_P\) is given by:
    \[\left( X^TPY \right)_{kl} = \sum_{i=1}^n X_{ik}Y_{\pi(i)l}\]
    where \(\pi\) is the permutation function corresponding to \(P\). We compute the expected square of this entry:
    \[\mathbb{E}_P\left[ \left( X^TPY \right)^2_{kl} \right] = \mathbb{E}_\pi\left[ \sum_{i=1}^n\sum_{j=1}^n X_{ik}Y_{\pi(i)l}X_{jk}Y_{\pi(j)l} \right]\]

    We split this double sum into diagonal (\(i = j\)) and off-diagonal (\(i \neq j\)) terms:
    \[\mathbb{E}_P\left[ \left( X^TPY \right)^2_{kl} \right] = \sum_{i=1}^n X_{ik}^2 \mathbb{E}_\pi\left[ Y^2_{\pi(i)l} \right] + \sum_{i\neq j} X_{ik}X_{jk} \mathbb{E}_\pi\left[ Y_{\pi(i)l} Y_{\pi(j)l} \right]\]

    Because \(\pi(i)\) is uniformly distributed over \(\{1, \dots, n\}\), the expectation of the squared term is simply the population average, which by whitening is:
    \[\mathbb{E}_\pi\left[ Y^2_{\pi(i)l} \right] = \frac{1}{n}\sum_{t=1}^n Y_{tl}^2 = 1\]

    For the off-diagonal term, the indices \(\pi(i)\) and \(\pi(j)\) are drawn uniformly without replacement. Therefore:
    \[\mathbb{E}_\pi\left[ Y_{\pi(i)l} Y_{\pi(j)l} \right] = \frac{1}{n(n-1)}\sum_{s\neq t}Y_{sl}Y_{tl}\]

    We can rewrite the sum of the cross-product using the square of the total sum
    \[\sum_{s\neq t}Y_{sl}Y_{tl} = \left( \sum_{s=1}^n Y_{sl} \right)^2 - \sum_{s=1}^n Y^2_{sl}\]

    By the whitening, \(\sum_{s=1}^n Y_{sl} = 0\). Therefore, the sum of the cross-products simplifies to \(-n\). This yields:
    \[\mathbb{E}_\pi\left[ Y_{\pi(i)l} Y_{\pi(j)l} \right] = \frac{-n}{n(n-1)} = \frac{-1}{n-1}\]

    Similarly, because \(X\) is whitened, we know that \(\sum_{i} X_{ik}^2 = n\) and \(\sum_{i\neq j} X_{ik}X_{jl} = -n\). Substituting these back into the expectation:
    \[\mathbb{E}_P\left[ \left( X^TPY \right)^2_{kl} \right] = n\cdot 1 + (-n)\left( \frac{-1}{n-1} \right) = n + \frac{n}{n-1} = \frac{n^2}{n-1}\]

    Normalizing by \(\frac{1}{n^2}\) to recover \(M_P\), the expected squared entry is:
    \[\mathbb{E}_P\left[ (M_P)^2_{kl} \right] = \frac{1}{n-1}\]

    Summing over all \(d\times d\) elements, the expected squared Frobenius norm is strictly:
    \[\mathbb{E}_P\left[ \lVert M_P\rVert^2_F \right] = \frac{d^2}{n-1}\]

    Now, by Markov's inequality, we can bound the probability that the Frobenius energy of the random pairing is large. For any \(t > 0\):
    \[\mathbb{P}\left( \lVert M_P \rVert^2_F \geq t\frac{d^2}{n-1} \right) \leq \frac{1}{t}\]

    Let us choose a slowly growing function, such as \(t = \log(n)\), to ensure the bound becomes tight while the probability of failure approaches zero. Thus, with high probability, \(\lVert M_P \rVert_F = O\left(\sqrt{\frac{\log(n)}{n}} \right)\).

    We now relate the Frobenius norm to the partial sums of singular values using the Cauchy-Schwarz inequality. For any \(m \leq d\):
    \[\sum_{i=1}^m \sigma_i(M_P) \leq \sqrt{m} \sqrt{\sum_{i=1}^m \sigma_i^2(M_P)} \leq \sqrt{m}\lVert M_P \rVert_F\]

    Consequently, for a random permutation, the partial sum of singular values decays as:
    \[\sum_{i=1}^m \sigma_i(M_P) \leq d\sqrt{\frac{m\log(n)}{n-1}}\]

    We assumed \(\sum_{i=1}^n \sigma_i(M_{P_O}) = \gamma_m > 0\).
    
    As the bound from the equation above strictly decreases towards \(0\) as \(n\to \infty\), there exist sufficiently large \(N\) such that for all \(n > N\), \(\gamma_m > d\sqrt{\frac{m\log(n)}{n-1}}\).

    The probability from above was bounded by \(\frac{1}{t} = \frac{1}{\log(n)}\), hence, for any \(m \leq d\),
    \[\mathbb{P}\left(\sum_{i=1}^m\sigma_i(M_{P_O}) > \sum_{i=1}^m\sigma_i(M_{P}) \right) = 1 - \frac{1}{\log(n)}\]
\end{proof}

\begin{proposition} \label{prop:infinite_assum_support}
    Let \(P \sim \mathcal{U}(\mathcal{P})\) be a permutation matrix chosen uniformly at random from the permutation group. Then:
    \[\lim_{n\to\infty}\mathbb{P}\left( \sigma(M_{P_O}) \ {}_{\mathrm{w}}\!\succ \sigma(M_P) \right) = 1\]
\end{proposition}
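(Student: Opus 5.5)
The natural route is a union bound over the \(d\) partial sums, combined with Lemma \ref{lemma:assum_by_n}. Weak majorization \(\sigma(M_{P_O}) \ {}_{\mathrm{w}}\!\succ \sigma(M_P)\) means that the \(d\) inequalities
\[\sum_{i=1}^m \sigma_i(M_{P_O}) \geq \sum_{i=1}^m \sigma_i(M_P), \qquad m = 1,\dots,d,\]
hold simultaneously. Write \(E_m\) for the event that the \(m\)-th inequality holds strictly. The intersection \(\bigcap_{m=1}^d E_m\) is contained in the majorization event, so
\[\mathbb{P}\left(\sigma(M_{P_O}) \ {}_{\mathrm{w}}\!\succ \sigma(M_P)\right) \;\geq\; \mathbb{P}\Big(\bigcap_{m=1}^d E_m\Big) \;\geq\; 1 - \sum_{m=1}^d \mathbb{P}(E_m^c).\]

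Next I invoke Lemma \ref{lemma:assum_by_n} for each \(m\). In the regime \(n > N\) identified in its proof, it gives \(\mathbb{P}(E_m^c) \le 1/\log(n)\). The lemma is phrased with an equality, but its proof only uses Markov's inequality, so an upper bound on the failure probability is what it actually delivers. We therefore get
\[\mathbb{P}\left(\sigma(M_{P_O}) \ {}_{\mathrm{w}}\!\succ \sigma(M_P)\right) \;\geq\; 1 - \frac{d}{\log(n)}.\]
Here \(d\) is fixed and independent of \(n\), so the bound tends to \(1\) as \(n\to\infty\). Since probabilities are at most \(1\), the limit equals \(1\).

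A cleaner alternative avoids the union bound entirely. The Markov step in the lemma's proof uses a single event, \(\lVert M_P\rVert_F^2 < \log(n)\, d^2/(n-1)\), which does not depend on \(m\). On that one event, Cauchy--Schwarz gives
\[\sum_{i=1}^m \sigma_i(M_P) \;\le\; d\sqrt{\frac{m\log(n)}{n-1}} \;\le\; d\sqrt{\frac{d\log(n)}{n-1}}\]
for every \(m\) at once. Then I only need
\[\min_m \gamma_m \;=\; \gamma_1 \;=\; \sigma_1(M_{P_O}) \;\ge\; \gamma\]
to exceed this quantity. The equality \(\min_m \gamma_m = \gamma_1\) holds because the partial sums \(\gamma_m\) are nondecreasing in \(m\). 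This yields probability at least \(1 - 1/\log(n)\) for all \(m\) simultaneously, and I would present this version.

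The main obstacle is the threshold \(N\): it must be uniform in \(m\), and the constant \(\gamma\) must genuinely be independent of \(n\). Uniformity is handled by the monotonicity of \(\gamma_m\) just noted, which lets the single \(m\)-independent lower bound \(\gamma\) serve every \(m\). Independence of \(\gamma\) from \(n\) I take as given from the proof of Lemma \ref{lemma:assum_by_n}, since the statement is made in the same setting. The last care point is the distinction between strict and weak inequality, which only helps, since strict inequalities imply the weak ones required by majorization.
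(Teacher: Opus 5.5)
Your union-bound argument is exactly the paper's proof. It uses the events $E_m$, the bound $\mathbb{P}(E_m^c)\le 1/\log(n)$ from the Markov step of Lemma~\ref{lemma:assum_by_n}, and the limit $1-d/\log(n)\to 1$ for fixed $d$; you are also right that only the inclusion of $\bigcap_m E_m$ in the majorization event is needed, not the equality the paper writes. Your preferred single-event variant is a valid minor refinement of the same argument: because the Markov event does not depend on $m$ and $\gamma_m\ge\gamma_1$, it gives $1-1/\log(n)$ without a union bound, though like the paper it rests on the lemma's unproved premise that $\sigma_1(M_{P_O})$ stays above an $n$-independent $\gamma>0$.
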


\begin{proof}
    Let \(E_m\) denote the event that the strict partial sum inequality holds for a specific \(m\):
    \[E_m = \left\{ P \in \mathcal{P}_n | \gamma_m > \sum_{i=1}^m \sigma_i(M_P) \right\}\]
    where \(\gamma_m = \sum_{i=1}^m \sigma_i(M_{P_O})\) is the strictly positive signal from the optimal pairing.

    The event that the optimal pairing \(P_O\) weakly majorizes a random permutation \(P\) corresponds to the intersection of these events across all \(d\) dimensions:
    \[\mathbb{P}\left( \sigma(M_{P_O}) \succ_{\mathrm{w}} \sigma(M_P) \right) = \mathbb{P} \left( \bigcap_{m=1}^d E_m \right)\]

    By Lemma \ref{lemma:assum_by_n}, for any \(m\), from sufficiently large \(n\):
    \[\mathbb{P}(E_m^c) \leq \frac{1}{\log(n)}\]

    To find the probability that all \(E_m\) hold simultaneously, we bound the probability of the complement using the union bound:
    \[\mathbb{P}\left( \bigcup_{m=1}^d E_m^c \right) \leq \sum_{m=1}^d \mathbb{P}(E_m^c)\]

    Substitute our individual failure probability into the sum:
    \[\mathbb{P}\left( \bigcup_{m=1}^d E_m^c \right) \leq \sum_{m=1}^d \frac{1}{\log(n)} = \frac{d}{\log(n)}\]

    The probability of simultaneous weak majorization is the complement of the union of failures:
    \[\mathbb{P}\left( \bigcap_{m=1}^d E_m \right) = 1 - \mathbb{P}\left( \bigcup_{m=1}^d E_m^c \right) = 1 - \frac{d}{\log(n)}\]

    Crucially, the feature dimension \(d\) is constant with respect to the sample size \(n\). Therefore, as \(n\to\infty\), the term \(\frac{d}{\log(n)}\) strictly vanishes. Meaning that,
    \[\lim_{n\to\infty}\mathbb{P}\left( \sigma(M_{P_O}) \ {}_{\mathrm{w}}\!\succ \sigma(M_P) \right) = 1\]
\end{proof}

To synthesize these findings and prove Thm. \ref{thm:mcp_equiv}. we first recall the formal definition and two critical properties of weak-majorization, as outlined in \cite{marshall1979inequalities}.

\begin{definition}
    (Def. 1.A.2 in \cite{marshall1979inequalities}) Let \(x, y \in \mathbb{R}^d\) be two vectors. Then, \(x\) \underline{weakly majorizes} \(y\) if
    \[\sum_{i=1}^m x_i \geq \sum_{i=1}^m y_i\quad \forall m\in \{1, \dots, d\}\]
    Denoted \(y \prec_{\mathrm{w}} x\).
\end{definition}

\begin{proposition} \label{prop:weak_major_ineq}
    (Prop. 4.B.2 in \cite{marshall1979inequalities}) \(y \prec_{\mathrm{w}} x\) if and only if 
    \[\sum_{i=1}^d g(x_i) \geq \sum_{i=1}^d g(y_i)\]
    for all continuous increasing convex functions \(g: \mathbb{R} \to \mathbb{R}\).
\end{proposition}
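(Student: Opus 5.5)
The plan is to prove both directions directly, with the standard convention that in the definition of weak majorization both \(x\) and \(y\) are sorted in decreasing order, \(x_1\geq\cdots\geq x_d\) and \(y_1\geq\cdots\geq y_d\). This is how the proposition is applied to \(\sigma(\cdot)\) in Thm. \ref{thm:mcp_equiv}. The sums \(\sum_i g(x_i)\) are invariant under reordering, so this costs nothing.

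\emph{Sufficiency (\(\Leftarrow\)).} Fix \(m\in\{1,\dots,d\}\) and test the inequality with the hinge function \(g(t)=\max(t-x_m,0)\). This \(g\) is continuous, increasing and convex. Because \(x\) is sorted, \(\sum_{i=1}^d g(x_i)=\sum_{i=1}^m (x_i-x_m)\): the terms with \(i>m\) vanish, and those with \(i\leq m\) are nonnegative. On the other side, dropping nonnegative terms and using \(\max(s,0)\geq s\) gives \(\sum_{i=1}^d g(y_i)\geq \sum_{i=1}^m (y_i-x_m)\). Chaining these with the hypothesis \(\sum_i g(x_i)\geq\sum_i g(y_i)\) and cancelling \(m x_m\) yields \(\sum_{i\le m}x_i\geq\sum_{i\le m}y_i\). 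Since \(m\) was arbitrary, \(y\prec_{\mathrm w}x\).

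\emph{Necessity (\(\Rightarrow\)).} This direction is an Abel-summation (Karamata-type) argument.
\begin{enumerate}
\item Define chord slopes \(c_i=\frac{g(x_i)-g(y_i)}{x_i-y_i}\) when \(x_i\neq y_i\). When \(x_i=y_i\), take \(c_i\) to be the right derivative \(g'_+(x_i)\), which exists and is finite because \(g\) is convex on \(\mathbb{R}\). Then \(g(x_i)-g(y_i)=c_i(x_i-y_i)\) for every \(i\).
\item Show \(c_i\geq 0\). This holds because \(g\) is increasing.
\item Show \(c_1\geq c_2\geq\cdots\geq c_d\). For convex \(g\), the slope of the chord over \([a,b]\) is nondecreasing in each endpoint. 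Both \(x_i\) and \(y_i\) decrease with \(i\), so the chord slopes decrease.
\item Put \(D_m=\sum_{i\le m}(x_i-y_i)\), which is \(\geq 0\) by hypothesis, and \(D_0=0\). Summation by parts gives
\[\sum_{i=1}^d\bigl(g(x_i)-g(y_i)\bigr)=\sum_{i=1}^d c_i(D_i-D_{i-1})=\sum_{m=1}^{d-1}(c_m-c_{m+1})D_m+c_dD_d.\]
Every term on the right is a product of nonnegative factors, so the sum is \(\geq 0\), which is the claim.
\end{enumerate}

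The main obstacle is step 3, the monotonicity of the chord slopes. The delicate part is the degenerate cases, where \(x_i=y_i\) or where the two intervals \([y_i,x_i]\) and \([y_{i+1},x_{i+1}]\) are oriented differently. I would handle this through the three-slope lemma for convex functions: the difference quotient \(\frac{g(b)-g(a)}{b-a}\) is symmetric in \(a\) and \(b\) and nondecreasing in each argument. Combined with the right-derivative convention at coincident points, this covers all orderings uniformly.
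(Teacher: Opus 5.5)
The paper gives no proof of this proposition; it imports the result from Marshall--Olkin. Your argument is correct and self-contained.

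For the converse you test against the hinge functions \(g(t)=\max(t-x_m,0)\), which is the standard device. These are admissible because ``increasing'' here means nondecreasing in Marshall--Olkin's convention, as the paper's separate use of ``strictly increasing'' in Prop.~\ref{prop:weak_major_strict} indicates.

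For the forward direction you use the classical Karamata/Tomi\'c--Weyl argument: chord slopes followed by Abel summation. Your treatment of the degenerate cases is right. Extending the difference quotient by \(g'_+\) on the diagonal keeps it nondecreasing in each argument on all of \(\mathbb{R}^2\). This holds because chords to the left of \(a\) have slope at most \(g'_-(a)\le g'_+(a)\), and chords to the right have slope at least \(g'_+(a)\). That is exactly what lets you pass through the intermediate point \((x_{i+1},y_i)\), which can lie on the diagonal even when neither \((x_i,y_i)\) nor \((x_{i+1},y_{i+1})\) does.

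The textbook instead derives the result from the general theory of ordinary majorization and Schur-convex functions. Your route avoids that machinery and produces the explicit identity
\[\sum_{i=1}^d\bigl(g(x_i)-g(y_i)\bigr)=\sum_{m=1}^{d-1}(c_m-c_{m+1})D_m+c_dD_d .\]
This identity also makes the strict inequality used in Thm.~\ref{thm:mcp_equiv} easy to obtain directly for \(g(t)=t^2\) on nonnegative sorted vectors with \(D_d>0\). The paper instead imports that step separately as Prop.~\ref{prop:weak_major_strict}.

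Keep your decreasing-order convention explicit; it is not cosmetic. The paper's transcription of the definition of weak majorization drops the decreasing rearrangement used in Marshall--Olkin's Def.~1.A.2. With unsorted partial sums the equivalence is false. For example, \(x=(0,2)\) and \(y=(-1,3)\) satisfy \(\sum_{i\le m}x_i\ge\sum_{i\le m}y_i\) for \(m=1,2\). Yet \(g(t)=\max(t,0)\) gives \(\sum_i g(x_i)=2<3=\sum_i g(y_i)\). Since the paper applies the proposition only to singular-value vectors, which are sorted, your reading is the correct one. Your necessity step genuinely needs both \(x\) and \(y\) sorted to get \(c_1\ge\cdots\ge c_d\).
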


\begin{proposition} \label{prop:weak_major_strict}
    (Thm. 3.A.8 in \cite{marshall1979inequalities}) If \(y \prec_{\mathrm{w}} x\), and \(\sum_i x_i > \sum_i y_i\), then 
    \[g(x) > g(y)\]
    for any \(g:\mathcal{D} \to \mathbb{R}\) strictly increasing and strictly Schur-convex, \(\mathcal{D} \subseteq \mathbb{R}^d\).
\end{proposition}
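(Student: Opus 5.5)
The plan is to factor the strict inequality through an intermediate vector $u$: show that $g(y) < g(u) \le g(x)$ for some $u$ with $y \le u$ componentwise, $u \neq y$, and $u \prec x$ (ordinary majorization, with equal total sums). Given such a $u$, the two comparisons come from the two hypotheses on $g$ separately.
\begin{itemize}
\item $g(y) < g(u)$ follows because $g$ is strictly increasing and $u \geq y$ with $u \neq y$. Here $u \neq y$ is automatic, since $\sum_i u_i = \sum_i x_i > \sum_i y_i$.
\item $g(u) \le g(x)$ follows from Schur-convexity, since $u \prec x$. Strict Schur-convexity is not even needed for this step; strictness comes entirely from the monotonicity step.
\end{itemize}
Throughout, I work with decreasing rearrangements; this is harmless because Schur-convex functions are permutation invariant. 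I also take for granted that $u$ lies in $\mathcal{D}$. This holds in the intended application, where $\mathcal{D}$ is the nonnegative orthant and $u \ge y \ge 0$.

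The substantive step is constructing $u$. This is the converse direction of the characterization ``$y \prec_{\mathrm{w}} x$ iff there is $u \ge y$ with $u \prec x$'' (Marshall--Olkin 5.A.9), and I would prove it directly by a compactness and water-filling argument. Write $X_m = \sum_{i\le m} x_{[i]}$ for the partial sums of $x$. Consider the set
\[
\mathcal{C} = \Bigl\{ u \in \mathbb{R}^d : u \ge y,\ \textstyle\sum_{i\le m} u_{[i]} \le X_m \text{ for all } m \Bigr\}.
\]
This set contains $y$ by hypothesis. It is closed, since partial sums of decreasing rearrangements are continuous. It is bounded: each coordinate is bounded below by the corresponding $y_i$, and the total is bounded above by $X_d$. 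So $\sum_i u_i$ attains its maximum on $\mathcal{C}$ at some $u^\star$. It then suffices to show $\sum_i u^\star_i = X_d$, because in that case $u^\star \prec x$.

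To show this, suppose $\sum_i u^\star_i < X_d$. Let $u^\star_{[j+1]} = \dots = u^\star_{[d]}$ be the block of coordinates tied at the minimal value $c$. I would raise every coordinate in this block by $\varepsilon/(d-j)$ and check that the result stays in $\mathcal{C}$ for small $\varepsilon > 0$, which contradicts maximality.
\begin{itemize}
\item For $m \le j$, the top-$m$ sums are unchanged for small $\varepsilon$, because the block stays below $u^\star_{[j]}$.
\item For $m \in \{j,\dots,d\}$, the map $m \mapsto \sum_{i\le m} u^\star_{[i]}$ is affine with slope $c$. Meanwhile $m \mapsto X_m$ is concave, because $x_{[i]}$ is decreasing.
\item The constraint at $m = d$ is slack by assumption, and the constraint at $m = j$ holds. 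Concavity then gives slack at every $m \in (j, d]$, except possibly where the affine line touches $X_m$ at $m = j$ itself, which the perturbation does not affect.
\end{itemize}
So all constraints survive a small perturbation, giving the contradiction.

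I expect the main obstacle to be this last verification, namely handling ties and confirming that slack persists across the whole tied block. The concavity of $m \mapsto X_m$ compared against the affine growth of $u^\star$'s partial sums is the key observation I would make rigorous. As a fallback, I would simply invoke Marshall--Olkin 5.A.9 for the existence of $u$ and keep only the two-line strictness argument above.
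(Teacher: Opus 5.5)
Your proof is correct. The paper gives no argument for this proposition, only the citation to Marshall--Olkin, and your route is the standard one behind that result. You sandwich through an intermediate $u$ with $y \le u \prec x$ (their 5.A.9), so that strict monotonicity yields $g(y)<g(u)$ and plain Schur-convexity yields $g(u)\le g(x)$.

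What you add over the citation is a self-contained proof of the sandwich lemma. The tied-block step you flagged goes through as you describe. The map $m\mapsto X_m-\sum_{i\le m}u^\star_{[i]}$ is concave on $\{j,\dots,d\}$, with $X_0=0$ covering $j=0$. It is nonnegative at $m=j$ and positive at $m=d$, hence positive on $(j,d]$.

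Spelling the argument out also exposes which hypotheses are actually needed:
\begin{itemize}
\item Strict Schur-convexity is superfluous once $\sum_i x_i>\sum_i y_i$.
\item Your caveat $u\in\mathcal{D}$ is a missing hypothesis in the statement as the paper writes it, not a convenience. Take $d=2$, $x=(2,0)$, $y=(1,\tfrac12)$ and $\mathcal{D}=\{x,y\}$. Then $y\prec_{\mathrm{w}}x$ and $\sum_i x_i>\sum_i y_i$, but $x$ and $y$ are comparable neither componentwise nor under $\prec$. Symmetrizing $\mathcal{D}$ adds only permutation relations. So a function with $g(x)=0<1=g(y)$ is vacuously strictly increasing and strictly Schur-convex. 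Requiring $\mathcal{D}$ to be upward closed repairs this. The domain $\mathbb{R}^d_+$, the only one used in the proof of Thm.~2, qualifies.
\item Your reading of $\prec_{\mathrm{w}}$ through decreasing rearrangements is necessary, not cosmetic. Under the paper's literal unsorted partial sums, $x=(1,1)$, $y=(0,\tfrac32)$ and $g(z)=\sum_i z_i^2$ give a counterexample. This is harmless in the application, since singular values come sorted.
\end{itemize}
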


Given these properties, we can prove Thm. \ref{thm:mcp_equiv}.

\setcounter{oldTheorem}{\value{theorem}}
\setcounter{theorem}{1}

\begin{theorem} \label{thm:mcp_equiv}
    Under Assumption \ref{assum:weak_majorization},
    \[\text{MCP}_{O(d)} = \text{MCP}_{S_F}\]
\end{theorem}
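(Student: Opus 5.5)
By Prop. \ref{prop:od_nuclear} and Prop. \ref{prop:sf_frob}, both pairings are $\arg\max$ problems over $\mathcal{P}$ of a symmetric function of the singular-value vector $\sigma(M_P)$. Specifically, $\mcp_{O(d)}$ maximizes $\sum_i \sigma_i(M_P)$ and $\mcp_{S_F}$ maximizes $\sum_i \sigma_i^2(M_P)$, the latter being the squared Frobenius norm. The plan is to show that under Assumption \ref{assum:weak_majorization} every $P_O\in\mcp_{O(d)}$ also maximizes the Frobenius objective, and that no $P\notin\mcp_{O(d)}$ can, so the two argmax sets coincide.

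\textbf{Step 1.} Fix $P_O\in\mcp_{O(d)}$ and any $P\notin\mcp_{O(d)}$. Set $x=\sigma(M_{P_O})$ and $y=\sigma(M_P)$, both ordered decreasingly and with entries in $\mathbb{R}_{\ge 0}^d$. Assumption \ref{assum:weak_majorization} gives $y\prec_{\mathrm{w}} x$. Since $P\notin\mcp_{O(d)}$, its nuclear norm is strictly smaller, so $\sum_i x_i>\sum_i y_i$.

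\textbf{Step 2.} Apply Prop. \ref{prop:weak_major_strict} with $g(z)=\sum_i z_i^2$ on $\mathcal{D}=\mathbb{R}_{\ge0}^d$. On this domain $g$ is strictly increasing in each coordinate. It is also strictly Schur-convex, being a sum of the strictly convex function $t\mapsto t^2$. Hence $\|M_{P_O}\|_F^2>\|M_P\|_F^2$.

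As a sanity check, or as a fallback if the exact hypotheses of Prop. \ref{prop:weak_major_strict} are inconvenient, there is a direct argument by Abel summation. Write $S_m(x)=\sum_{i\le m}x_i$ and $S_m(y)=\sum_{i\le m}y_i$. Then
\[\sum_i x_i^2-\sum_i x_i y_i=\sum_{m}(x_m-x_{m+1})\bigl(S_m(x)-S_m(y)\bigr)\ge 0,\]
using $x$ decreasing with $x_{d+1}=0$ and the partial-sum inequalities. The Cauchy--Schwarz inequality gives $\sum_i x_i y_i\le \|x\|\,\|y\|$, so $\|x\|^2\ge\|x\|\,\|y\|$ and therefore $\|x\|\ge\|y\|$.

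\textbf{Step 3.} Strictness is the delicate part of the fallback argument. If $\|x\|=\|y\|$, then equality holds in Cauchy--Schwarz, so $y=cx$ with $c=1$, which contradicts $\sum x_i>\sum y_i$. Note that $x\neq 0$ because $\sum x_i>\sum y_i\ge 0$. So the inequality is strict either way.

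\textbf{Step 4.} Conclude. Every $P\notin\mcp_{O(d)}$ has a strictly smaller Frobenius objective than every $P_O\in\mcp_{O(d)}$, so $\mcp_{S_F}\subseteq\mcp_{O(d)}$. Conversely, for any two $P_O,P_O'\in\mcp_{O(d)}$, applying Assumption \ref{assum:weak_majorization} is not directly available, since it only compares against $P\notin\mcp_{O(d)}$. Here I would either invoke the paper's standing uniqueness assumption, under which $\mcp_{O(d)}$ is a singleton and equality follows immediately, or note that the Frobenius maximizer must lie in $\mcp_{O(d)}$ and take representatives.

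The main obstacle I expect is verifying the strict Schur-convexity and monotonicity hypotheses in Step 2, and handling ties among multiple maximizers in Step 4. The uniqueness convention resolves the latter.
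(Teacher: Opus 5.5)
Your main line is correct and is the paper's. The inclusion $\mcp_{S_F}\subseteq\mcp_{O(d)}$ comes, exactly as in your Step~2, from Prop.~\ref{prop:weak_major_strict} applied to $g(z)=\sum_i z_i^2$ on $\mathbb{R}^d_{\ge 0}$.

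For the reverse inclusion, the paper uses Prop.~\ref{prop:weak_major_ineq} with $t\mapsto t^2$ and then declares $P_O$ the global Frobenius maximizer. That step tacitly uses uniqueness at exactly the point you flagged. Assumption~\ref{assum:weak_majorization} never compares two elements of $\mcp_{O(d)}$, and nuclear-norm ties such as spectra $(1,\tfrac12)$ and $(\tfrac34,\tfrac34)$ have different Frobenius norms. Your uniqueness-plus-nonemptiness argument is correct and makes Prop.~\ref{prop:weak_major_ineq} unnecessary.

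The Abel-summation fallback, however, is broken as written. Summing by parts with weights $x$ gives $\lVert x\rVert^2\ge\langle x,y\rangle$, and Cauchy--Schwarz gives $\langle x,y\rangle\le\lVert x\rVert\,\lVert y\rVert$. Both are upper bounds on $\langle x,y\rangle$, so nothing follows about $\lVert x\rVert$ versus $\lVert y\rVert$. For the same reason, $\lVert x\rVert=\lVert y\rVert$ does not force equality in Cauchy--Schwarz in your Step~3.

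The fix is to sum by parts with the weights $y$ instead, which are also nonnegative and decreasing, with $y_{d+1}=0$:
\[\langle y,x-y\rangle=\sum_{m=1}^{d}(y_m-y_{m+1})\bigl(S_m(x)-S_m(y)\bigr)\ge 0.\]
This gives $\lVert y\rVert^2\le\langle x,y\rangle\le\lVert x\rVert\,\lVert y\rVert$. Now $\lVert x\rVert=\lVert y\rVert\neq0$ does force Cauchy--Schwarz equality, hence $y=x$, contradicting $\sum_i x_i>\sum_i y_i$. The case $y=0$ is trivial because $x\neq0$.

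Alternatively, apply the summation to $\lVert x\rVert^2-\lVert y\rVert^2=\langle x+y,\,x-y\rangle$. Strictness then follows from the last index $m$ with $x_m+y_m>0$, and no Cauchy--Schwarz is needed.
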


\setcounter{theorem}{\value{oldTheorem}}

\begin{proof}
   We start by proving that \(P_O\) strictly maximizes the Frobenius objective. Assum. \ref{assum:weak_majorization} establishes that the vector \(\sigma(P_O)\) weakly majorizes the vector \(\sigma(P)\).

    Let \(g(x) = x^2\). For the domain \(x \geq 0\) (which applies to all singular values), \(g(x)\) is a continuous increasing convex function.

    By Prop. \ref{prop:weak_major_ineq}, applying this to our weakly majorized singular value spectra yields:
    \[\sum_{i=1}^d \sigma_i^2(P_O) \geq \sum_{i=1}^d \sigma_i^2(P)\]

    Hence:
    \[\lVert M_{P_O} \rVert_F^2 = \sum_{i=1}^d \sigma_i^2(P_O) \geq \sum_{i=1}^d \sigma_i^2(P) = \lVert M_{P} \rVert_F^2\quad \forall P \neq P_O\]

    Thus, \(P_O\) is the global maximum for the Frobenius norm objective, meaning that
    \[P_O \in \text{MCP}_{S_F}\]

    The other inclusion is similar. Take a permutation \(P \notin \text{MCP}_{O(d)}\). Then, its Nuclear norm is strictly less than that of \(P_O \in \text{MCP}_{O(d)}\). That is,
    \[\sum_{i=1}^d \sigma_i(P_O) > \sum_{i=1}^d \sigma_i(P)\]

    Notice that \(g(x) = \sum_{i}x_i^2\) in the domain \(x \in \mathbb{R}_+^n\) is strictly increasing and strictly Schur-convex. By Prop. \ref{prop:weak_major_strict},
    \[\sum_{i=1}^d \sigma_i^2(P_O) > \sum_{i=1}^d \sigma_i^2(P)\]

    This means that \(P \notin \text{MCP}_{S_F}\). Finally,
    \[\text{MCP}_{O(d)} = \text{MCP}_{S_F}\]
\end{proof}

\section{Additional Results} \label{app:results}

\subsection{Fig. \ref{fig:tc} Results}

Tab. \ref{tab:app:tc} presents the full results visualized in Fig. \ref{fig:tc}.

\begin{table}[h]
\centering
\caption{\textbf{Total Correlation Results} (mean $\pm$ std). Bold values indicate the best performance. The paired CCA results are provided in gray at the bottom as an upper-bound reference.}

\definecolor{lightgreen}{HTML}{E2EFDA}

\begin{adjustbox}{width=\textwidth}
\begin{tabular}{lcccccc}
\toprule
Method & PC-KL & PC-PA & KL-PA & SNARE & Flickr & COCO \\
\midrule
J-MDS-CCA & \( -0.110 \pm 0.234 \) & \( 0.048 \pm 0.180 \) & \( 0.751 \pm 0.258 \) & \( -0.014 \pm 0.621 \) & \( 0.020 \pm 0.062 \) & \( -0.041 \pm 0.100 \) \\
SCOTv1-CCA & \( -0.136 \pm 0.000 \) & \( 0.042 \pm 0.000 \) & \( 0.259 \pm 0.000 \) & \( 0.970 \pm 0.000 \) & \( 0.026 \pm 0.068 \) & \( 0.132 \pm 0.000 \) \\
SCOTv2-CCA & \( 0.300 \pm 0.000 \) & \( 0.107 \pm 0.000 \) & \( -0.148 \pm 0.000 \) & \( 0.816 \pm 0.000 \) & \( 0.030 \pm 0.040 \) & \( -0.005 \pm 0.000 \) \\
SCA & \( -0.093 \pm 0.126 \) & \( 0.024 \pm 0.123 \) & \( 0.045 \pm 0.113 \) & \( 0.144 \pm 0.814 \) & \( -0.018 \pm 0.078 \) & \( -0.010 \pm 0.096 \) \\
UCA & \( -0.101 \pm 0.383 \) & \( -0.282 \pm 0.296 \) & \( 0.493 \pm 0.741 \) & \( 0.883 \pm 0.795 \) & \( -0.081 \pm 0.468 \) & \( 0.173 \pm 0.460 \) \\
\rowcolor[HTML]{E2EFDA}
UCCA (ours) & \( \mathbf{1.239} \pm 0.442 \) & \( \mathbf{1.417} \pm 0.029 \) & \( \mathbf{1.957} \pm 0.012 \) & \( \mathbf{1.755} \pm 0.031 \) & \( \mathbf{0.751} \pm 0.086 \) & \( \mathbf{1.371} \pm 0.063 \) \\
\midrule
\textcolor{gray}{Paired CCA} & \textcolor{gray}{\( 1.979 \)} & \textcolor{gray}{\( 1.993 \)} & \textcolor{gray}{\( 1.990 \)} & \textcolor{gray}{\( 1.850 \)} & \textcolor{gray}{\( 1.574 \)} & \textcolor{gray}{\( 1.768 \)} \\
\bottomrule
\end{tabular}
\end{adjustbox}
\label{tab:app:tc}
\end{table}

\subsection{Full Cross-Classification Results} \label{app:classification}

Tab. \ref{tab:app:cross_class_knn21} complements the primary cross-view classification findings in Tab. \ref{tab:classification} by presenting the alternate direction of the kNN cross-view classification. The results are consistent with those presented in Tab. \ref{tab:classification}. \ourmethod\ consistently achieves the best performance, with the exception of the COCO dataset, where it yields the second-best result. Also, note that replacing the kNN classifier with a linear kernel preserves all observed performance trends.

\begin{table}[h]
\centering
\caption{\textbf{Cross-view classification results.} Mean accuracy ($\pm$ std) of a kNN classifier fitted on the learned embeddings of the second view and evaluated on the first. This table complements the results in the main text by showing the alternate classification direction. Bold and underlined values indicate the best and second-best performances, respectively. Paired CCA results are provided in gray at the bottom.}

\begin{adjustbox}{width=\textwidth}
\begin{tabular}{lccccc}
\toprule
Method & PC-KL & PC-PA & KL-PA & SNARE & COCO \\
\midrule
J-MDS-CCA & \( 0.090 \pm 0.030 \) & \( 0.100 \pm 0.012 \) & \( 0.095 \pm 0.011 \) & \( 0.198 \pm 0.076 \) & \( 0.123 \pm 0.113 \) \\
SCOTv1-CCA & \( 0.066 \pm 0.000 \) & \( 0.082 \pm 0.000 \) & \( 0.113 \pm 0.000 \) & \( \underline{0.809} \pm 0.000 \) & \( 0.156 \pm 0.000 \) \\
SCOTv2-CCA & \( \underline{0.105} \pm 0.000 \) & \( 0.102 \pm 0.000 \) & \( 0.133 \pm 0.000 \) & \( 0.805 \pm 0.000 \) & \( \mathbf{0.301} \pm 0.000 \) \\
SCA & \( 0.093 \pm 0.020 \) & \( \underline{0.115} \pm 0.028 \) & \( 0.104 \pm 0.020 \) & \( 0.335 \pm 0.298 \) & \( 0.148 \pm 0.021 \) \\
UCA & \( \underline{0.105} \pm 0.065 \) & \( 0.064 \pm 0.037 \) & \( \underline{0.163} \pm 0.112 \) & \( 0.364 \pm 0.273 \) & \( 0.139 \pm 0.046 \) \\
\rowcolor{lightgreen}
UCCA (ours) & \( \mathbf{0.290} \pm 0.067 \) & \( \mathbf{0.332} \pm 0.014 \) & \( \mathbf{0.540} \pm 0.022 \) & \( \mathbf{0.818} \pm 0.020 \) & \( \underline{0.208} \pm 0.016 \) \\
\midrule
\textcolor{gray}{Paired CCA} & \textcolor{gray}{\( 0.551 \pm 0.000 \)} & \textcolor{gray}{\( 0.543 \pm 0.000 \)} & \textcolor{gray}{\( 0.602 \pm 0.000 \)} & \textcolor{gray}{\( 0.899 \pm 0.000 \)} & \textcolor{gray}{\( 0.232 \pm 0.007 \)} \\
\bottomrule
\end{tabular}
\end{adjustbox}
\label{tab:app:cross_class_knn21}
\end{table}

\subsection{Validating \(\bf\textbf{MCP}_{O(d)} = \textbf{MCP}_{S_F}\)} \label{app:valid_thm2_exp}

\begin{figure}[h]
    \centering
    \includegraphics[width=\linewidth]{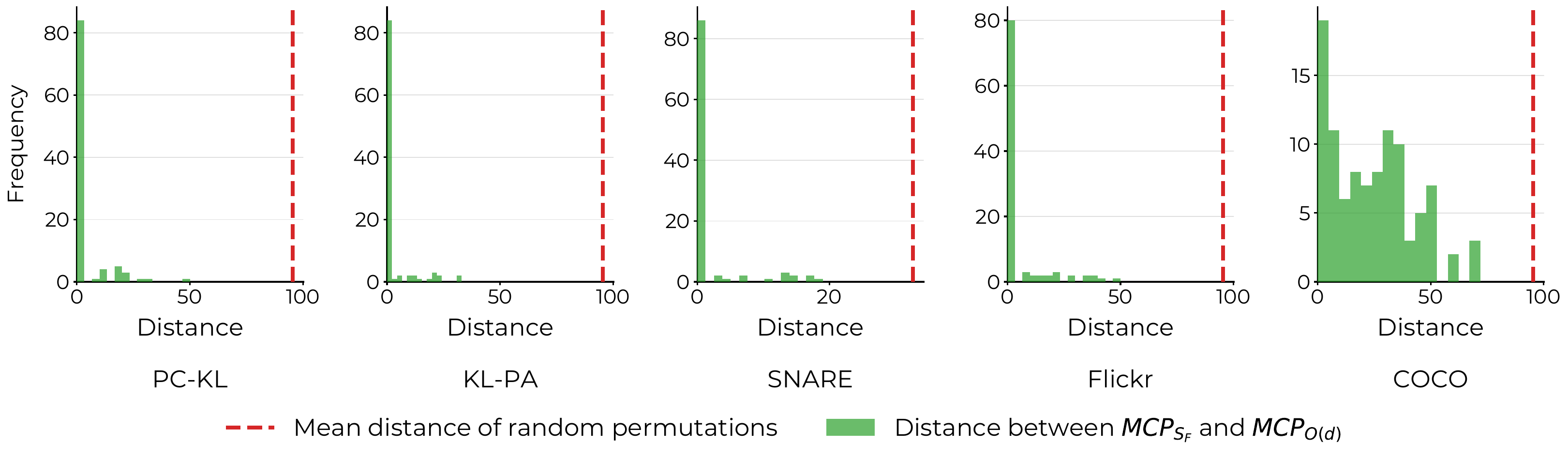}
    \caption{\textbf{Extended validation of \(\bf\textbf{MCP}_{O(d)} = \textbf{MCP}_{S_F}\).} Distance between \(\mcp_{S_F}\) and the approximated \(\text{MCP}_{O(d)}\) (denoted as $\tilde{P}_O$), computed on the anchors of the each dataset across 100 initializations. This figure complements Fig. \ref{fig:mcp_equiv} in the main text by demonstrating that the strict concentration at zero holds consistently across all evaluated domains, further validating Thm. \ref{thm:mcp_equiv}.}
    \label{fig:app:mcp_equiv}
\end{figure}

This section provides further details on the empirical validation of Thm. \ref{thm:mcp_equiv} in Sec. \ref{sec:results} and Fig. \ref{fig:mcp_equiv}, which asserts that \(\text{MCP}_{O(d)} = \text{MCP}_{S_F}\) holds in practice. As introduced in the main text, we compare \(\mcp_{S_F}\), obtained via an approximate QAP solver, against \(\tilde{P}_O\), which acts as an estimated projection of \(\mcp_{S_F}\) onto the \(\text{MCP}_{O(d)}\) solution space.

This projection is computed by sequentially solving the dual maximizations formulated in Eq. \eqref{eq:mcp_f}. First, given \(\mcp_{S_F}\), we isolate the optimal orthogonal transformation:
\begin{equation*}
    Q_S = \arg\max_{Q\in O(d)} \tc(X, \mcp_{S_F}Y)
\end{equation*}

Subsequently, we compute the maximum correlation permutation (MCP) under this fixed transformation:
\begin{equation*}
    \tilde{P_O} = \mcp(XQ_S, Y)
\end{equation*}

Crucially, because the orthogonal group is a subset of the scaled Frobenius sphere (\(O(d) \subseteq \sqrt{d}S_F\)), if \(\tilde{P_O} = \mcp_{S_F}\), the approximation is exact. This yields \(\tilde{P}_O = \text{MCP}_{O(d)} = \text{MCP}_{S_F}\).

Fig. \ref{fig:app:mcp_equiv} complements the PC-PA results from Fig. \ref{fig:mcp_equiv} by detailing the Kendall Tau distances for the remaining datasets. Across all domains, the distances remain heavily concentrated at zero, providing robust, dataset-agnostic empirical support for Thm. \ref{thm:mcp_equiv}.

\subsection{Running Times} \label{app:runtime}

\begin{figure}
    \centering
    \includegraphics[width=\linewidth]{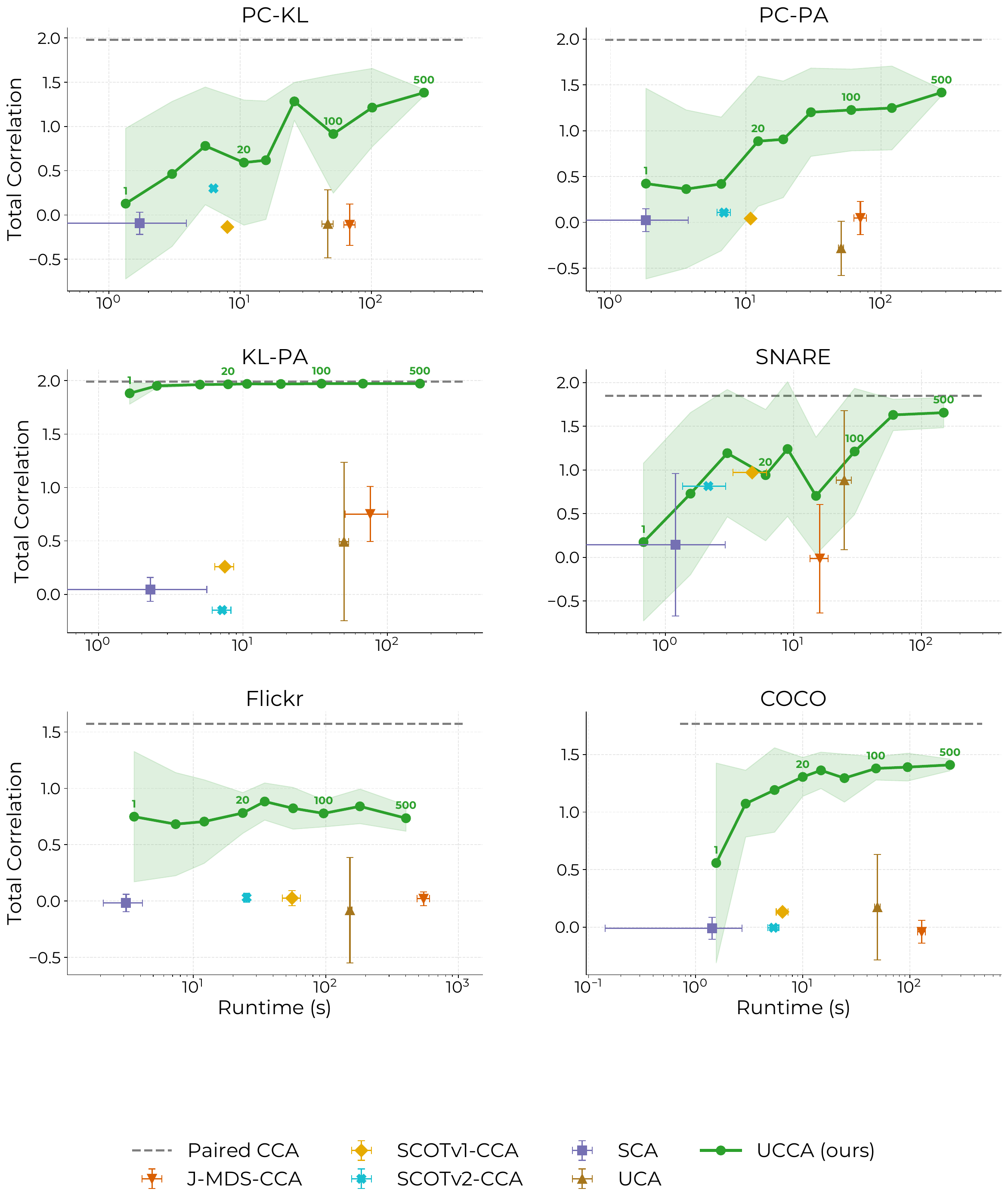}
    \caption{\textbf{Extended runtime and performance analysis.} Total correlation versus runtime (in seconds, log scale) for \ourmethod\ and baseline methods across the various datasets. The green numerical annotations on the \ourmethod\ curve denote the number of clustering repetitions. \ourmethod\ rapidly surpasses baseline performance even at a lower repetition setting and improves as repetitions increase, while maintaining a practical total execution time.}
    \label{fig:app:runtime}
\end{figure}

This section provides a more detailed analysis of the computational efficiency of \ourmethod, expanding upon the results presented in Fig. \ref{fig:runtime}. A primary hyperparameter dictating the computational overhead of our method is the number of clustering repetitions used to extract these anchor points. To systematically evaluate the trade-off between execution time and alignment accuracy, we tracked the TC and total runtime (in seconds, plotted on a logarithmic scale) across varying numbers of clustering repetitions. These intervals (e.g., 1, 20, 100, and 500) are explicitly annotated on the \ourmethod\ performance curves.

Fig. \ref{fig:app:runtime} illustrates these trade-offs for all six datasets, complementing the findings from Fig. \ref{fig:runtime}. Across all evaluated domains, \ourmethod\ exhibits a highly favorable scalability profile. Even at the lowest repetition settings (e.g., 1 or 20), the method reliably outperforms the competing unsupervised baselines, often executing in under a minute. As the number of clustering repetitions increases, the variance across runs decreases, and the alignment quality steadily converges toward the paired CCA upper bound (represented by the gray dashed horizontal line). Crucially, even when scaled to 500 repetitions to maximize stability and performance, the total execution time remains highly practical, rarely exceeding a few minutes.

\subsection{Direct Optimization of $\mcp_{O(d)}$} \label{app:direct_optimization}

To validate the hardness of directly optimizing the $\mcp_{O(d)}$, and justifying the $\mcp_{S_F}$ relaxation, we implemented an EM-style alternating maximization approach for directly optimizing the \(\text{MCP}_{O(d)}\) objective (Eq. \eqref{eq:mcp_f}). We alternate between (1) CCA optimization over A,B with fixed P, and (2) Linear Sum Assignment (Hungarian algorithm) optimization over P with fixed A,B.

Additionally, we compared against two variants of ACCA \cite{cheng2023towards}, which similarly alternate between CCA and assignment optimization.

The results are summarized in Tab. \ref{tab:app:direct_optimization}. All alternating optimization methods were repeated 50 times with different initializations, as done for the 2-opt solver in UCCA. These results indicate that the QAP formulation provides a more robust optimization landscape than directly optimizing the original $\mcp_{O(d)}$ objective.

\begin{table}[h]
\centering
\caption{\textbf{Total Correlation comparison to direct optimization of $\mcp_{O(d)}$.} Total correlation results (mean \(\pm\) std). Bold values indicate the best unsupervised performance, and the paired CCA results are provided in gray as an upper-bound reference. The QAP formulation provides a more robust optimization landscape than directly optimizing the original $\mcp_{O(d)}$ objective.}

\begin{adjustbox}{width=0.6\textwidth}
\begin{tabular}{lcc}
\toprule
Method & PC-PA & COCO \\
\midrule
ACCA & $0.035 \pm 0.171$ & $0.057 \pm 0.109$ \\
ACCA-anchors & $-0.066 \pm 0.459$ & $0.339 \pm 0.321$ \\
UCCA (EM) & $0.826 \pm 0.602$ & $0.420 \pm 0.357$ \\
\rowcolor{lightgreen}
UCCA (QAP 2-opt, ours) & $\mathbf{1.417} \pm 0.029$ & $\mathbf{1.371} \pm 0.063$ \\
\midrule
\textcolor{gray}{Paired CCA} & \textcolor{gray}{$1.993$} & \textcolor{gray}{$1.768$} \\
\bottomrule
\end{tabular}
\end{adjustbox}
\label{tab:app:direct_optimization}
\end{table}

\subsection{Effect of QAP Solver}

\begin{figure}[h]
    \centering
    \includegraphics[width=0.5\linewidth]{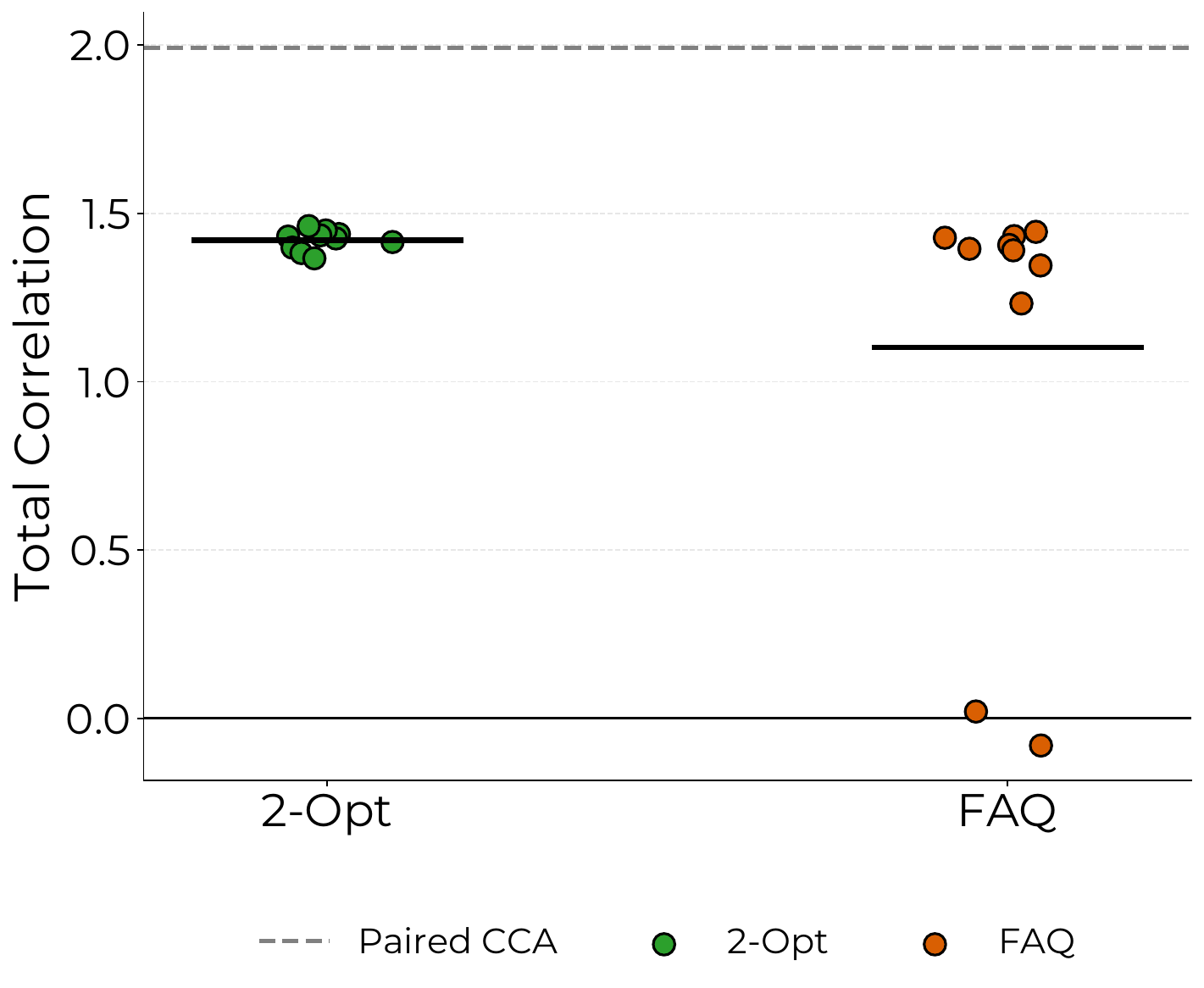}
    \caption{\textbf{Effect of the QAP solver on performance.} Total Correlation achieved across multiple initializations on the PC-PA Handwritten benchmark, using the 2-Opt and FAQ approximate QAP solvers. Both solvers yield very similar overall performance and alignment quality. Because FAQ failed to converge in two instances (points near zero), 2-Opt was selected as the default solver for \ourmethod\ simply to ensure consistent execution across all runs.}
    \label{fig:app:qap_solver}
\end{figure}

The structural matching step of \ourmethod\ relies on solving the QAP. Because the exact QAP is NP-hard, we employ efficient approximate solvers to ensure practical scalability. In this section, we evaluate the impact of the chosen solver on the final alignment performance. Specifically, we compare two popular options: 2-Opt \cite{heider1972computationally} and the Fast Approximate QAP (FAQ) algorithm \cite{vogelstein2015fast}.

Fig. \ref{fig:app:qap_solver} visualizes the Total Correlation achieved by both solvers across multiple initializations. Overall, the choice of solver does not significantly alter the performance; when FAQ converges successfully, it yields an alignment quality almost identical to that of 2-Opt. However, FAQ failed to converge in two distinct instances (indicated by the points clustering near zero). Because of these rare convergence issues, we selected 2-Opt as the default solver for \ourmethod\ simply to ensure consistent and reproducible alignments across all runs.

\subsection{Comparison with Non-linear UCA} \label{app:uca_nonlinear}

As noted in the main text, \ourmethod\ is fundamentally designed to extract linear projections that maximize the correlation between unaligned views. To ensure a fair architectural comparison, the primary evaluation benchmarks \ourmethod\ against the linear formulation of UCA \cite{hoshen2018unsupervised}. However, UCA is also capable of utilizing deep neural networks to learn non-linear mappings. This section extends our evaluation to compare \ourmethod\ against this non-linear variant of UCA.

Tab. \ref{tab:app:uca_nonlinear_tc} and \ref{tab:app:uca_nonlinear_classification} present the Total Correlation and cross-view classification results, respectively. Despite the increased expressive capacity afforded by non-linear transformations, the non-linear UCA baseline struggles to discover meaningful structural alignments across most datasets. \ourmethod\ consistently outperforms it by a significant margin across all metrics. These results demonstrate that the explicit geometric alignment strategy and QAP-based skeleton matching employed by \ourmethod\ are vastly more robust and effective for the Unpaired CCA scenario.

\begin{table}[h]
\centering
\caption{\textbf{Total Correlation comparison with non-linear UCA.} Total correlation results (mean \(\pm\) std). Despite its non-linear capacity, \ourmethod\ significantly outperforms non-linear UCA across all datasets. Bold values indicate the best unsupervised performance, and the paired CCA results are provided in gray as an upper-bound reference.}

\begin{adjustbox}{width=\textwidth}
\begin{tabular}{lcccccc}
\toprule
Method & PC-KL & PC-PA & KL-PA & SNARE & Flickr & COCO \\
\midrule
UCA (nonlinear) & \( 0.378 \pm 0.552 \) & \( 0.262 \pm 0.435 \) & \( -0.414 \pm 0.532 \) & \( 1.066 \pm 0.962 \) & \( 0.072 \pm 0.532 \) & \( 0.321 \pm 0.393 \) \\
\rowcolor{lightgreen}
UCCA (ours) & \( \mathbf{1.239} \pm 0.442 \) & \( \mathbf{1.417} \pm 0.029 \) & \( \mathbf{1.957} \pm 0.012 \) & \( \mathbf{1.755} \pm 0.031 \) & \( \mathbf{0.751} \pm 0.086 \) & \( \mathbf{1.371} \pm 0.063 \) \\
\midrule
\textcolor{gray}{Paired CCA} & \textcolor{gray}{\( 1.979 \)} & \textcolor{gray}{\( 1.993 \)} & \textcolor{gray}{\( 1.990 \)} & \textcolor{gray}{\( 1.850 \)} & \textcolor{gray}{\( 1.574 \)} & \textcolor{gray}{\( 1.768 \)} \\
\bottomrule
\end{tabular}
\end{adjustbox}
\label{tab:app:uca_nonlinear_tc}
\end{table}

\begin{table}[h]
\centering
\caption{\textbf{Cross-view classification comparison with non-linear UCA.} Mean accuracy ($\pm$ std) of a kNN classifier trained on the learned embeddings of one view and evaluated on the other. \ourmethod\ significantly outperforms the non-linear UCA baseline across all evaluated datasets. Bold values indicate the best unsupervised performance, and the paired CCA results are provided in gray at the bottom.}

\begin{adjustbox}{width=\textwidth}
\begin{tabular}{lccccc}
\toprule
Method & PC-KL & PC-PA & KL-PA & SNARE & COCO \\
\midrule
UCA (nonlinear) & \( 0.091 \pm 0.016 \) & \( 0.091 \pm 0.029 \) & \( 0.078 \pm 0.051 \) & \( 0.518 \pm 0.354 \) & \( 0.132 \pm 0.041 \) \\
\rowcolor{lightgreen}
UCCA (ours) & \( \mathbf{0.292} \pm 0.074 \) & \( \mathbf{0.315} \pm 0.025 \) & \( \mathbf{0.564} \pm 0.022 \) & \( \mathbf{0.849} \pm 0.021 \) & \( \mathbf{0.232} \pm 0.016 \) \\
\midrule
\textcolor{gray}{Paired CCA} & \textcolor{gray}{\( 0.562 \pm 0.000 \)} & \textcolor{gray}{\( 0.516 \pm 0.000 \)} & \textcolor{gray}{\( 0.629 \pm 0.000 \)} & \textcolor{gray}{\( 0.887 \pm 0.000 \)} & \textcolor{gray}{\( 0.232 \pm 0.009 \)} \\
\bottomrule
\end{tabular}
\end{adjustbox}
\label{tab:app:uca_nonlinear_classification}
\end{table}

\subsection{Anchors Choice} \label{app:anchors}

\ourmethod\ ensures scalability by adopting an anchor-based approach, a standard and well-established technique for manifold approximation \cite{williams2000using, de2004sparse}. We opt for simplicity and practicality by utilizing K-Means centroids \cite{kumar2012sampling, coates2011analysis} to construct the geometric skeleton of the latent spaces. This section provides further details on this extraction choice and the hyperparameter governing the number of anchors.

\paragraph{Anchor Extraction.} To reduce the computational burden of the QAP matching step, we perform alignment on a reduced subset of anchors rather than the entire dataset. Tab. \ref{tab:app:anchor_selection} evaluates our simple K-Means strategy against a naive random sampling baseline, alongside an idealized oracle setup (denoted as paired K-Means) designed to test the maximum capacity of the chosen centroids. Specifically: Random, randomly sampling points from each view; K-Means, the default approach for \ourmethod, where we independently run K-Means clustering on both views and extract the resulting centroids to serve as anchors; and Paired K-Means, where we run K-Means on the first view to obtain centroids, and then use the ground-truth dataset pairing to retrieve their exact corresponding points in the second view.

As demonstrated in Tab. \ref{tab:app:anchor_selection}, utilizing K-Means centroids yields a significantly more stable and representative skeleton compared to random sampling, resulting in consistently higher TC. Furthermore, the Paired K-Means oracle demonstrates that if perfect correspondence between the independently learned centroids were known, the performance would approach the paired CCA upper bound on five out of the six datasets. This confirms that simple K-Means centroids generally capture the structural information necessary for optimal alignment.

In Tab. \ref{tab:app:anchor_selection_gmm}, we verify that UCCA does not depend specifically on K-Means. We replaced K-Means anchors with Gaussian Mixture Model (GMM) anchors, and the results are comparable, showing UCCA is robust to the choice of anchor extraction method.

\paragraph{Number of anchors.} The number of anchors controls the resolution of our structural skeleton. Fig. \ref{fig:app:num_anchors} visualizes the total correlation as a function of this hyperparameter. There is an inherent trade-off: increasing the number of anchors provides a denser, more accurate representation of the underlying data manifold, which naturally improves the final alignment accuracy. However, because QAP is NP-hard, the computational complexity scales significantly with the anchor count. As illustrated in the figure, \ourmethod\ improves and then stabilizes as the number of anchors increases. This behavior allows us to select a moderate number of anchors that achieve near-optimal performance while maintaining a practical computational footprint.

\begin{table}[h]
\centering
\caption{\textbf{Anchor extraction evaluation.} Total Correlation results (mean $\pm$ std) evaluating our anchor selection approach. The default K-Means strategy consistently outperforms random sampling. The "Paired K-Means" oracle, which uses ground-truth pairings to match the centroids of the first view, demonstrates that an ideal centroid alignment approaches the paired CCA upper bound (gray). Bold values indicate the best performance.}

\begin{adjustbox}{width=\textwidth}
\begin{tabular}{lcccccc}
\toprule
Method & PC-KL & PC-PA & KL-PA & SNARE & Flickr & COCO \\
\midrule
Random & $0.840 \pm 0.426$ & $0.781 \pm 0.464$ & $0.476 \pm 1.021$ & $1.745 \pm 0.160$ & $0.878 \pm 0.217$ & $0.284 \pm 0.321$ \\
K-Means & $1.389 \pm 0.029$ & $1.388 \pm 0.026$ & $1.963 \pm 0.009$ & $1.755 \pm 0.031$ & $0.720 \pm 0.080$ & $1.369 \pm 0.116$ \\
Paired K-Means & $\mathbf{1.658} \pm 0.014$ & $\mathbf{1.676} \pm 0.026$ & $\mathbf{1.987} \pm 0.002$ & $\mathbf{1.850} \pm 0.001$ & $\mathbf{0.969} \pm 0.158$ & $\mathbf{1.409} \pm 0.209$ \\
\midrule
\textcolor{gray}{Paired CCA} & \textcolor{gray}{\(1.979 \pm 0.000\)} & \textcolor{gray}{\(1.993 \pm 0.000\)} & \textcolor{gray}{\(1.990 \pm 0.000\)} & \textcolor{gray}{\(1.850 \pm 0.000\)} & \textcolor{gray}{\(1.574 \pm 0.000\)} & \textcolor{gray}{\(1.768 \pm 0.001\)} \\
\bottomrule
\end{tabular}
\end{adjustbox}
\label{tab:app:anchor_selection}
\end{table}

\begin{table}[h]
\centering
\caption{\textbf{Additional anchor extraction methods.} Total Correlation results (mean $\pm$ std) evaluating our anchor selection approach (K-Means) and an additional GMM option. The two methods are comparable.}

\begin{adjustbox}{width=\textwidth}
\begin{tabular}{lcccccc}
\toprule
Method & PC-KL & PC-PA & KL-PA & SNARE & Flickr & COCO \\
\midrule
K-Means & $\mathbf{1.389} \pm 0.029$ & $\mathbf{1.388} \pm 0.026$ & $\mathbf{1.963} \pm 0.009$ & $\mathbf{1.755} \pm 0.031$ & $0.720 \pm 0.080$ & $1.369 \pm 0.116$ \\
GMM & $1.348 \pm 0.068$ & $1.340 \pm 0.074$ & $1.952 \pm 0.013$ & $1.685 \pm 0.111$ & $\mathbf{1.367} \pm 0.061$ & $\mathbf{1.528} \pm 0.061$ \\
\midrule
\textcolor{gray}{Paired CCA} & \textcolor{gray}{\(1.979 \pm 0.000\)} & \textcolor{gray}{\(1.993 \pm 0.000\)} & \textcolor{gray}{\(1.990 \pm 0.000\)} & \textcolor{gray}{\(1.850 \pm 0.000\)} & \textcolor{gray}{\(1.574 \pm 0.000\)} & \textcolor{gray}{\(1.768 \pm 0.001\)} \\
\bottomrule
\end{tabular}
\end{adjustbox}
\label{tab:app:anchor_selection_gmm}
\end{table}

\begin{figure}
    \centering
    \includegraphics[width=\linewidth]{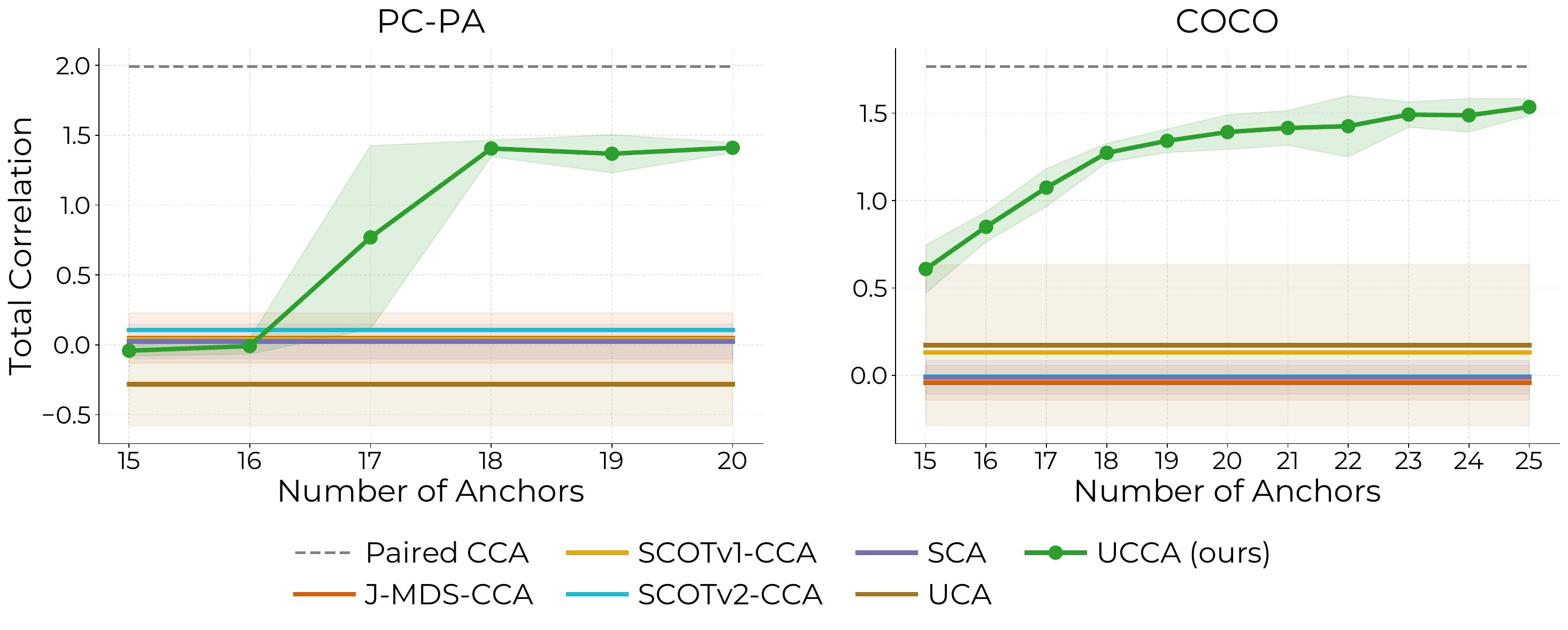}
    \caption{\textbf{Effect of the number of anchors.} Total Correlation versus the number of anchors used in the matching step. Increasing the number of anchors allows \ourmethod\ to capture the manifold geometry more accurately, leading to a sharp performance improvement that approaches the paired CCA upper bound (represented by the dashed horizontal line). The anchor count 20 is chosen to balance this high accuracy with computational efficiency.}
    \label{fig:app:num_anchors}
\end{figure}

\subsection{Effect of PCA dimension}
As detailed in App. \ref{app:implementation_details:preprocess}, to ensure computational efficiency and provide a stable, regularized feature space for the structural matching step, we apply PCA to each view independently. In Fig. \ref{fig:app:pca_dim}, we evaluate several whitening dimensions while maintaining $d\leq k$. The results show that performance is stable unless the dimension is reduced excessively (e.g., $d=5$), where informative variance is discarded.

\begin{figure}[h]
    \centering
    \includegraphics[width=\linewidth]{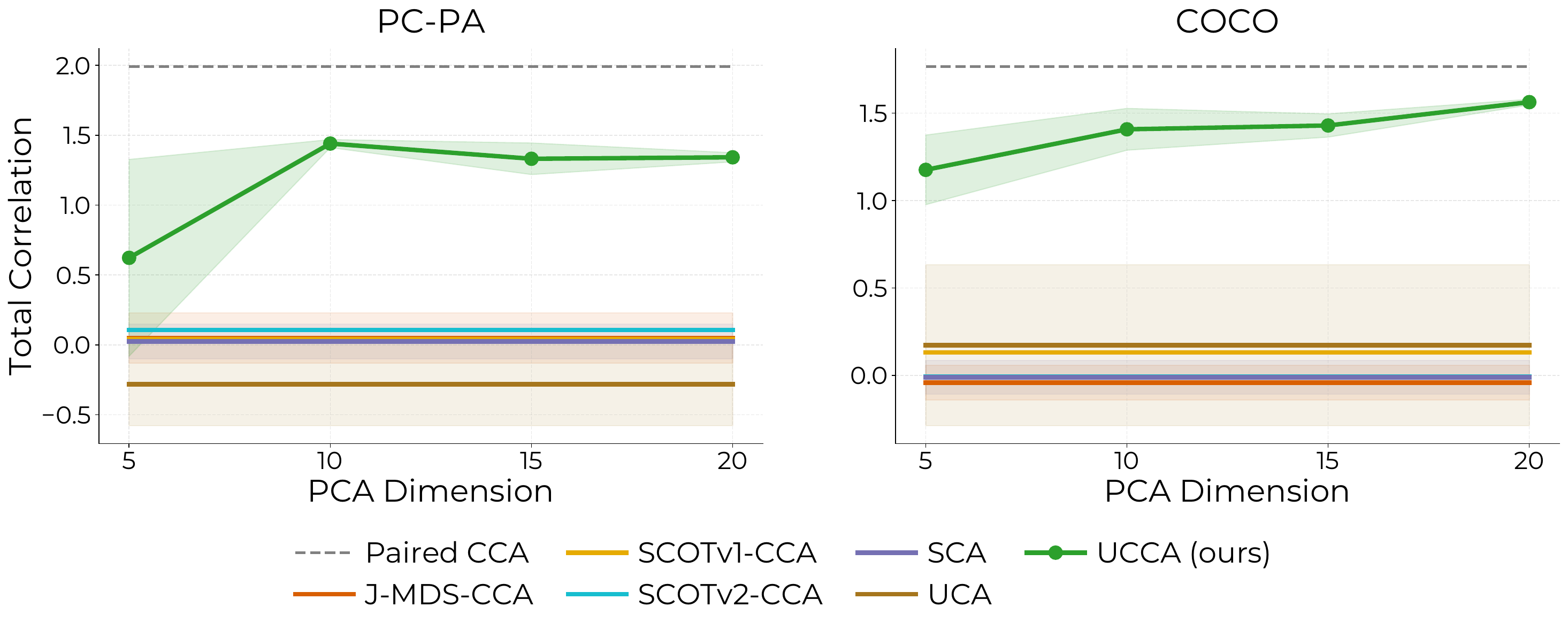}
    \caption{\textbf{Effect of the number of PCA dimensions.} Total Correlation versus the number of PCA components used in the preprocessing step.}
    \label{fig:app:pca_dim}
\end{figure}

\subsection{Visualizations}
Fig. \ref{fig:app:visus} depicts two-dimensional plots of the learned embeddings obtained by the Paired CCA oracle and UCCA. We evaluate both methods on our representative PC-PA datasets as well as the SNARE dataset, which comprises four distinct classes, facilitating straightforward visualization and interpretation of the learned representations. These plots demonstrate why UCCA achieves near-oracle classification despite a lower TC. Paired CCA forces exact instance-to-instance correspondence, maximizing TC and perfectly overlaying samples across views. In contrast, UCCA, which lacks paired supervision, introduces slight pairwise variance, marginally lowering TC. However, UCCA successfully aligns the global manifold, tightly grouping semantic clusters across modalities. Because cross-view classifiers rely on global class separability rather than perfect point-to-point mapping, UCCA's structural alignment achieves accuracy closely rivaling the oracle bound.

\begin{figure}[h]
    \centering
    \includegraphics[width=\linewidth]{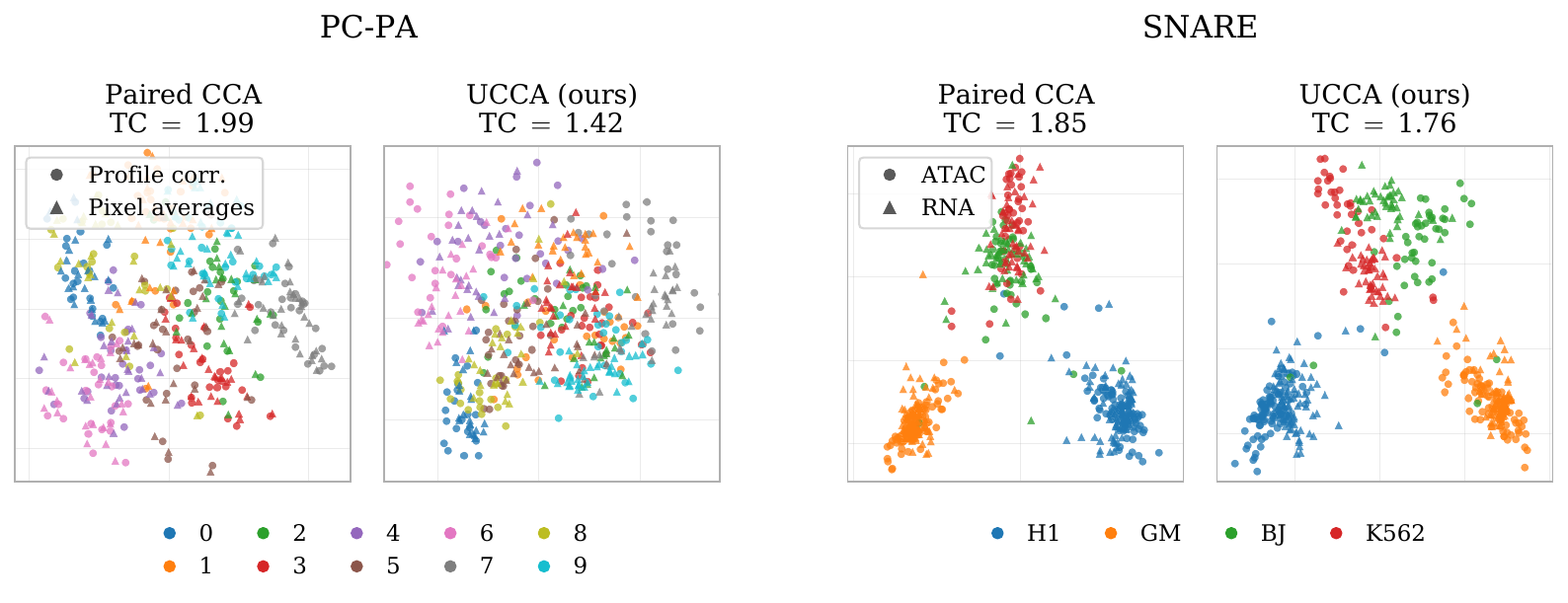}
    \caption{\textbf{Embeddings visualizations.} Visualizations of the generated embeddings of Paired-CCA and UCCA.}
    \label{fig:app:visus}
\end{figure}

\section{Additional Related Work}
This section discusses additional related work.

\paragraph{Shuffled Regression.} Both shuffled regression and UCCA tackle the fundamental challenge of learning from data where exact sample correspondences are missing. The optimization of shuffled regression has also been shown to result in a QAP in special cases \cite{lufkin2024sharp, kohjima2026gaussian}. However, while in shuffled regression $Y$ represents target labels that are known up to a permutation, in the UCCA formulation $X$ and $Y$ represent disjoint views containing shared and unique information. Shuffled regression functions as a supervised learning task aimed at predicting target responses or recovering regression coefficients from unmatched inputs. UCCA addresses unsupervised multiview representation learning, focusing on extracting a shared space in scenarios where paired correspondences are entirely unavailable rather than merely permuted \cite{xu2019uncoupled, slawski2024permuted, lufkin2024sharp, kohjima2026gaussian}. Furthermore, shuffled regression models often rely on additional assumptions, such as assuming $Y$ is a scalar \cite{xu2019uncoupled, lufkin2024sharp, kohjima2026gaussian}, imposing strong structural constraints (e.g., cyclical monotonicity \cite{slawski2024permuted}), or requiring auxiliary weak supervision (e.g., pairwise comparison data \cite{xu2019uncoupled}). UCCA instead mitigates the complexity of the QAP through an anchor-based geometric alignment procedure.

\paragraph{Canonical Time Warping (CTW).} CTW combines CCA with Dynamic Time Warping (DTW) to achieve spatio-temporal alignment \cite{zhou2009canonical}. However, temporal alignment is simpler than optimizing unconstrained permutations because strict sequence constraints (boundary conditions, continuity, and monotonicity) enable efficient dynamic programming. Conversely, UCCA addresses completely generic data, necessitating the computationally heavier QAP. Nevertheless, incorporating structural constraints such as temporal ordering into UCCA is an interesting future direction.

\section{Complexity Analysis} \label{app:complexity}
This section analyzes the time complexity of \ourmethod. Recall the notations: \(n\), the number of samples in the input datasets (say the largest between the two); \(d\), the number of dimensions; \(k\), the number of anchors. Also denote: \(t\), the number of iterations for the K-Means clustering algorithm; \(c\), the number of clustering repetitions; \(I\), the number of iterations for the 2-opt solver.

We start by analyzing each step in Alg. \ref{alg:method}. First is the whitening step. This takes \(O(nd^2)\) to compute the empirical covariance matrix, and \(O(d^3)\) its SVD, summing up to \(O(nd^2 + d^3)\). The second step is the anchors extraction, specifically K-Means. This is known to have time complexity of \(O(tnkd)\). We repeat the K-Means process \(c\) times.

For the QAP step, we use the 2-opt approximate solver. The similarity matrices take \(O(k^2d)\). The 2-opt solver works at \(O(Ik^3)\), making the complexity of the third step \(O(k^2d + Ik^3)\). This step is repeated \(c\) times, one for each clustering. Finally, we apply CCA to these concatenated anchors, which takes \(O(ckd^2 + d^3)\).

The overall complexity is then,
\[O(nd^2 + ctnkd + cIk^3 + ckd^2 + d^3)\]

Consider that for a large \(n\), we generally consider \(k, d \ll n\), and so as the number of iterations and clusterings are constants. Then, \ourmethod\ is linear in the number of samples \(n\).

\section{Additional Details on Experiments}
\subsection{QAP Alignment with CCA} \label{app:qap_cca}

This section provides the experimental details corresponding to Fig. \ref{fig:open}a, which empirically demonstrates the core theoretical insight of \ourmethod: the direct relationship between the QAP objective and CCA performance.

\paragraph{Synthetic Dataset.} To exhaustively evaluate all possible pairings between two views, we constructed a simple synthetic toy dataset, visualized in Fig. \ref{fig:app:toy_data}. The dataset consists of \(n=10\) samples in two distinct views, structured as unaligned but geometrically similar 1-dimensional manifolds in a 2-dimensional space (resembling hooks).

\begin{wrapfigure}{r}{0.3\textwidth} 
    \vspace{-20pt}
    \centering
    \includegraphics[width=0.3\textwidth]{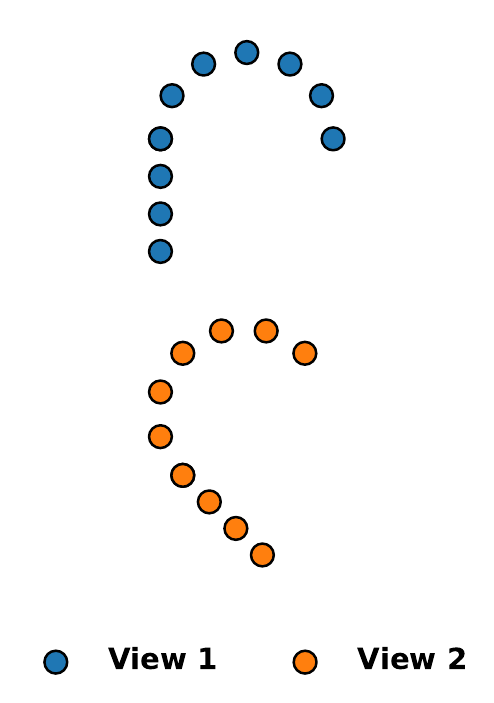}
    \vspace{-10pt}
    \caption{\textbf{Synthetic toy dataset.}}
    \label{fig:app:toy_data}
    \vspace{-20pt}
\end{wrapfigure}

\paragraph{Exhaustive Evaluation.} Because the dataset contains exactly 10 samples per view, the set of all possible permutations is finite and computationally tractable to explore in its entirety (\(10! \approx 3.6 \times 10^6\) permutations). For every possible permutation matrix \(P\) in this space, we performed the following two steps: (1) QAP Value: We computed the QAP objective value using a linear kernel; (2) Total Correlation: We used $P$ to pair the samples from View 1 and View 2, fitted a paired CCA on this specific pairing, and recorded the resulting TC of the true pairing under the learned embeddings.

\paragraph{Results and Discussion.} The density plot in Fig. \ref{fig:open}a visualizes the joint distribution of these \(\sim 3.6\) million pairs of (QAP value, TC). The empirical results reveal a strict, positive relationship between the two metrics: as the QAP value increases, the structural alignment improves, leading to a higher total correlation.

Crucially, the permutation that yields the absolute maximum QAP value (denoted by the red star in Fig. \ref{fig:open}a) aligns almost perfectly with the permutation that achieves the maximum possible total correlation (the yellow triangle). This exhaustive, brute-force experiment provides direct empirical demonstration of our key theoretical insight: solving the QAP with a linear kernel effectively acts as a proxy for maximizing the CCA objective in an unpaired setting.

\subsection{Assumption \ref{assum:weak_majorization} Validation} \label{app:assum_valid}

\begin{figure}
    \centering
    \includegraphics[width=\linewidth]{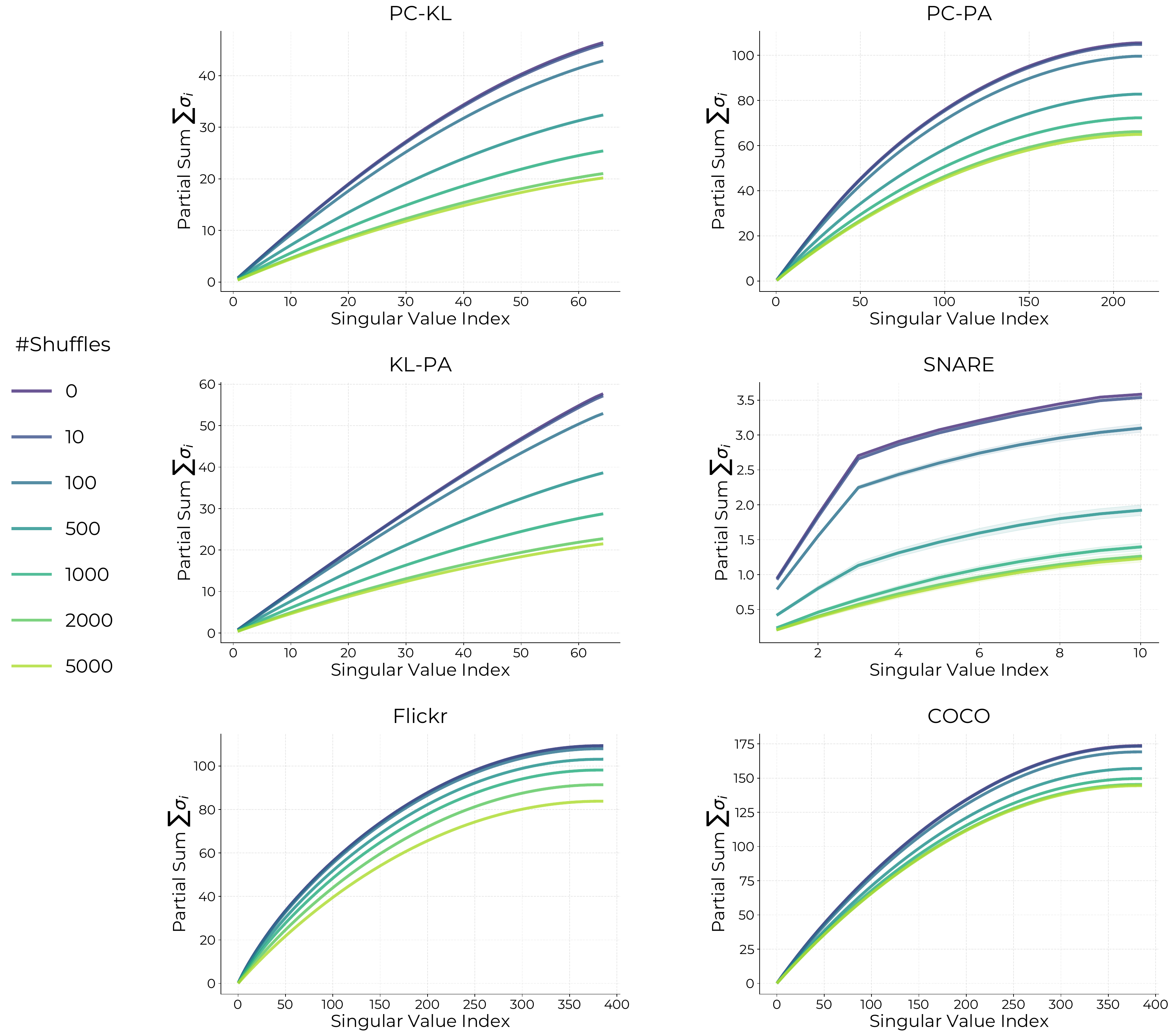}
    \caption{\textbf{Extended empirical validation of Assum. \ref{assum:weak_majorization}.} Partial cumulative sums of singular values for the true permutation (\(P^*\), denoted by 0 shuffles) and various randomly shuffled permutations across all datasets: PC-KL, PC-PA, KL-PA, SNARE, Flickr, and COCO. Consistent with the main text results (Fig. \ref{fig:theory}a), the partial sums strictly decrease as the permutation diverges from \(P^*\), verifying the weak majorization property across all tasks.}
    \label{fig:app:assum_valid}
\end{figure}

This section provides further empirical evidence supporting Assum. \ref{assum:weak_majorization}, which posits that the singular values under the optimal alignment weakly majorize those under any alternative permutation. In the main text, Fig. \ref{fig:theory}a demonstrates this property on the Flickr dataset by showing that the partial cumulative sums of singular values strictly decrease as the permutation increasingly diverges from the ground truth.

Fig. \ref{fig:app:assum_valid} extends this validation to the remaining benchmark datasets: PC-KL, PC-PA, KL-PA, SNARE and COCO. Following the same procedure, we compute the partial sums of singular values for the true permutation (\(P^*\), represented by 0 shuffles) and for permutations subjected to an increasing number of random shuffles. Across all evaluated domains, the cumulative sums are consistently lower as the number of shuffles increases. This universal trend strongly corroborates the weak majorization behavior, confirming it is a robust and practically valid assumption for real-world datasets.

\subsection{Theorem \ref{thm:mcp_equiv} Validation} \label{app:thm_valid}

\begin{figure}
    \centering
    \includegraphics[width=\linewidth]{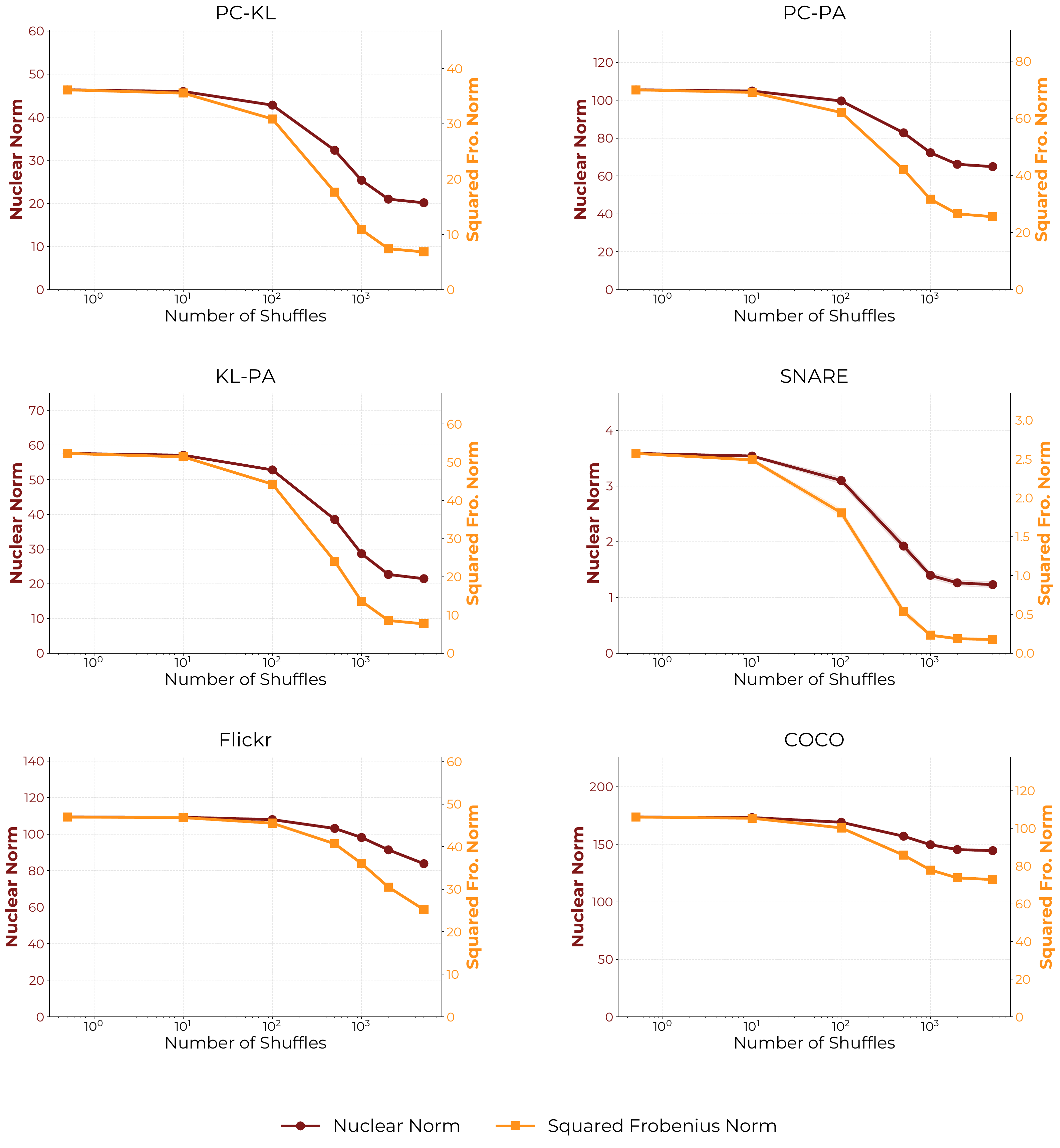}
    \caption{\textbf{Extended empirical validation of Theorem 2.} Comparison of the \(\mcp_{O(d)}\) (Nuclear Norm) and \(\mcp_{S_F}\) (Squared Frobenius Norm) objectives as the permutation is randomized away from the true alignment \(P^*\) across all datasets: PC-KL, PC-PA, KL-PA, SNARE, Flickr, and COCO. In alignment with the main text results (Fig. \ref{fig:theory}b), both objectives decay synchronously as the number of random shuffles increases, empirically confirming that both formulations are locally maximized by the same ground-truth permutation.}
    \label{fig:app:thm_valid}
\end{figure}

This section provides supplementary results for the empirical validation of Thm. \ref{thm:mcp_equiv}, which establishes that under Assum. \ref{assum:weak_majorization}, \(\mcp_{O(d)}\) and the \(\mcp_{S_F}\) coincide. In the main text, Fig. \ref{fig:theory}b illustrates this property on the Flickr dataset by tracking the respective objectives, the Nuclear Norm for \(\mcp_{O(d)}\) and the Squared Frobenius Norm for \(\mcp_{S_F}\), as the permutation increasingly diverges from the ground-truth alignment \(P^*\).

Fig. \ref{fig:app:thm_valid} extends this comparative analysis to the PC-KL, PC-PA, KL-PA, SNARE, and COCO datasets. Across all evaluated domains, we observe a consistent, synchronized trend: both the Nuclear Norm and the Squared Frobenius Norm strictly decay as the number of random shuffles increases. This shared decay profile demonstrates that both objectives are locally maximized by the exact same ground-truth permutation, further corroborating the theoretical equivalence established in Thm. 2.

\section{Implementation Details} \label{app:implementation_details}

This section provides a comprehensive description of the implementation details for \ourmethod, alongside the exact configurations used for all evaluated baselines.

\subsection{Preprocessing and Dimensionality Reduction} \label{app:implementation_details:preprocess}
In alignment with Alg. 1, the raw input data for each view is first subjected to mean-centering and whitening. Let \(X_{raw} \in \mathbb{R}^{n \times d_1}\) and \(Y_{raw} \in \mathbb{R}^{n \times d_2}\) be the raw training sample matrices for the first and second views, respectively. We compute the empirical training means and center both the training and test sets.

To ensure computational efficiency and provide a stable, regularized feature space for the structural matching step, we apply Principal Component Analysis (PCA) \cite{pearson1901liii} to each view independently. By default, we reduce the dimensionality of both views to \(d_{pca}=10\) components and whiten the representations. This process yields the whitened, dimensionally reduced latent spaces \(Z_X, Z_Y \in \mathbb{R}^{n \times 10}\), completing the \(Z_X, Z_Y \leftarrow \text{Whiten}(X, Y)\) step of our algorithmic framework.

\subsection{Unsupervised Anchor Selection}
To extract the geometric skeleton of the manifolds, we rely on K-Means clustering to select representative anchors for each view (\(A_X, A_Y \leftarrow \text{Anchors}(Z_X, Z_Y)\)). Given the whitened views \(Z_X\) and \(Z_Y\), we independently run K-Means to extract \(k=20\) cluster centroids per view. For the SNARE datasets we lowered the number of anchors to \(k=12\) due to the lower data amount. The resulting cluster centroids serve as our anchors \(A_X \in \mathbb{R}^{k \times d_{pca}}\) and \(A_Y \in \mathbb{R}^{k \times d_{pca}}\) for the structural alignment.

\subsection{Anchor Matching via QAP}
We align the geometric skeletons by solving the linear-kernel QAP. We construct linear similarity kernels for both sets of anchors:
\begin{equation*}
K_X = A_X A_X^T, \quad K_Y = A_Y A_Y^T
\end{equation*}
We then search for the permutation matrix \(P'\) that maximizes the QAP structural alignment between the kernels:
\begin{equation*}
P' = \arg\max_{P \in \mathcal{P}} \text{tr}(K_X P K_Y P^T)
\end{equation*}
This maximization is performed using the 2-Opt heuristic solver available in SciPy's quadratic\_assignment module \cite{2020SciPy-NMeth}. Because the QAP is NP-hard and 2-Opt is a local search heuristic, we perform \(N_{restart}=50\) random restarts and select the permutation that yields the highest objective value.

\subsection{Learning the Projections}
Once the structural permutation \(P'\) is estimated, we pair the anchors to form pseudo-pairs \((A_X, P' A_Y)\). To ensure robustness against any single suboptimal clustering or matching instance, we repeat the entire clustering and alignment process \(N_{trials}=500\) times, concatenating all resulting pseudo-paired anchors.

Finally, we apply a standard paired CCA directly to these concatenated pseudo-pairs to learn the final linear projection matrices \(U\) and \(V\). These learned weights are then utilized to project the complete, original sets of high-dimensional samples into the shared maximally correlated latent space.

\newpage
\subsection{Data Split and Unpaired Setting}

\begin{figure}[h]
    \centering
    \includegraphics[width=0.8\linewidth]{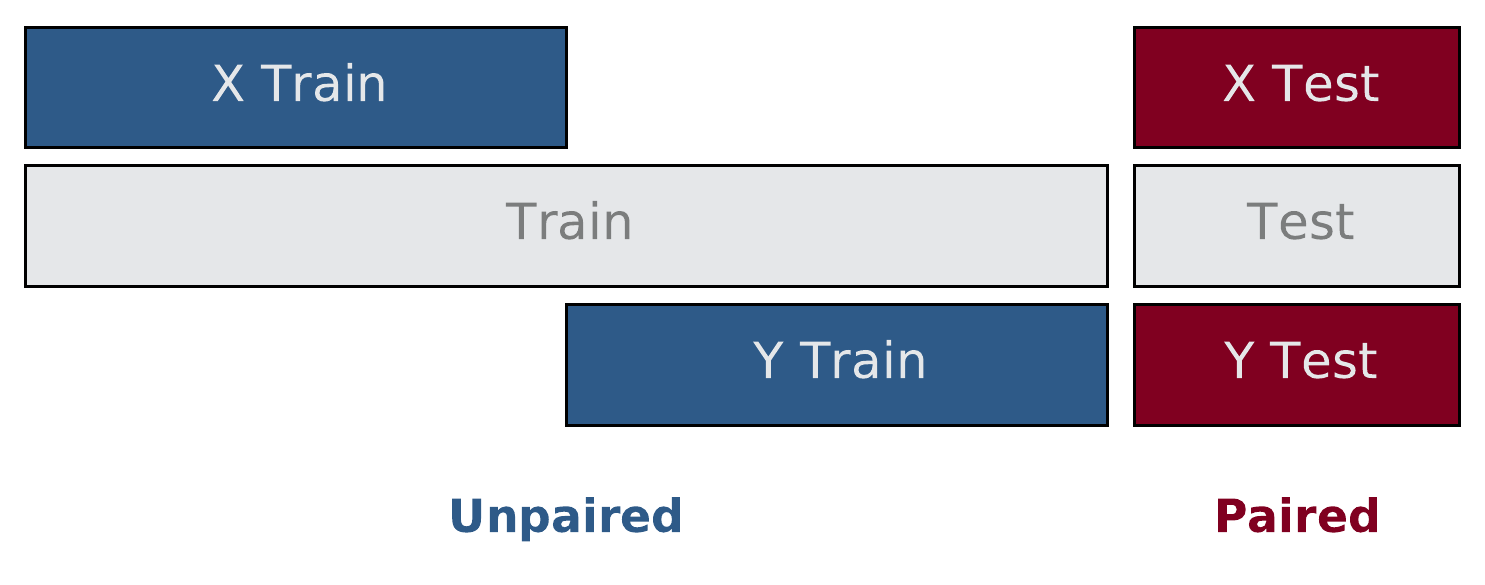}
    \caption{\textbf{Data preparation:} While the test is kept paired for evaluation purposes, we split the train such that no correspondence is available during training. This exactly replicates the unpaired settings discussed in the paper.}
    \label{fig:app:data_prep}
\end{figure}

To rigorously evaluate the method in a strictly unpaired setting, we follow a specific dataset splitting protocol (visualized in Fig. \ref{fig:app:data_prep}). For a dataset with \(N\) total paired samples:
\begin{enumerate}
    \item A common paired test set \(\mathcal{D}_{test}\) is held out (typically \(N_{test}=256\) samples).
    \item The remaining \(N - N_{test}\) training samples are divided into two disjoint subsets, \(S_1\) and \(S_2\), of size \((N - N_{test}) / 2\).
    \item The training set for View 1, \(X_{train}\), is exclusively formed by taking the first view of the samples in \(S_1\).
    \item The training set for View 2, \(Y_{train}\), is exclusively formed by taking the second view of the samples in \(S_2\).
\end{enumerate}
This protocol guarantees that there is strictly zero sample correspondence between \(X_{train}\) and \(Y_{train}\) during the training phase (i.e., for any sample index \(i\) present in the first view, its corresponding second view is actively withheld from the training data).

\subsection{Extended Algorithm}
Alg. \ref{alg:app:extended} presents a detailed summary of \ourmethod, presented in Alg. \ref{alg:method}. The primary differences are the inclusion of specific hyperparameters and precise implementations. Specifically, the pre-processing step is defined as PCA whitening, and K-Means clustering is utilized to extract the anchor sets. Additionally, the QAP matching step is approximated using a 2-opt solver, which is executed with \(r\) random restarts to escape local optima and capture the maximum objective value. Finally, the extended algorithm explicitly details the matching loop: the clustering and alignment process is repeated \(c\) times, and the resulting matched anchors are concatenated to form a larger, comprehensive reference set for the final CCA projection.

\begin{algorithm}[H]
  \caption{Unpaired CCA (Extended)}
  \label{alg:app:extended}
  \begin{algorithmic}[1]
    \Require Unpaired data $X \in \mathbb{R}^{n \times d_x}$, $Y \in \mathbb{R}^{n \times d_y}$
    \Require $k$ (number of centroids), $c$ (clustering repetitions), $r$ (2-opt restarts)
    \Ensure Projectors $U, V$
    
    \State $Z_X \gets \text{PCA-Whiten}(X)$
    \State $Z_Y \gets \text{PCA-Whiten}(Y)$
    \State Initialize concatenated anchor sets $\tilde{A}_X \gets \emptyset$, $\tilde{A}_Y \gets \emptyset$
    
    \For{$i = 1 \dots c$}
        \State $A_X^{(i)} \gets \text{K-Means}(Z_X, k)$
        \State $A_Y^{(i)} \gets \text{K-Means}(Z_Y, k)$
        
        \State $o_{\text{max}} \gets -\infty$, $P' \gets I$
        \For{$j = 1 \dots r$}
            \State $P \gets \text{2opt-QAP}(A_X^{(i)}, A_Y^{(i)})$
            \State $o \gets \text{QAP-Objective}(P, A_X^{(i)}, A_Y^{(i)})$
            \If{$o > o_{\text{max}}$}
                \State $o_{\text{max}} \gets o$
                \State $P' \gets P$
            \EndIf
        \EndFor
        
        \State $\tilde{A}_X \gets \text{Concat}(\tilde{A}_X, A_X^{(i)})$
        \State $\tilde{A}_Y \gets \text{Concat}(\tilde{A}_Y, P' A_Y^{(i)})$
    \EndFor
    
    \State $U, V \gets \text{CCA}(\tilde{A}_X, \tilde{A}_Y)$ \Comment{Maximize correlation relative to the matched anchors}
    \State \Return $U, V$
  \end{algorithmic}
\end{algorithm}

\subsection{Datasets}
We evaluate UCCA on a diverse set of datasets spanning multiple domains. Tab. \ref{tab:app:datasets} summarizes their properties:

\begin{table}[h]
\small
\centering
\caption{Properties of datasets used in evaluations.}
\begin{adjustbox}{width=\textwidth}
\begin{tabular}{lcccc}
\toprule
Dataset & Views & Samples ($N$) & Dimensions ($d_1, d_2, \dots$) & Classes ($C$) \\
\midrule
Handwritten & PC, KL, PA & 2,000 & 216, 64, 240 & 10 \\
SNARE & ATAC, RNA & 1,047 & 19, 10 & 4 \\
Flickr & Image, Text & 8,000 & 384, 384 & - \\
COCO & Image, Text & 2,000 & 384, 384 & 79 \\
\bottomrule
\end{tabular}
\end{adjustbox}
\label{tab:app:datasets}
\end{table}

\textbf{UCI Handwritten Digits.} Consists of 2,000 samples of digits 0-9. We use three continuous views: profile correlations (PC), Karhunen-Loève coefficients (KL), and pixel averages (PA). These provide varied feature spaces for cross-view alignment evaluation. We reserved a test set of 256 samples, leaving 872 samples from each view for training.

\textbf{SNARE.} A single-cell dataset generated by the SNARE-seq assay, containing co-measured chromatin accessibility (ATAC-seq) and gene expression (RNA-seq) for 1,047 cells. We utilize the preprocessed embeddings provided by Chen et al. \cite{chenunsupervised}, where the ATAC and RNA views are dimensionally reduced to 19 and 10 dimensions, respectively, and unit-normalized. The dataset is categorized into 4 distinct cell type classes (GM, BJ, K562, and H1), representing a challenging, real-world biological alignment task. We reserved a test set of 257 samples, leaving 395 samples from each view for training.

\textbf{Flickr.} Contains image-text pairs. We utilized the 8,000 pairs training set from jxie/flickr8k\footnote{\url{https://huggingface.co/datasets/jxie/flickr8k}}. Image features are extracted using the pre-trained DINOv2-ViT-S/14 model. Text features are generated by processing the primary caption (caption\_0) through the all-MiniLM-L6-v2 sentence transformer using mean pooling. Both the visual and textual feature pipelines yield 384-dimensional dense representations. We reserved a test set of 512 samples, leaving 2744 samples from each view for training.

\textbf{COCO.} We used the 2,000 image-text pairs from the COCO 2017 validation subset, sourced from the riddhimanrana/coco-fastvlm-2k-val2017 dataset\footnote{\url{https://huggingface.co/datasets/riddhimanrana/coco-fastvlm-2k-val2017}}. Similar to Flickr, the two views represent image and text modalities. The image view is encoded using DINOv2-ViT-S/14 into a 384-dimensional space. The text view is generated from GPT-structured conversational captions, encoded via the all-MiniLM-L6-v2 model into 384 dimensions. Unique object-category labels are extracted from the human prompts in the conversational turns to establish discrete classes for evaluation, following the object detection labels from the dataset. For the paired CCA upper bound, we reduced each view independently to its top 10 principal components, as it improved performance. We reserved a test set of 256 samples, leaving 872 samples from each view for training.

\subsection{Experiment-Specific Details}

\textbf{Cross-view classification}: For the downstream cross-view classification tasks, we evaluate the learned embeddings using a \(k\)-Nearest Neighbors (kNN) classifier configured with \(k=5\) neighbors. The results in Tab. \ref{tab:classification} show the classifier fitted on the following view of each dataset: PC-KL: PC, PC-PA: PC, KL-PA: KL, SNARE: ATAC, COCO: text. Correspondingly, Tab. \ref{tab:app:cross_class_knn21} shows the results of the classifier trained on the alternate view of each dataset: PC-KL: KL, PC-PA: PA, KL-PA: PA, SNARE: RNA, COCO: image.

\textbf{Runtime vs. Performance}: The results presented in Fig. \ref{fig:runtime} and Fig. \ref{fig:app:runtime} were evaluated across the following set of clustering repetitions: \(\{1, 5, 10, 20, 30, 50, 100, 200, 500\}\).

\textbf{Amount of Unpaired data}: We evaluate the robustness of \ourmethod\ by varying the amount of unpaired training data from 50\% to 100\% of the full available training set. 

\textbf{Anchor Choice:} The Paired K-Means results in Tab. \ref{tab:app:anchor_selection} utilize a K-Means model trained on a single view. For each centroid, we identify the closest sample from the training set and retrieve its corresponding pair in the alternate view. Note that these paired samples are not present in the standard unpaired training split used throughout the rest of the paper. The K-Means model was trained on the following views for each dataset: PC for PC-KL and PC-PA, KL for KL-PA, ATAC for SNARE, and text for both Flickr and COCO.

\subsection{Baseline Implementations}

Since direct baselines for unpaired CCA do not currently exist, we adapted the most closely related unpaired alignment methods for our benchmark. To ensure optimal baseline performance, we performed a hyper-parameter search for each method. Given the inherent difficulty of hyperparameter selection in a fully unpaired setting, we tuned the baseline hyperparameters on the test set of the PC-PA views from the Handwritten benchmark. These parameters were subsequently frozen and applied across all other datasets. Detailed descriptions of each baseline are provided below.

\paragraph{SCA \cite{timilsina2024identifiable}.} We utilized the official implementation provided by the authors from \url{https://github.com/XiaoFuLab/Shared-Component-Analysis}. The hyperparameter grid consisted of the following values: \texttt{orthogonal\_w} \(\in \{0.1, 1.0, 10.0\}\), \texttt{encoder\_lr} \(\in \{0.001, 0.01\}\), and \texttt{discr\_lr} \(\in \{0.0001, 0.001\}\). The optimal hyperparameters yielded by this search were: \texttt{orthogonal\_w} \(= 1.0\), \texttt{encoder\_lr} \(= 0.01\), and \texttt{discr\_lr} \(= 0.0001\).

\paragraph{UCA \cite{hoshen2018unsupervised}.} As no official implementation is provided by the authors, we self implemented the method based on the paper. We report the performance of UCA's linear formulation in the main text to maintain an architecturally fair comparison to \ourmethod, providing the non-linear neural network version in App. \ref{app:uca_nonlinear}. To maintain computational efficiency, training sets exceeding 5,000 samples were randomly subsampled to 5,000 instances.

For the linear version of UCA, the hyperparameter grid consisted of the following values: \texttt{lr} \(\in \{0.0001, 0.001\}\), \texttt{lambda\_rec} \(\in \{1.0, 10.0\}\), and \texttt{lambda\_cyc} \(\in \{1.0, 10.0\}\). The optimal hyperparameters yielded by this search were: \texttt{lr} \(= 0.001\), \texttt{lambda\_rec} \(= 10.0\), and \texttt{lambda\_cyc} \(= 10.0\).

For the nonlinear version, we employed a simple MLP architecture with three hidden layers of sizes \([512, 256, 10]\) for the encoder, and reversed dimensions of \([256, 512, 2]\) for the decoder. The hyperparameter grid consisted of the following values: \texttt{lr} \(\in \{0.0001, 0.001\}\), \texttt{lambda\_rec} \(\in \{1.0, 10.0\}\), and \texttt{lambda\_cyc} \(\in \{1.0, 10.0\}\). The optimal hyperparameters yielded by this search were: \texttt{lr} \(= 0.001\), \texttt{lambda\_rec} \(= 1.0\), and \texttt{lambda\_cyc} \(= 10.0\).

\paragraph{J-MDS \cite{chenunsupervised}.} We used the official implementation in \url{https://github.com/BorgwardtLab/JointMDS}. To maintain computational efficiency, training sets exceeding 2,000 samples were randomly subsampled to 2,000 instances. The hyperparameter grid consisted of the following values: \texttt{alpha} \(\in \{0.5, 1.0, 2.0\}\), and \texttt{eps} \(\in \{0.005, 0.01, 0.05\}\). The optimal hyperparameters yielded by this search were: \texttt{alpha} \(= 2.0\), and \texttt{eps} \(= 0.05\).

\paragraph{SCOTv1 \cite{demetci2022scot} and SCOTv2 \cite{demetci2022scotv2}.} We used the official implementation in \url{https://github.com/rsinghlab/SCOT}. The hyperparameter grid consisted of the following values: \texttt{k} \(\in \{20, 50, 100\}\), and \texttt{eps} \(\in \{0.001, 0.005, 0.01, 0.05\}\). The optimal hyperparameters yielded by this search for SCOTv1 were: \texttt{k} \(= 50\), and \texttt{eps} \(= 0.01\); and for SCOTv2 were: \texttt{k} \(= 100\), and \texttt{eps} \(= 0.005\).

For J-MDS, Scotv1 and Scotv2, because these methods compute point-cloud alignments without establishing generalizable functional mappings, we adapt them to the CCA setting (denoted as the -CCA variants). Specifically, we extract the cross-view pseudo-pairs generated by their respective optimal transport or multidimensional scaling algorithms at the end of training, and subsequently fit a standard CCA model to these pseudo-pairs (matching the final step of \ourmethod).

\subsection{Hardware and Software Infrastructure} \label{app:hardware}
All experiments were implemented in Python. Core numerical and machine learning operations were executed using NumPy and scikit-learn \cite{scikit-learn} (for K-Means, PCA, and standard CCA computations). The QAP approximations were solved using the scipy.optimize.quadratic\_assignment module. The UCA baselines were trained using PyTorch.

All timing analyses and performance benchmarks were executed on a localized server equipped with one NVIDIA GeForce RTX 2080 Ti GPU and eight Intel(R) Xeon(R) Gold 6138 CPU @ 2.00GHz CPU cores. Due to the efficient dimensionality reduction and the use of the QAP solver strictly on a small subset of anchors, \ourmethod\ maintains a minimal memory and compute footprint, comfortably running on standard consumer-grade hardware.


\newpage
\section*{NeurIPS Paper Checklist}

\begin{enumerate}

\item {\bf Claims}
    \item[] Question: Do the main claims made in the abstract and introduction accurately reflect the paper's contributions and scope?
    \item[] Answer: \answerYes{} 
    \item[] Justification: All theoretical and empirical claims made in the introduction are explicitly addressed and substantiated in Sec. \ref{sec:theory} and \ref{sec:results}.
    \item[] Guidelines:
    \begin{itemize}
        \item The answer \answerNA{} means that the abstract and introduction do not include the claims made in the paper.
        \item The abstract and/or introduction should clearly state the claims made, including the contributions made in the paper and important assumptions and limitations. A \answerNo{} or \answerNA{} answer to this question will not be perceived well by the reviewers. 
        \item The claims made should match theoretical and experimental results, and reflect how much the results can be expected to generalize to other settings. 
        \item It is fine to include aspirational goals as motivation as long as it is clear that these goals are not attained by the paper.
    \end{itemize}

\item {\bf Limitations}
    \item[] Question: Does the paper discuss the limitations of the work performed by the authors?
    \item[] Answer: \answerYes{} 
    \item[] Justification: Limitations are discussed in Sec. \ref{sec:conclusions}.
    \item[] Guidelines:
    \begin{itemize}
        \item The answer \answerNA{} means that the paper has no limitation while the answer \answerNo{} means that the paper has limitations, but those are not discussed in the paper. 
        \item The authors are encouraged to create a separate ``Limitations'' section in their paper.
        \item The paper should point out any strong assumptions and how robust the results are to violations of these assumptions (e.g., independence assumptions, noiseless settings, model well-specification, asymptotic approximations only holding locally). The authors should reflect on how these assumptions might be violated in practice and what the implications would be.
        \item The authors should reflect on the scope of the claims made, e.g., if the approach was only tested on a few datasets or with a few runs. In general, empirical results often depend on implicit assumptions, which should be articulated.
        \item The authors should reflect on the factors that influence the performance of the approach. For example, a facial recognition algorithm may perform poorly when image resolution is low or images are taken in low lighting. Or a speech-to-text system might not be used reliably to provide closed captions for online lectures because it fails to handle technical jargon.
        \item The authors should discuss the computational efficiency of the proposed algorithms and how they scale with dataset size.
        \item If applicable, the authors should discuss possible limitations of their approach to address problems of privacy and fairness.
        \item While the authors might fear that complete honesty about limitations might be used by reviewers as grounds for rejection, a worse outcome might be that reviewers discover limitations that aren't acknowledged in the paper. The authors should use their best judgment and recognize that individual actions in favor of transparency play an important role in developing norms that preserve the integrity of the community. Reviewers will be specifically instructed to not penalize honesty concerning limitations.
    \end{itemize}

\item {\bf Theory assumptions and proofs}
    \item[] Question: For each theoretical result, does the paper provide the full set of assumptions and a complete (and correct) proof?
    \item[] Answer: \answerYes{} 
    \item[] Justification: Assumptions and main theorems are stated in Sec. \ref{sec:theory}. Full derivations and proofs are provided in App. \ref{app:proofs}.
    \item[] Guidelines:
    \begin{itemize}
        \item The answer \answerNA{} means that the paper does not include theoretical results. 
        \item All the theorems, formulas, and proofs in the paper should be numbered and cross-referenced.
        \item All assumptions should be clearly stated or referenced in the statement of any theorems.
        \item The proofs can either appear in the main paper or the supplemental material, but if they appear in the supplemental material, the authors are encouraged to provide a short proof sketch to provide intuition. 
        \item Inversely, any informal proof provided in the core of the paper should be complemented by formal proofs provided in appendix or supplemental material.
        \item Theorems and Lemmas that the proof relies upon should be properly referenced. 
    \end{itemize}

    \item {\bf Experimental result reproducibility}
    \item[] Question: Does the paper fully disclose all the information needed to reproduce the main experimental results of the paper to the extent that it affects the main claims and/or conclusions of the paper (regardless of whether the code and data are provided or not)?
    \item[] Answer: \answerYes{} 
    \item[] Justification: Comprehensive implementation details, data splitting protocols, and hyperparameters are provided in App. \ref{app:implementation_details}.
    \item[] Guidelines:
    \begin{itemize}
        \item The answer \answerNA{} means that the paper does not include experiments.
        \item If the paper includes experiments, a \answerNo{} answer to this question will not be perceived well by the reviewers: Making the paper reproducible is important, regardless of whether the code and data are provided or not.
        \item If the contribution is a dataset and\slash or model, the authors should describe the steps taken to make their results reproducible or verifiable. 
        \item Depending on the contribution, reproducibility can be accomplished in various ways. For example, if the contribution is a novel architecture, describing the architecture fully might suffice, or if the contribution is a specific model and empirical evaluation, it may be necessary to either make it possible for others to replicate the model with the same dataset, or provide access to the model. In general. releasing code and data is often one good way to accomplish this, but reproducibility can also be provided via detailed instructions for how to replicate the results, access to a hosted model (e.g., in the case of a large language model), releasing of a model checkpoint, or other means that are appropriate to the research performed.
        \item While NeurIPS does not require releasing code, the conference does require all submissions to provide some reasonable avenue for reproducibility, which may depend on the nature of the contribution. For example
        \begin{enumerate}
            \item If the contribution is primarily a new algorithm, the paper should make it clear how to reproduce that algorithm.
            \item If the contribution is primarily a new model architecture, the paper should describe the architecture clearly and fully.
            \item If the contribution is a new model (e.g., a large language model), then there should either be a way to access this model for reproducing the results or a way to reproduce the model (e.g., with an open-source dataset or instructions for how to construct the dataset).
            \item We recognize that reproducibility may be tricky in some cases, in which case authors are welcome to describe the particular way they provide for reproducibility. In the case of closed-source models, it may be that access to the model is limited in some way (e.g., to registered users), but it should be possible for other researchers to have some path to reproducing or verifying the results.
        \end{enumerate}
    \end{itemize}

\item {\bf Open access to data and code}
    \item[] Question: Does the paper provide open access to the data and code, with sufficient instructions to faithfully reproduce the main experimental results, as described in supplemental material?
    \item[] Answer: \answerYes{} 
    \item[] Justification: All datasets are publicly available (App. F.7), and the code is released at \url{github.com/shaham-lab/UCCA}.
    \item[] Guidelines:
    \begin{itemize}
        \item The answer \answerNA{} means that paper does not include experiments requiring code.
        \item Please see the NeurIPS code and data submission guidelines (\url{https://neurips.cc/public/guides/CodeSubmissionPolicy}) for more details.
        \item While we encourage the release of code and data, we understand that this might not be possible, so \answerNo{} is an acceptable answer. Papers cannot be rejected simply for not including code, unless this is central to the contribution (e.g., for a new open-source benchmark).
        \item The instructions should contain the exact command and environment needed to run to reproduce the results. See the NeurIPS code and data submission guidelines (\url{https://neurips.cc/public/guides/CodeSubmissionPolicy}) for more details.
        \item The authors should provide instructions on data access and preparation, including how to access the raw data, preprocessed data, intermediate data, and generated data, etc.
        \item The authors should provide scripts to reproduce all experimental results for the new proposed method and baselines. If only a subset of experiments are reproducible, they should state which ones are omitted from the script and why.
        \item At submission time, to preserve anonymity, the authors should release anonymized versions (if applicable).
        \item Providing as much information as possible in supplemental material (appended to the paper) is recommended, but including URLs to data and code is permitted.
    \end{itemize}

\item {\bf Experimental setting/details}
    \item[] Question: Does the paper specify all the training and test details (e.g., data splits, hyperparameters, how they were chosen, type of optimizer) necessary to understand the results?
    \item[] Answer: \answerYes{} 
    \item[] Justification: All relevant experimental settings, including strict unpaired data splits and baseline hyperparameter grids, are detailed in Sec. \ref{sec:results} and App. \ref{app:implementation_details}.
    \item[] Guidelines:
    \begin{itemize}
        \item The answer \answerNA{} means that the paper does not include experiments.
        \item The experimental setting should be presented in the core of the paper to a level of detail that is necessary to appreciate the results and make sense of them.
        \item The full details can be provided either with the code, in appendix, or as supplemental material.
    \end{itemize}

\item {\bf Experiment statistical significance}
    \item[] Question: Does the paper report error bars suitably and correctly defined or other appropriate information about the statistical significance of the experiments?
    \item[] Answer: \answerYes{} 
    \item[] Justification: All tables report means and standard deviations, and all plots visualize variance using error bars or shaded bands.
    \item[] Guidelines:
    \begin{itemize}
        \item The answer \answerNA{} means that the paper does not include experiments.
        \item The authors should answer \answerYes{} if the results are accompanied by error bars, confidence intervals, or statistical significance tests, at least for the experiments that support the main claims of the paper.
        \item The factors of variability that the error bars are capturing should be clearly stated (for example, train/test split, initialization, random drawing of some parameter, or overall run with given experimental conditions).
        \item The method for calculating the error bars should be explained (closed form formula, call to a library function, bootstrap, etc.)
        \item The assumptions made should be given (e.g., Normally distributed errors).
        \item It should be clear whether the error bar is the standard deviation or the standard error of the mean.
        \item It is OK to report 1-sigma error bars, but one should state it. The authors should preferably report a 2-sigma error bar than state that they have a 96\% CI, if the hypothesis of Normality of errors is not verified.
        \item For asymmetric distributions, the authors should be careful not to show in tables or figures symmetric error bars that would yield results that are out of range (e.g., negative error rates).
        \item If error bars are reported in tables or plots, the authors should explain in the text how they were calculated and reference the corresponding figures or tables in the text.
    \end{itemize}

\item {\bf Experiments compute resources}
    \item[] Question: For each experiment, does the paper provide sufficient information on the computer resources (type of compute workers, memory, time of execution) needed to reproduce the experiments?
    \item[] Answer: \answerYes{} 
    \item[] Justification: Hardware specifications, runtime analyses, and computational complexity are detailed in App. \ref{app:hardware}, App. \ref{app:runtime}, and App. \ref{app:complexity}.
    \item[] Guidelines:
    \begin{itemize}
        \item The answer \answerNA{} means that the paper does not include experiments.
        \item The paper should indicate the type of compute workers CPU or GPU, internal cluster, or cloud provider, including relevant memory and storage.
        \item The paper should provide the amount of compute required for each of the individual experimental runs as well as estimate the total compute. 
        \item The paper should disclose whether the full research project required more compute than the experiments reported in the paper (e.g., preliminary or failed experiments that didn't make it into the paper). 
    \end{itemize}
    
\item {\bf Code of ethics}
    \item[] Question: Does the research conducted in the paper conform, in every respect, with the NeurIPS Code of Ethics \url{https://neurips.cc/public/EthicsGuidelines}?
    \item[] Answer: \answerYes{} 
    \item[] Justification: The research involves fundamental mathematical optimization on public datasets, adhering fully to the NeurIPS Code of Ethics.
    \item[] Guidelines:
    \begin{itemize}
        \item The answer \answerNA{} means that the authors have not reviewed the NeurIPS Code of Ethics.
        \item If the authors answer \answerNo, they should explain the special circumstances that require a deviation from the Code of Ethics.
        \item The authors should make sure to preserve anonymity (e.g., if there is a special consideration due to laws or regulations in their jurisdiction).
    \end{itemize}

\item {\bf Broader impacts}
    \item[] Question: Does the paper discuss both potential positive societal impacts and negative societal impacts of the work performed?
    \item[] Answer: \answerNA{} 
    \item[] Justification: This is foundational mathematical research on canonical correlation analysis without foreseeable direct negative societal impacts.
    \item[] Guidelines:
    \begin{itemize}
        \item The answer \answerNA{} means that there is no societal impact of the work performed.
        \item If the authors answer \answerNA{} or \answerNo, they should explain why their work has no societal impact or why the paper does not address societal impact.
        \item Examples of negative societal impacts include potential malicious or unintended uses (e.g., disinformation, generating fake profiles, surveillance), fairness considerations (e.g., deployment of technologies that could make decisions that unfairly impact specific groups), privacy considerations, and security considerations.
        \item The conference expects that many papers will be foundational research and not tied to particular applications, let alone deployments. However, if there is a direct path to any negative applications, the authors should point it out. For example, it is legitimate to point out that an improvement in the quality of generative models could be used to generate Deepfakes for disinformation. On the other hand, it is not needed to point out that a generic algorithm for optimizing neural networks could enable people to train models that generate Deepfakes faster.
        \item The authors should consider possible harms that could arise when the technology is being used as intended and functioning correctly, harms that could arise when the technology is being used as intended but gives incorrect results, and harms following from (intentional or unintentional) misuse of the technology.
        \item If there are negative societal impacts, the authors could also discuss possible mitigation strategies (e.g., gated release of models, providing defenses in addition to attacks, mechanisms for monitoring misuse, mechanisms to monitor how a system learns from feedback over time, improving the efficiency and accessibility of ML).
    \end{itemize}
    
\item {\bf Safeguards}
    \item[] Question: Does the paper describe safeguards that have been put in place for responsible release of data or models that have a high risk for misuse (e.g., pre-trained language models, image generators, or scraped datasets)?
    \item[] Answer: \answerNA{} 
    \item[] Justification: The paper does not introduce or release high-risk models or datasets.
    \item[] Guidelines:
    \begin{itemize}
        \item The answer \answerNA{} means that the paper poses no such risks.
        \item Released models that have a high risk for misuse or dual-use should be released with necessary safeguards to allow for controlled use of the model, for example by requiring that users adhere to usage guidelines or restrictions to access the model or implementing safety filters. 
        \item Datasets that have been scraped from the Internet could pose safety risks. The authors should describe how they avoided releasing unsafe images.
        \item We recognize that providing effective safeguards is challenging, and many papers do not require this, but we encourage authors to take this into account and make a best faith effort.
    \end{itemize}

\item {\bf Licenses for existing assets}
    \item[] Question: Are the creators or original owners of assets (e.g., code, data, models), used in the paper, properly credited and are the license and terms of use explicitly mentioned and properly respected?
    \item[] Answer: \answerYes{} 
    \item[] Justification: All utilized public datasets and official baseline implementations are properly cited in Sec. \ref{sec:results} and App. \ref{app:implementation_details}.
    \item[] Guidelines:
    \begin{itemize}
        \item The answer \answerNA{} means that the paper does not use existing assets.
        \item The authors should cite the original paper that produced the code package or dataset.
        \item The authors should state which version of the asset is used and, if possible, include a URL.
        \item The name of the license (e.g., CC-BY 4.0) should be included for each asset.
        \item For scraped data from a particular source (e.g., website), the copyright and terms of service of that source should be provided.
        \item If assets are released, the license, copyright information, and terms of use in the package should be provided. For popular datasets, \url{paperswithcode.com/datasets} has curated licenses for some datasets. Their licensing guide can help determine the license of a dataset.
        \item For existing datasets that are re-packaged, both the original license and the license of the derived asset (if it has changed) should be provided.
        \item If this information is not available online, the authors are encouraged to reach out to the asset's creators.
    \end{itemize}

\item {\bf New assets}
    \item[] Question: Are new assets introduced in the paper well documented and is the documentation provided alongside the assets?
    \item[] Answer: \answerNA{} 
    \item[] Justification: The paper uses existing benchmarks and does not release new datasets or models.
    \item[] Guidelines:
    \begin{itemize}
        \item The answer \answerNA{} means that the paper does not release new assets.
        \item Researchers should communicate the details of the dataset\slash code\slash model as part of their submissions via structured templates. This includes details about training, license, limitations, etc. 
        \item The paper should discuss whether and how consent was obtained from people whose asset is used.
        \item At submission time, remember to anonymize your assets (if applicable). You can either create an anonymized URL or include an anonymized zip file.
    \end{itemize}

\item {\bf Crowdsourcing and research with human subjects}
    \item[] Question: For crowdsourcing experiments and research with human subjects, does the paper include the full text of instructions given to participants and screenshots, if applicable, as well as details about compensation (if any)? 
    \item[] Answer: \answerNA{} 
    \item[] Justification: The research does not involve crowdsourcing or human subjects.
    \item[] Guidelines:
    \begin{itemize}
        \item The answer \answerNA{} means that the paper does not involve crowdsourcing nor research with human subjects.
        \item Including this information in the supplemental material is fine, but if the main contribution of the paper involves human subjects, then as much detail as possible should be included in the main paper. 
        \item According to the NeurIPS Code of Ethics, workers involved in data collection, curation, or other labor should be paid at least the minimum wage in the country of the data collector. 
    \end{itemize}

\item {\bf Institutional review board (IRB) approvals or equivalent for research with human subjects}
    \item[] Question: Does the paper describe potential risks incurred by study participants, whether such risks were disclosed to the subjects, and whether Institutional Review Board (IRB) approvals (or an equivalent approval/review based on the requirements of your country or institution) were obtained?
    \item[] Answer: \answerNA{} 
    \item[] Justification: The research does not involve human subjects.
    \item[] Guidelines:
    \begin{itemize}
        \item The answer \answerNA{} means that the paper does not involve crowdsourcing nor research with human subjects.
        \item Depending on the country in which research is conducted, IRB approval (or equivalent) may be required for any human subjects research. If you obtained IRB approval, you should clearly state this in the paper. 
        \item We recognize that the procedures for this may vary significantly between institutions and locations, and we expect authors to adhere to the NeurIPS Code of Ethics and the guidelines for their institution. 
        \item For initial submissions, do not include any information that would break anonymity (if applicable), such as the institution conducting the review.
    \end{itemize}

\item {\bf Declaration of LLM usage}
    \item[] Question: Does the paper describe the usage of LLMs if it is an important, original, or non-standard component of the core methods in this research? Note that if the LLM is used only for writing, editing, or formatting purposes and does \emph{not} impact the core methodology, scientific rigor, or originality of the research, declaration is not required.
    \item[] Answer: \answerNA{} 
    \item[] Justification: The core methodology strictly relies on mathematical optimization and QAP solvers, not Large Language Models.
    \item[] Guidelines:
    \begin{itemize}
        \item The answer \answerNA{} means that the core method development in this research does not involve LLMs as any important, original, or non-standard components.
        \item Please refer to our LLM policy in the NeurIPS handbook for what should or should not be described.
    \end{itemize}

\end{enumerate}

\end{document}